\documentclass{article}

\PassOptionsToPackage{numbers, compress}{natbib}


\usepackage[preprint]{neurips_2021}



\def\fighome{figures}
\def\bibhome{bib}
\def\texhome{tex}

\usepackage{times}
\usepackage[colorlinks=true,linkcolor=blue,citecolor=green,urlcolor=black]{hyperref}
\usepackage[T1]{fontenc}    
\usepackage{url}            
\usepackage{booktabs}       
\usepackage{nicefrac}       
\usepackage{microtype}      
\usepackage{xcolor}         

\usepackage{floatrow}
\floatsetup[table]{capposition=top}

\usepackage{caption}
\usepackage{subcaption}

\usepackage{graphicx,float,pgfplots,wrapfig,sidecap,lipsum}
\usepackage{tabularx}
\usepackage{tabulary}
\usepackage{paralist}

\usepackage{array}
\usepackage{makecell}
\usepackage{nomencl}

\usepackage{enumerate}
\usepackage{enumitem}

\usepackage{color}
\usepackage{xcolor}

\usepackage{tablefootnote}
\usepackage{multirow}
\usepackage{diagbox}

\usepackage{xspace}
\usepackage{pifont}
\usepackage{dsfont}
\usepackage{bm}
\usepackage{bbm}
\usepackage{amsfonts}
\usepackage{amsthm}
\usepackage{amssymb}
\usepackage{amsmath}
\usepackage{thmtools,thm-restate}
\usepackage{mathrsfs}
\usepackage{mathtools}
\usepackage{cleveref}

\usepackage{stmaryrd}
\usepackage[utf8]{inputenc}
\usepackage[english]{babel}

\usepackage[utf8]{inputenc}
\usepackage{pgfplots}
\pgfplotsset{compat=newest}
\usepgfplotslibrary{groupplots}
\usepgfplotslibrary{dateplot}

\usepackage{tikz}
\usetikzlibrary{fit}
\usetikzlibrary{calc,shapes}
\usetikzlibrary{decorations.pathmorphing} 
\usetikzlibrary{fit}					
\usetikzlibrary{backgrounds}	

\newtheorem{theorem}{Theorem}[section]

\newtheorem{definition}[theorem]{Definition}
\newtheorem{proposition}[theorem]{Proposition}

\numberwithin{equation}{section}


\newcommand{\mymatrix}[1]{\bm{#1}}
\newcommand{\myvector}[1]{\bm{#1}}

\newcommand{\matrixSup}[2]{\bm{#1}^{(#2)}}

\newcommand{\signal}[1]{\bm{#1}}
\newcommand{\signalSup}[2]{\bm{#1}^{#2}}

\newcommand{\signalX}[2]{\bm{#1}[#2]}
\newcommand{\signalSupX}[3]{\bm{#1}^{#2}[#3]}
\newcommand{\signalSubX}[3]{{#1}_{#2}[#3]}

\newcommand{\Signal}[1]{\bm{#1}}

\newcommand{\SignalX}[2]{\bm{#1}(#2)}

\newcommand{\SignalSupX}[3]{\bm{#1}^{#2}(#3)}

\newcommand{\filter}[1]{\bm{#1}}
\newcommand{\filterSup}[2]{\bm{#1}^{#2}}
\newcommand{\filterSub}[2]{{#1}_{#2}}

\newcommand{\filterX}[2]{\bm{#1}[#2]}
\newcommand{\filterSupX}[3]{\bm{#1}^{#2}[#3]}
\newcommand{\filterSubX}[3]{{#1}_{#2}[#3]}

\newcommand{\FilterX}[2]{\bm{#1}(#2)}
\newcommand{\FilterSupX}[3]{\bm{#1}^{#2}(#3)}


\newcommand{\range}[1]{[#1]}

\def\identity{\bm{I}}
\def\zero{\bm{0}}

\newcommand{\tr}[1]{\mathsf{tr}(#1)}
\newcommand{\Tr}[1]{\mathsf{tr}\left(#1\right)}

\newcommand{\cir}[1]{\mathsf{Cir}(#1)}
\newcommand{\Cir}[1]{\mathsf{Cir}\left(#1\right)}

\newcommand{\norm}[1]{\|#1\|}
\newcommand{\Norm}[1]{\left\|#1\right\|}

\newcommand{\Product}[2]{\left<#1, #2\right>}

\newcommand{\blkdiag}[1]{\mathsf{blkdiag}(#1)}
\newcommand{\Blkdiag}[1]{\mathsf{blkdiag}\left(#1\right)}


\newcommand{\polyphase}[2]{#1|#2}

\newcommand{\adjoint}[1]{{#1}^{\dagger}}



\def\R{\mathbb{R}} 
\def\C{\mathbb{C}} 
\def\Z{\mathbb{Z}} 


\newcommand{\im}{\mathsf{j}} 


\newcommand{\ZTLong}{Z-transform\xspace}
\newcommand{\ZTlong}{z-transform\xspace}

\newcommand{\DTFT}{DTFT\xspace}

\newcommand{\DTFTLong}{Discrete-time Fourier transform\xspace}

\newcommand{\DFT}{DFT\xspace}

\newcommand{\DFTlong}{discrete Fourier transform\xspace}

\newcommand{\RKO}{RKO\xspace}

\newcommand{\OSSN}{OSSN\xspace}

\newcommand{\SVCM}{SVCM\xspace}
\newcommand{\SVCMLONG}{Singular Value Clipping and Masking\xspace}
\newcommand{\SVCMLong}{Singular value clipping and masking\xspace}

\newcommand{\BCOP}{BCOP\xspace}
\newcommand{\BCOPLONG}{Block Convolution Orthogonal Parameterization\xspace}
\newcommand{\BCOPLong}{Block convolution orthogonal parameterization\xspace}

\newcommand{\CayleyConv}{CayleyConv\xspace}
\newcommand{\CayleyConvLONG}{Cayley Convolution\xspace}
\newcommand{\CayleyConvLong}{Calyey convolution\xspace}

\newcommand{\SCFac}{SC-Fac\xspace}
\newcommand{\SCFacLONG}{Separable Complete Factorization\xspace}

\newcommand{\SCFaclong}{separable complete factorization\xspace}

\newcommand{\ResNet}{ResNet\xspace}
\newcommand{\WRN}{WideResNet\xspace}
\newcommand{\ShuffleNet}{ShuffleNet\xspace}

\newcommand{\CNN}{CNN\xspace}
\newcommand{\CNNLONG}{Convolutional Neural Network\xspace}

\newcommand{\RNN}{RNN\xspace}


\newcommand{\tcr}[1]{\textcolor[rgb]{1,0,0}{#1}}
\newcommand{\tcy}[1]{\textcolor[rgb]{0.14, 0.53, 0.25}{#1}} 
\newcommand{\tcb}[1]{\textcolor[rgb]{0,0,1}{#1}}

\def\mytitle{Scaling-up Diverse Orthogonal Convolutional Networks with a Paraunitary Framework}
\title{\mytitle}

%

\author{%
	\begin{tabular}{c} Jiahao Su$^{1}$ \\ {\tt \href{mailto:jiahaosu@umd.edu}{jiahaosu@umd.edu}} \end{tabular} \begin{tabular}{c} Wonmin Byeon$^{2}$ \\ {\tt \href{mailto:wbyeon@nvidia.com}{wbyeon@nvidia.com}} \end{tabular} \begin{tabular}{c} Furong Huang$^{1}$ \\ {\tt \href{mailto:furongh@cs.umd.edu}{furongh@cs.umd.edu}} \end{tabular} \\
  	$^{1}$University of Maryland, College Park
	\hskip0.5em $^{2}$NVIDIA Research, NVIDIA Corporation \\
}

\begin{document}

\maketitle

\begin{abstract}
Enforcing orthogonality in neural networks is an antidote for gradient vanishing/exploding problems, sensitivity by adversarial perturbation, and bounding generalization errors.
However, many previous approaches are heuristic, and the orthogonality of convolutional layers is not systematically studied: some of these designs are not exactly orthogonal, while others only consider standard convolutional layers and propose specific classes of their realizations.
To address this problem, we propose a theoretical framework for orthogonal convolutional layers, which establishes the equivalence between various orthogonal convolutional layers in the spatial domain and the paraunitary systems in the spectral domain.
Since there exists a complete spectral factorization of paraunitary systems, any orthogonal convolution layer can be parameterized as convolutions of spatial filters.
Our framework endows high expressive power to various convolutional layers while maintaining their exact orthogonality.
Furthermore, our layers are memory and computationally efficient for deep networks compared to previous designs.
Our versatile framework, for the first time, enables the study of architecture designs for deep orthogonal networks, such as choices of skip connection, initialization, stride, and dilation. Consequently, we scale up orthogonal networks to deep architectures, including ResNet, WideResNet, and ShuffleNet, substantially increasing the performance over the traditional shallow orthogonal networks.
\end{abstract}

\section{Introduction}
\label{sec:introduction}

{\CNNLONG}s ({\CNN}s), whose deployment has witnessed extensive empirical success, still exhibit a range of limitations that are not thoroughly studied.
For example, firstly, deep convolutional networks are in general difficult to learn, and their high performance heavily relies on not yet fully-understood techniques, such as skip-connections~\citep{he2016deep}, batch normalization~\citep{ioffe2015batch}, delicate initialization~\citep{glorot2010understanding}.
Secondly, they are notoriously sensitive to imperceptible perturbations,
including adversarial attacks~\citep{goodfellow2014explaining} or natural geometric transformations~\citep{azulay2019deep}.
Finally, a precise characterization of their generalization property is still under active investigation~\citep{neyshabur2017pac,jia2019orthogonal}.

Orthogonal neural networks alleviate all the problems mentioned above. As shown in recent works,
by enforcing orthogonality in neural networks, i.e., each layer's output norm $\norm{\signal{y}}$ is always equal to the input norm $\norm{\signal{x}}$ for any input $\signal{x}$, \emph{we obtain (1) easier optimization}~\citep{zhang2018stabilizing,qi2020deep} since each orthogonal layer, by definition, preserves the gradient norm during backpropagation, and the whole network is free from the vanishing and exploding gradient problems;
\emph{ (2) robustness against adversarial perturbation}~\citep{anil2019sorting,li2019preventing,trockman2021orthogonalizing} since each orthogonal layer has a Lipschitz constant strictly less than one --- the network can not amplify any perturbation to the input to flip the output prediction;
\emph{ and (3) better generalizability} as proved in \citep{jia2019orthogonal} that a deep network's generalization error is positively related to the standard deviation of each linear layer's singular values, and thus, an orthogonal network has the smallest generalization error.
Note that an orthogonal layer is guaranteed to have ``flat'' singular values, i.e., the singular value for each spectral component is equal to 1.

Our goal is to enforce exact orthogonality in state-of-the-art deep convolutional networks without expensive computations. 
We identify three main challenges in doing so.
\textbf{Challenge I: Achieving exact orthogonal throughout the entire training process.}
Exact orthogonality is crucial for applications that require strict characterization of Lipschitz constants, such as robustness against adversarial perturbations. 
Prior works such as {\em soft regularization}~\citep{jia2017improving,wang2019orthogonal} and {\em reshaped kernel orthogonality}~\citep{jia2017improving,cisse2017parseval}, while enjoying algorithmic simplicity, are not sufficient to address the requirement of strict orthogonality.
\textbf{Challenge II: Avoiding expensive computations.}
An efficient training algorithm is crucial for scalability to large networks or datasets.
Existing works based on projected gradient descent~\citep{sedghi2018singular,li2019preventing}, however, require an expensive projection step after each update, which may also introduce additional challenges such as difficulty in convergence.
For example, the projection step proposed in \citet{sedghi2018singular} computes the SVD and clip the spectrum to obtain ``flattened'' spectrum and thus orthogonality, but SVD of convolution layers are expensive and cost $O(\mathrm{size}(\mathrm{feature}) \cdot \mathrm{channel s}^{3})$.
\textbf{Challenge III: Scaling-up to state-of-the-art deep convolutional networks.} 
There are many variants to the traditional convolutional layer, including dilated, strided, group convolutions, which are essential for state-of-the-art deep convolutional networks.
However, none of the existing methods have proposed mechanisms to orthogonalize these variants.
The lack of techniques, as a result, limits the broad applications in orthogonal convolutional layers to state-of-the-art deeper convolutional networks.

In this work, we resolve these \textbf{challenges I, II and III} by proposing a {\em complete parameterization} for orthogonal convolutional layers --- any realization of the parameters would lead to an orthogonal convolution, and any orthogonal convolution can be realized by some parameters.
Specifically, using the convolution theorem~\citep{oppenheim2010discrete} (which states that convolution in the spatial domain is equivalent to multiplication in the frequency domain),
we reduce the problem of designing orthogonal convolutions to constructing unitary matrices for all frequency components, i.e., a paraunitary system~\citep{vaidyanathan1993multirate}.
To construct the paraunitary system, we parameterize it in a complete factorization form.
Therefore, we obtain a parameterization for the class of {\em all} orthogonal convolutions, guaranteeing the expressive power.
Our parameterization is easier for optimization compared to projected gradient descent and other parameterization based on matrix inversion.
Our versatile framework, for the first time, enables the study of architecture designs for deep orthogonal networks, such as choices of skip connection, initialization, stride, and dilation.
Consequently, we scale up orthogonal networks to deep architectures, including ResNet, WideResNet, and ShuffleNet, substantially increasing the performance over the traditional shallow orthogonal networks.

\vspace{0.2em}
\textbf{Summary of Contributions:}
\begin{enumerate}[leftmargin=*, itemsep=0pt, topsep=0pt]
    \item We establish the equivalence between orthogonal convolutions in the spatial domain and paraunitary systems in the spectral domain, simplifying designing orthogonal convolutions. Consequently, we can interpret the existing approaches as implicit designs of paraunitary systems.
    \item Based on a complete factorization of paraunitary systems, we propose the first exact and complete design for orthogonal convolutions, ensuring exact orthogonality and high expressive power.
    \item We prove that orthogonality for various convolutional layers (strided, dilated, group) are also completely characterized by paraunitary systems. Consequently, our design easily extends to these variants, ensuring both completeness and exactness of the orthogonal convolutions.
    \item We systematically study the design considerations for orthogonal networks (choices of skip connection, initialization, depth, width, and kernel size), and show that orthogonal networks can scale to deep architectures including \ResNet, \WRN, and \ShuffleNet.
\end{enumerate}
\section{Orthogonal Convolutions via Paraunitary Systems}
\label{sec:paraunitary}

\vspace{-0.2em}
\subsection{Achieving Orthogonal Convolutions by Paraunitary Systems}
\label{sub:paraunitary-equivalence}

Designing an orthogonal convolution layer $\{\bm{h}_{t,s}:\signal{y}_t = \signal{h}_{t,s} \ast \signal{x}_s\}_{t=1,s=1}^{T,S}$ ($t$ and $s$ denotes an ouput and input channel) in the spatial domain is challenging; it is equivalent to the problem of making a block-circulant matrix  $[\Cir{\filterSub{\bm{h}}{t,s}}]_{t=1,s=1}^{T,S}$ orthogonal, since the output $\bm{y}_t$ is equivalent to \emph{the circulant structure of $\bm{h}_{t,s}$} multiplied with the input $\bm{x}_s$, i.e.
$\bm{y}_t = \cir{\filterSub{\bm{h}}{t,s}} \bm{x}_s$ for all $s$, $t$, where
\begingroup
\small
\begin{equation}
\Cir{\filterSub{\bm{h}}{t,s}}   =
\begin{bmatrix}
\tcy{\filterSubX{h}{t,s}{1}} & \tcb{\filterSubX{h}{t,s}{N}} & \cdots & \tcr{\filterSubX{h}{t,s}{2}}
\\
\tcr{\filterSubX{h}{t,s}{2}} & \tcy{\filterSubX{h}{t,s}{1}} & \tcb{\filterSubX{h}{t,s}{N}} & \cdots 
\\
\vdots & \ddots & \ddots & \vdots
\\
\tcb{\filterSubX{h}{t,s}{N}} & \cdots & \tcr{\filterSubX{h}{t,s}{2}} & \tcy{\filterSubX{h}{t,s}{1}}
\\
\end{bmatrix} \in \mathbb{R}^{N \times N}\label{eq:conv1d-circulant}.
\end{equation}
\endgroup

To avoid enforcing orthogonality directly in the block-circulant structures, we propose a novel design of orthogonal convolutions from a spectral domain perspective, motivated by the \emph{convolution theorem} (\Cref{thm:conv-theorem}).
For simplicity, we group the entries at the same locations into a vector/matrix, e.g., we denote $\{\signalSubX{x}{s}{n}\}_{s = 1}^{S}$ as $\signalX{x}{n} \in \R^{S}$ and $\{\filterSubX{h}{t,s}{n}\}_{t=1,s=1}^{T,S}$ as $\filterX{h}{n} \in \R^{T \times S}$.

\begin{restatable}[Convolution theorem~\citep{oppenheim2010discrete}]{theorem}{ThmConv}
\label{thm:conv-theorem}
For a standard convolution layer $\filter{h}$: $\signalX{y}{i} = \sum_{n} \filterX{h}{n} \signalX{x}{i - n}$, the convolution in the spatial domain is equivalent to a matrix-vector multiplication in the spectral domain:  
$\SignalX{Y}{z} = \FilterX{H}{z} \SignalX{X}{z}, \forall z\in \mathbb{C}$,
where $\SignalX{X}{z} = \sum_{n=0}^{N-1} \signalX{x}{n} z^{-n}$, $\SignalX{Y}{z} = \sum_{n=0}^{N-1} \signalX{y}{n} z^{-n}$, $\FilterX{H}{z} = \sum_{n = -\underline{L}}^{\overline{L}} \SignalX{h}{z}$,
denote the input, output and kernel in the spectral domain, also known as $z$-transforms, where $N$ is the length of $\signal{x}, \signal{y}$ and $[-\underline{L}, \overline{L}]$ is the span of the filter. 
\end{restatable}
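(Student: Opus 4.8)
The plan is to prove the identity by direct substitution of the convolution definition into the $z$-transform of the output, followed by a reindexing that separates the resulting double sum into a product. First I would start from $\SignalX{Y}{z} = \sum_i \signalX{y}{i}\, z^{-i}$ and substitute $\signalX{y}{i} = \sum_n \filterX{h}{n}\, \signalX{x}{i-n}$, obtaining the double sum $\SignalX{Y}{z} = \sum_i \sum_n \filterX{h}{n}\, \signalX{x}{i-n}\, z^{-i}$. Because the filter is supported on $[-\underline{L}, \overline{L}]$ and the signal on $[0, N-1]$, both index ranges are finite, so the two summations commute freely and no convergence argument is required.

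The key algebraic step is the change of variable $m = i - n$, under which $z^{-i} = z^{-n} z^{-m}$ factors cleanly: the scalar $z^{-n}$ commutes with the matrix $\filterX{h}{n} \in \mathbb{C}^{T \times S}$, so the double sum splits as $\left(\sum_n \filterX{h}{n}\, z^{-n}\right)\left(\sum_m \signalX{x}{m}\, z^{-m}\right) = \FilterX{H}{z}\,\SignalX{X}{z}$, which is exactly the claimed spectral-domain matrix-vector product. To confirm the matrix-valued bookkeeping, I would spell out a single output coordinate $t$ and check that ordinary multiplication of $\FilterX{H}{z}$ against $\SignalX{X}{z}$ reproduces the per-channel relations $\signal{y}_t = \sum_s \signal{h}_{t,s} \ast \signal{x}_s$, verifying that the block structure of \Cref{eq:conv1d-circulant} is respected.

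The main obstacle is not the algebra but the index convention: the reindexing $m = i - n$ must remain a bijection on the index set, and this depends on whether the convolution is read as linear or as the periodic (circular) operation encoded by the block-circulant matrix in \Cref{eq:conv1d-circulant}. For the circular case the indices are taken modulo $N$, and the identity then holds exactly at the $N$-th roots of unity, which are the sampling points of the corresponding block-circulant diagonalization; the stated ``$\forall z \in \mathbb{C}$'' form is the polynomial (linear-convolution) version, valid once wrap-around is excluded. I would therefore fix the periodic-versus-linear convention explicitly before performing the shift, since this is the only place where an indexing error could enter an otherwise routine computation.
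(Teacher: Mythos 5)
Your proposal is correct and follows essentially the same route as the paper's proof: substitute the convolution into the $z$-transform of the output, factor $z^{-i} = z^{-n}z^{-(i-n)}$, and reindex ($k = i-n$ in the paper, $m = i-n$ in yours) to split the double sum into $\FilterX{H}{z}\SignalX{X}{z}$. Your closing concern about the linear-versus-circular convention is legitimate but already resolved by the paper's standing assumption that all sequences are $\mathcal{L}^2$ over $\Z$ (i.e., linear convolution with no wrap-around), under which the reindexing is a bijection and the identity holds for all $z$ where the sums converge.
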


The convolution theorem states that a standard convolution layer is a matrix-vector multiplication in the spectral domain (without expanding the kernel into a block-circulant structure).
As long as the \emph{transfer matrix} $\FilterX{H}{z}$
is unitary at $z = e^{\im \omega}$ for all frequencies $\forall \omega \in \mathbb{R}$ ($\im$ is the imaginary unit), the convolution $\filter{h}$ is orthogonal.

Therefore, as a major novelty of our paper, 
we design orthogonal convolutions through designing unitary transfer matrix $\FilterX{H}{e^{\im\omega}}$ at all frequencies $\omega \in \R$, which is known as a \emph{paraunitary systems}~\citep{vaidyanathan1993multirate,strang1996wavelets}.
We prove in \Cref{thm:paraunitary-conv1d} that a convolutional layer is orthogonal in the spatial domain if and only if it is paraunitary in the spectral domain.

\vspace{0.2em}
\textbf{Benefits through paraunitary systems.}
\textbf{(1)}
The spectral representation simplifies the designs of orthogonal convolutions, which avoids dealing with the block-circulant structures.
\textbf{(2)} 
Due to the equivalence between orthogonal convolutions and paraunitary systems, i.e., one is sufficient and necessary for the other, it is {\em impossible} to find an orthogonal convolutional layer whose transfer matrix is {\em not} paraunitary and vice versa.
\textbf{(3)} 
There exists a complete factorization of paraunitary systems: any paraunitary $\FilterX{H}{z}$ can be realized through a factorization in the spectral domain, as will be shown in \Cref{eq:paraunitary-factorization-maintext}.
\textbf{(4)} 
Since multiplicative factorization in the spectral domain corresponds to convolutions in the spatial domain, we can realize \emph{any} orthogonal convolution as a convolution of multiple sub-layers, each of which is parameterized by orthogonal matrices.
\textbf{(5)} 
Finally, there are mature reparameterization methods that realize orthogonal matrices using unconstrained parameters. Therefore, we realize orthogonal convolutions through standard neural network optimizers on a model parameterized via our design.

\vspace{0.2em}
{\bf Interpretation of existing methods.}
Since paraunitary system is a necessary and sufficient condition for orthogonal convolution, existing approaches, including {\em \SVCMLONG (\SVCM)}~\citep{sedghi2018singular}, {\em \BCOPLong (\BCOP)}~\citep{li2019preventing}, {\em \CayleyConvLONG (\CayleyConv)}~\citep{trockman2021orthogonalizing}, can be interpreted from a paraunitary perspective.
Specifically, \SVCM clips the singular values of $\FilterX{H}{e^{\im \omega}}$ to $1$ for each frequency $\omega$, \CayleyConv generalizes the Cayley transform from unitary matrices to paraunitary systems, and \BCOP is a spectral factorization of the paraunitary systems.
We discuss these interpretations in more detail in \Cref{app-sub2:framework-construction-interpretation}.

\subsection{Realizing Paraunitary Systems via Re-parameterization}
\label{sub:paraunitary-factorization}

After reducing the problem of orthogonal convolutions to paraunitary systems, we are left with the question of how to realize paraunitary systems. 
We use a complete factorization form of paraunitary systems to realize any paraunitary systems.

According to \Cref{thm:paraunitary-factorization}, we see that any paraunitary system $\FilterX{H}{z}$ can be written as the form in \Cref{eq:paraunitary-factorization-maintext} and any $\FilterX{H}{z}$ in this form is a paraunitary system.
\begin{equation}
\label{eq:paraunitary-factorization-maintext}
\FilterX{H}{z} = \FilterX{V}{z; \matrixSup{U}{-\underline{L}}} \cdots \FilterX{V}{z; \matrixSup{U}{-1}} \mathbf{Q} \FilterX{V}{z^{-1}; \matrixSup{U}{1}} \cdots \FilterX{V}{z^{-1}; \matrixSup{U}{\overline{L}}},
\end{equation}
 where $\mymatrix{Q}$ is an orthogonal matrix, each $\{\matrixSup{U}{\ell}\}_{\ell=-\underline{L}}^{\overline{L}}$ is a column-orthogonal matrix, and 
 \begin{equation}\label{eq:sublayer-spectral}
      \FilterX{V}{z; {\mymatrix{U}^{(\ell)}}} = (\identity - \mymatrix{U}^{(\ell)} {\mymatrix{U}^{(\ell)}}^{\top}) + \mymatrix{U}^{(\ell)} {\mymatrix{U}^{(\ell)}}^{\top} z, \quad \forall \ell=-\underline{L},\ldots,\overline{L}.
 \end{equation}
This complete factorization form is significantly useful for our design of  orthogonal convolutions.
As multiplications in the spectral domain are equivalent to convolutions in the spatial domain (\Cref{thm:conv-theorem}), any paraunitary system realized by \Cref{eq:paraunitary-factorization-maintext} can be constructed as a sequence of convolutions parameterized by $\FilterX{V}{z; {\mymatrix{U}^{(\ell)}}}$'s spatial counterparts as well as the orthogonal $Q$.

\vspace{0.2em}
{\bf Model design in the spatial domain.}
According to the argument above, any convolutional layer can be parameterized by several (column-)orthogonal matrices.
Precisely, we obtain a \emph{complete design of orthogonal 1D-convolutions}.
Using \textbf{learnable column-orthogonal matrices} $(\mymatrix{Q},\{\matrixSup{U}{\ell}\}_{\ell=-\underline{L}}^{\overline{L}})$, we parameterize any convolution layer with filter size $\overline{L} + \underline{L}$, as convolutions of $\overline{L}$ number of filters 
\begin{equation}
\left\{\left[\identity - \matrixSup{U}{\ell} {\matrixSup{U}{\ell}}^{\top}, \  \matrixSup{U}{\ell} {\matrixSup{U}{\ell}}^{\top}\right]\right\}_{\ell=-\overline{L}}^{1},
\end{equation}
followed by a $1 \times 1$ convolution with $\mymatrix{Q}$, and then convolutions of $\underline{L}$ number of filters
 \begin{equation}
     \left\{\left[\matrixSup{U}{\ell} {\matrixSup{U}{\ell}}^{\top}, \identity - \matrixSup{U}{\ell} {\matrixSup{U}{\ell}}^{\top}\right]\right\}_{\ell=1}^{\underline{L}}.
 \end{equation}
\Cref{fig:paraunitary} provides a visualization of our proposed design of orthogonal convolution layers; each block denotes a convolution and the form of the filter is displayed in each of the block.

\begin{figure}[htbp]
    \centering
    \includegraphics[trim={0.5cm 0.7cm 0.5cm 0.7cm},clip,width=\textwidth]{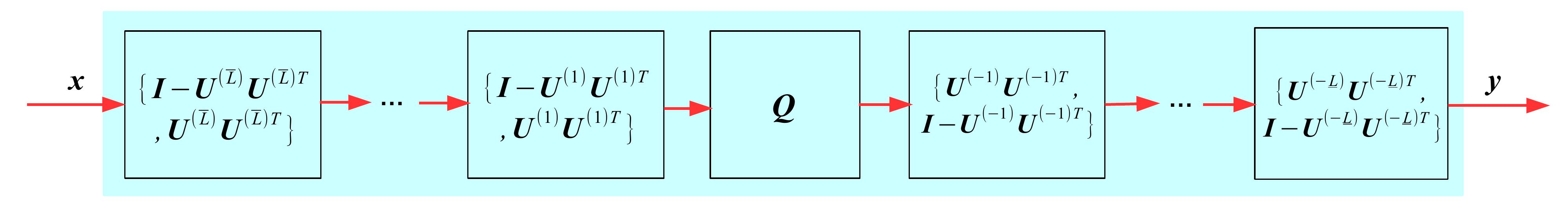}
    \caption{
    Complete design of orthogonal convolution layer as a cascade of convolutions, whose filter parameterization is depicted in each block.  $\bm{Q}$ and $\{\bm{U}^{(\ell)}\}$  are orthogonal matrices.}
    \label{fig:paraunitary}
\end{figure}

Using this complete design of 1D-convolution, we obtain a complete design for {\em separable} orthogonal 2D-convolutions.
We parameterize any separable 2D-convolution of filter size $(\overline{L} + \underline{L} + 1) \times (\overline{R} + \underline{R} + 1)$, as a convolution of two orthogonal 1D-convolutions
parameterized with learnable column-orthogonal matrices $(\mymatrix{Q}_{1}, \{\matrixSup{U}{\ell}_{1}\}_{\ell=-\underline{L}}^{\overline{L}})$ and $(\mymatrix{Q}_{2}, \{\matrixSup{U}{r}_2\}_{r=-\underline{R}}^{\overline{R}})$ respectively.

With a complete factorization of paraunitary systems, we reduce the problem of designing orthogonal convolutions to the one for orthogonal matrices.

\vspace{0.2em}
\textbf{Parameterization for orthogonal matrices.} 
In \Cref{app-sub:framework-parameterization}, we perform a comparative study on different parameterizations of orthogonal matrices, including the {\em Björck orthogonalization}~\citep{anil2019sorting,li2019preventing}, the {\em Cayley transform}~\citep{helfrich2018orthogonal,maduranga2019complex}, and the {\em Lie exponential map}~\citep{lezcano2019cheap,lezcano2019trivializations}.
We adopt the Lie exponential map since it is the only one that provides an exact and complete characterization of all orthogonal matrices.
The Lie exponential map is a {\em surjective} mapping from a skew-symmetry matrix $\mymatrix{A}$ to an orthogonal matrix $\mymatrix{U}$ with
$\mymatrix{U} = \exp(\mymatrix{A}) = \identity + \mymatrix{A} + \frac{1}{2} \mymatrix{A}^2 + \cdots$, where the infinite sum can be computed exactly up to machine-precision~\citep{higham2009scaling}.

\begin{figure}
    \centering
    \includegraphics[trim={0.5cm 1cm 0.5cm 1cm},clip,width=\textwidth]{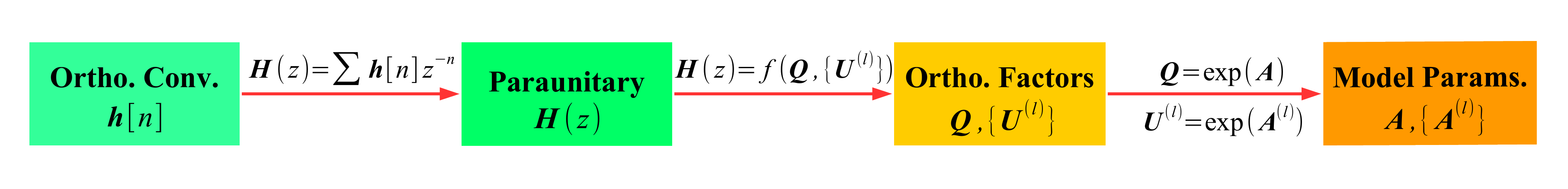}
    \caption{
    {\bf \SCFac: A pipeline for designing orthogonal convolutional layer.}
    {\bf(1)} An orthogonal convolution $\signalX{h}{n}$ is equivalent a paraunitary system $\SignalX{H}{z}$ in the  spectral domain (\Cref{thm:conv-theorem}).
    {\bf(2)} The paraunitary system $\SignalX{H}{z}$ is multiplications of factors characterized by orthogonal matrices $\mymatrix{Q}$ and $\{\matrixSup{U}{\ell}\}$ (\Cref{eq:paraunitary-factorization-maintext}, \Cref{thm:paraunitary-conv1d}).
    {\bf(3)} These orthogonal matrices are parameterized by skew-symmetric matrices using Lie exponential map.
    }
    \label{fig:pipeline}
\end{figure}

Now we have an end-to-end pipeline as shown in \Cref{fig:pipeline} of our proposed design of orthogonal convolutional layers. Since the major component of our design uses a complete factorization for separable paraunitary systems, we call our design {\em \SCFacLONG (\SCFac)}.
\section{Unifying Orthogonal Convolution Variants as Paraunitary Systems}
\label{sec:framework}

Various convolutional layers (strided, dilated, and group convolution) are widely used in neural networks.
However, it is not apparent how to enforce their orthogonality, as the convolution theorem (\Cref{thm:conv-theorem}) only holds for \emph{standard} convolutions. 
Previous approaches only deal with standard convolutions~\citep{sedghi2018singular,li2019preventing,trockman2021orthogonalizing}, thus orthogonality for state-of-the-art architectures are never studied before.

We address this limitation by modifying convolution theorem for each variant of convolution layer, which allows us to design these variants using paraunitary systems.

\begin{theorem}[Convolution and paraunitary theorems for various convolutions]
\label{thm:conv-theorem-variants}
Strided, dilated, and group convolutions can be unified in the spectral domain as $\FilterX{\underline{Y}}{z} = \FilterX{\underline{H}}{z} \FilterX{\underline{X}}{z}$,
where $\FilterX{\underline{Y}}{z}$, $\FilterX{\underline{H}}{z}$, $\FilterX{\underline{X}}{z}$ are modified {\ZTLong}s of $\signal{y}$, $\filter{h}$, $\signal{x}$, where the modified {\ZTLong}s are instantiated for strided convolutions in \Cref{prop:strided-conv1d}, dilated convolution in \Cref{prop:dilated-conv1d}, and group convolution in \Cref{prop:group-conv1d}.
Furthermore, a convolution is orthogonal if and only if $\FilterX{\underline{H}}{z}$ is paraunitary. 
\end{theorem}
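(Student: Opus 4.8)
The plan is to reduce each convolution variant to a \emph{standard} (stride-one, undilated, single-group) convolution acting on a suitably re-encoded signal, so that \Cref{thm:conv-theorem} and the orthogonality--paraunitary equivalence of \Cref{thm:paraunitary-conv1d} apply verbatim to the modified transfer matrix $\FilterX{\underline{H}}{z}$. The unifying principle I would stress is that each variant-specific modified {\ZTLong} arises from a \emph{norm-preserving} relabeling of the samples and channels of $\signal{x}$ and $\signal{y}$. Because such a relabeling is a bijective isometry, Parseval's identity still converts the spatial constraint $\norm{\signal{y}} = \norm{\signal{x}}$ into the spectral requirement that $\FilterX{\underline{H}}{e^{\im\omega}}$ be unitary at every frequency $\omega$. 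Thus the two claims of the theorem (the unified product form, and the orthogonality iff paraunitary equivalence) both follow once the encoding for each variant is exhibited and shown to be an isometry.

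First I would treat the strided convolution with stride $M$ via the polyphase decomposition: splitting $\signal{x}$ and $\signal{y}$ into their $M$ phase components and stacking these phases as additional channels turns a stride-$M$ convolution into an ordinary stride-one convolution on the enlarged channel space, which is precisely the modified {\ZTLong} of \Cref{prop:strided-conv1d}, with $\FilterX{\underline{H}}{z}$ the resulting polyphase transfer matrix. Since stacking phases merely permutes samples, it preserves norms. For the dilated convolution with dilation factor $D$, inserting $D-1$ zeros between consecutive taps multiplies every exponent in the {\ZTLong} by $D$, i.e. $\FilterX{\underline{H}}{z} = \FilterX{H}{z^{D}}$ as in \Cref{prop:dilated-conv1d}; because the map $z \mapsto z^{D}$ sends the unit circle onto itself, $\FilterX{\underline{H}}{e^{\im\omega}}$ is unitary for all $\omega$ exactly when $\FilterX{H}{e^{\im\omega}}$ is.

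For the group convolution with $G$ groups, the channels split into $G$ independent blocks, so the transfer matrix is block diagonal with one block per group, as in \Cref{prop:group-conv1d}; a block-diagonal matrix is unitary if and only if each block is, and the channel partition is itself a norm-preserving reindexing. In every case we have thereby written the variant as $\FilterX{\underline{Y}}{z} = \FilterX{\underline{H}}{z}\FilterX{\underline{X}}{z}$ with a norm-preserving encoding, so both directions of the equivalence follow from the standard argument: by Parseval's identity the total output energy equals $\frac{1}{2\pi}\int_{-\pi}^{\pi} \norm{\FilterX{\underline{H}}{e^{\im\omega}}\FilterX{\underline{X}}{e^{\im\omega}}}^2\,d\omega$, and this matches the input energy $\frac{1}{2\pi}\int_{-\pi}^{\pi}\norm{\FilterX{\underline{X}}{e^{\im\omega}}}^2\,d\omega$ for every input if and only if $\FilterX{\underline{H}}{e^{\im\omega}}$ is unitary at almost every $\omega$ --- i.e. iff $\FilterX{\underline{H}}{z}$ is paraunitary.

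The main obstacle is the strided case. Unlike dilation and grouping, where the re-encoding is a transparent substitution or a block structure, the polyphase re-encoding genuinely enlarges the channel dimension, and one must verify that the phase-stacking map is a bijective isometry (so that no energy is lost or double-counted) and that the induced $\FilterX{\underline{H}}{z}$ is a genuine transfer matrix of a stride-one convolution on the stacked channels. Establishing this polyphase identity cleanly --- in essence a Noble-identity computation --- is where the real work lies; once it is in place, the dilated and group cases reduce to immediate substitutions, and the paraunitary equivalence for all three variants is inherited directly from \Cref{thm:paraunitary-conv1d}.
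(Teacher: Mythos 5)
Your proposal is correct and takes essentially the same route as the paper: the paper's proofs (Propositions \ref{prop:strided-conv1d}, \ref{prop:dilated-conv1d}, \ref{prop:group-conv1d}) use exactly your three reductions --- polyphase stacking for strided convolutions (with the polyphase Parseval lemma, \Cref{thm:polyphase-parseval}, playing the role of your ``norm-preserving bijective re-encoding''), the substitution $z \mapsto z^{R}$ for dilated convolutions, and block-diagonal structure for group convolutions --- and you correctly flag the polyphase/Noble-identity computation for the strided case as the real work, which is precisely the change-of-variables derivation the paper carries out. The only organizational difference is that the paper re-runs the Parseval argument (and the sinusoidal-counterexample argument for the ``only if'' direction) separately for each variant, whereas you inherit both directions from \Cref{thm:paraunitary-conv1d} via the bijective isometry; the underlying computations are identical.
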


\begin{wrapfigure}[15]{r}{0.65\textwidth}
    \vspace{-1em}
    \begin{subfigure}[b]{0.48\textwidth}
        \fbox{\includegraphics[trim={1.4cm 1.4cm 1.4cm 1.4cm },clip,width=0.9\textwidth]{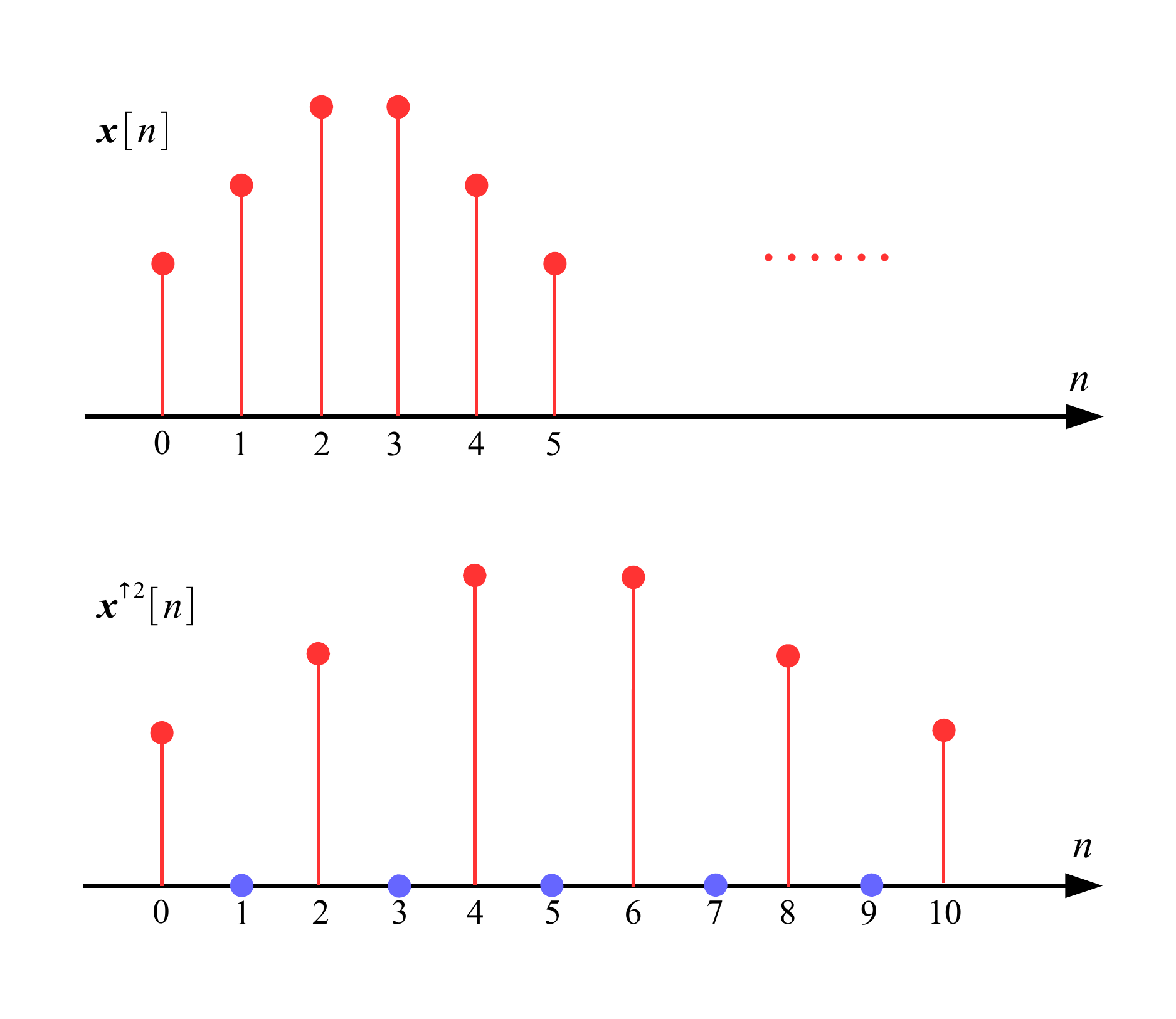}}
        \caption{Up-sample}
        \label{subfig:up-sampling}
    \end{subfigure}
    \hfill
    \begin{subfigure}[b]{0.48\textwidth}
        \fbox{\includegraphics[trim={1.4cm 1.4cm 1.4cm 1.4cm },clip,width=0.9\textwidth]{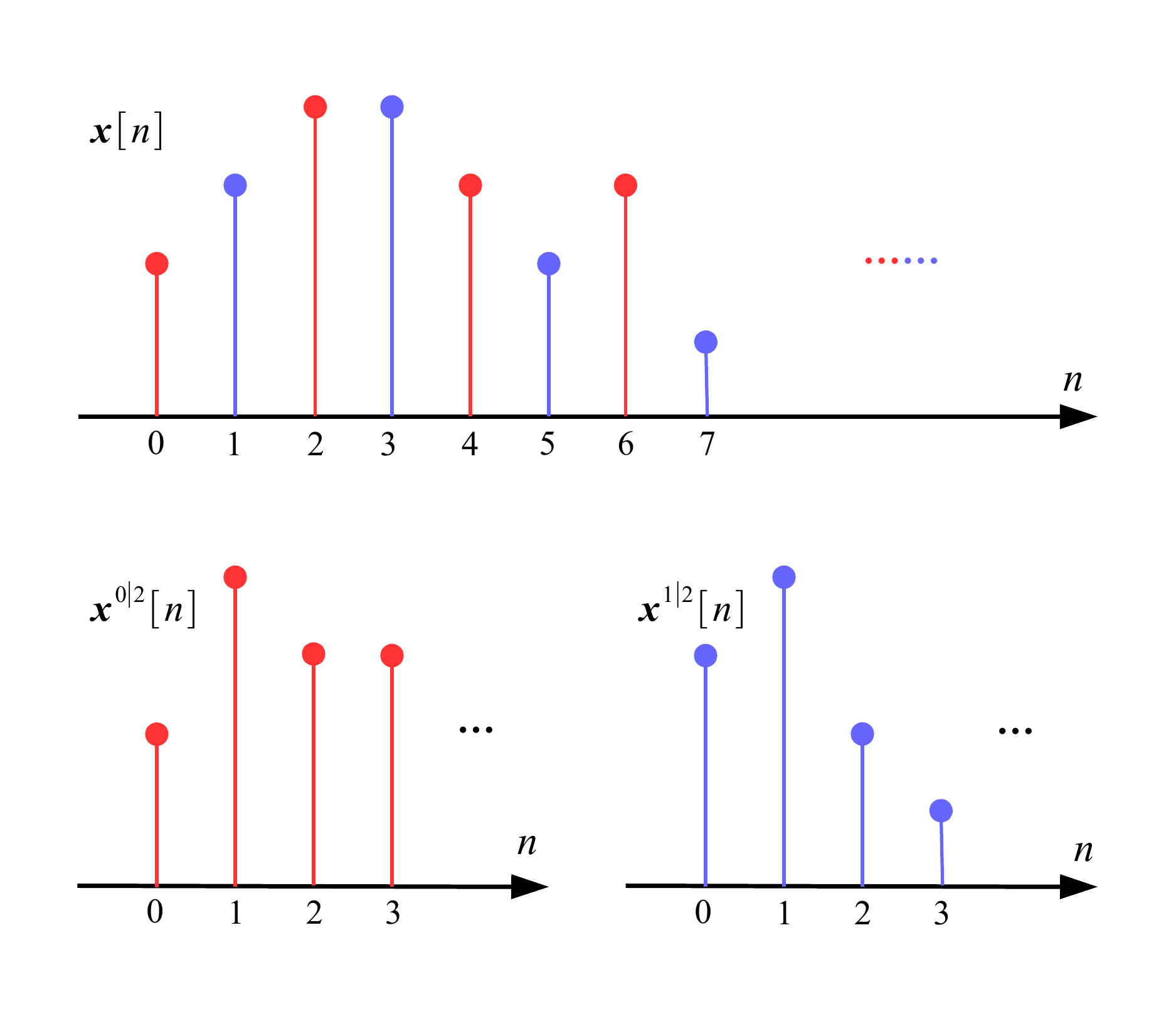}}
        \caption{Down-sample}
        \label{subfig:polyphase-components}
    \end{subfigure}
\caption{\small{{\bf Up and down sampling.}
In {\bf (a)}, the sequence $\signalX{x}{n}$ is up-sampled into $\signalSupX{x}{\uparrow 2}{n}$.
In {\bf (b)}, $\signalX{x}{n}$ is down-sampled into  $\signalSupX{x}{\polyphase{0}{2}}{n}$ with even entries (\textcolor{red}{red}) 
and $\signalSupX{x}{\polyphase{1}{2}}{n}$ with odd entries(\textcolor{blue}{blue}).
}}
\label{fig:multi-resolution}
\end{wrapfigure}

In~\Cref{tab:comparison-conv-layers}, we formulate strided, dilated, and group convolutions in the spatial domain, interpreting them as up-sampled or down-sampled variants of a standard convolution. Now, we introduce the concept of up-sampling and down-sampling precisely below.

Given a sequence $\signal{x}$,
we introduce its \emph{up-sampled sequence} $\signalSup{x}{\uparrow R}$ with sampling rate $R$ as $\signalSupX{x}{\uparrow R}{n} \triangleq 
\signalX{x}{n / R}$ for $n \equiv 0\!\pmod{R}$.
On the other hand, its \emph{($r$,$R$)-polyphase component} $\signalSup{x}{\polyphase{r}{R}}$ indicates the $r$-th down-sampled sequence with sampling rate $R$, defined as $\signalSupX{x}{\polyphase{r}{R}}{n} \triangleq \signalX{x}{n R + r}$.
We illustrated an example of $\signalSup{x}{\uparrow R}$ and $\signalSup{x}{\polyphase{r}{R}}$ in \Cref{fig:multi-resolution} when sampling rate $R = 2$. The {\ZTLong}s of $\signalSup{x}{\uparrow R}$, $\signalSup{x}{\polyphase{r}{R}}$ are denoted as  $\SignalSupX{X}{\uparrow R}{z}$,  $\SignalSupX{X}{\polyphase{r}{R}}{z}$ respectively. Their relations to $\FilterX{X}{z}$ are studied in \Cref{app-sub:multi-resolution}. 

\begin{table}[!htbp]
    \caption{\small{\bf Variants of convolutions.}
    We present the modified $Z$-transforms, ${\color{purple} \SignalX{\underline{Y}}{z}}$, ${\color{red} \FilterX{\underline{H}}{z}}$, and ${\color{blue} \SignalX{\underline{X}}{z}}$ for each convolution such that  ${\color{purple} \SignalX{\underline{Y}}{z}} = {\color{red} \FilterX{\underline{H}}{z}} {\color{blue} \SignalX{\underline{X}}{z}}$ holds.
    In the table, $\SignalSupX{X}{\range{R}}{z} \triangleq [\SignalSupX{X}{\polyphase{0}{R}}{z}^\top,\ldots,\SignalSupX{X}{\polyphase{R-1}{R}}{z}^\top ]^\top$ and 
$\SignalSupX{\widetilde{X}}{\range{R}}{z} = [\SignalSupX{X}{\polyphase{-0}{R}}{z},\ldots,\SignalSupX{X}{\polyphase{-(R-1)}{R}}{z}]$.
    For group convolution, $\filterSup{h}{g}$ is the filter for the $g^{\mathrm{th}}$ group with $\FilterSupX{H}{g}{z}$ being its \ZTLong, and $\Blkdiag{\cdot}$ stacks multiple matrices into a block-diagonal matrix.
    }
    \label{tab:comparison-conv-layers}
    \vspace{-0.5em}
    \centering
    \renewcommand{\arraystretch}{1.3}
    \renewcommand{\tabcolsep}{5pt}
     \resizebox{\textwidth}{!}{
    \begin{tabular}{c | c | c c c}
    \toprule
    \multirow{2}{*}{Convolution Type} & \multirow{2}{*}{Spatial Representation} & \multicolumn{3}{c}{Spectral Representation}\\
    & & ${\color{purple} \SignalX{\underline{Y}}{z}}$ & ${\color{red} \FilterX{\underline{H}}{z}}$ & ${\color{blue} \SignalX{\underline{X}}{z}}$\\ 
    \midrule
     Standard  
     & ${\color{purple} \signalX{y}{i}} = \sum_{n \in \Z} {\color{red} \filterX{h}{n}} {\color{blue} \signalX{x}{i - n}}$
     & ${\color{purple} \SignalX{Y}{z}}$ & ${\color{red} \FilterX{H}{z}}$ & ${\color{blue} \SignalX{X}{z}}$ 
     \\
     $R$-Dilated 
     & ${\color{purple} \signalX{y}{i}} = \sum_{n \in \Z} {\color{red} \filterSupX{h}{\uparrow R}{n}} {\color{blue} \signalX{x}{i - n}}$
     & ${\color{purple} \SignalX{Y}{z}}$ & ${\color{red} \SignalX{H}{z^R}}$ & $ {\color{blue} \SignalX{X}{z}}$
     \\
     $\downarrow\!R$-Strided 
     & ${\color{purple} \signalX{y}{i}} = \sum_{n \in \Z} {\color{red} \filterX{h}{n}} {\color{blue} \signalX{x}{R i - n}}$ 
     & ${\color{purple} \FilterX{Y}{z}} $ & $ {\color{red} \FilterSupX{\widetilde{H}}{\range{R}}{z}} $ & ${\color{blue} \SignalSupX{X}{\range{R}}{z}}$
     \\
     $\uparrow\!R$-Strided 
     & ${\color{purple} \signalX{y}{i}} = \sum_{n \in \Z} {\color{red} \filterX{h}{n}} {\color{blue} \signalSupX{x}{\uparrow R}{i - n}}$ 
     & ${\color{purple} \SignalSupX{Y}{\range{R}}{z}} $ & $ {\color{red} \FilterSupX{H}{\range{R}}{z}} $ & $ {\color{blue} \SignalX{X}{z}}$
     \\
     $G$-Group 
     & ${\color{purple} \signalX{y}{i}} = \sum_{n \in \Z} {\color{red} \Blkdiag{\{\filterSupX{h}{g}{n}\}}} {\color{blue} \signalX{x}{i - n}}$
     & ${\color{purple} \SignalX{Y}{z}} $ & $ {\color{red} \Blkdiag{\{\FilterSupX{H}{g}{z}\}}} $ & $ {\color{blue} \SignalX{X}{z}}$
     \\ 
    \bottomrule
    \end{tabular}
    }
\end{table}

Now we are ready to interpret convolution variants.
\emph{(1) Strided convolution} is used to adjust the feature resolution: A strided convolution (
$\downarrow\!R$-strided) decreases the resolution by down-sampling after a standard convolution,
while a transposed strided convolutional layer (
$\uparrow\!R$-strided) increases the resolution by up-sampling before a standard convolution.
\emph{(2) Dilated convolution}
 increases the receptive field of a convolution without extra parameters and computation: an $R$-dilated convolution up-samples its filters before convolution with the input.
\emph{(3) Group convolution}
is proposed to reduce the parameters and computations, thus widely used by efficient architectures: a $G$-group convolution divides the input/output channels into $G$ groups and restricts the connections within each group.
In \Cref{app-sub:framework-unification}, we prove that a (variant of) convolution is orthogonal if and only if the modified Z-transform $\FilterX{\underline{H}}{z}$ is paraunitary.
\section{Learning Deep Orthogonal Networks with Lipschitz Bounds}
\label{sec:deep-networks}

In this section, we switch our focus from layer design to network design. In particular, we aim to study how to scale-up deep orthogonal networks with Lipschitz bounds.

{\bf Lipschitz networks}~\citep{anil2019sorting,li2019preventing,trockman2021orthogonalizing}, whose Lipschitz upper bounds are imposed by their architectures, are proposed as competitive candidates to guarantee robustness in deep learning. Orthogonal layers are essential components in these networks, which are $1$- Lipschitz while preserving gradient norm in backpropagation.
Given a Lipschitz constant $L$, a network $f$ can compute a certified radius for each input from its output margin. Formally, denote the output margin of an input $\signal{x}$ with label $c$ as
\begin{equation}
\label{eq:margin}
\mathcal{M}_f(x) \triangleq \max(0, f(\signal{x})_c - \max_{i \neq c} f(\signal{x})_i),
\end{equation}
i.e., the difference between the correct logit and the second largest logit. Then the output is robust to perturbation such that $f(\signal{x} + \signal{\epsilon}) = f(\signal{x}) = c, \forall \epsilon: \norm{\signal{\epsilon}} < \mathcal{M}_f(\signal{x}) / \sqrt{2}L$.

Despite the benefit, existing architectures for Lipschitz networks remain shallow, and a Lipschitz network is typically a cascade of orthogonal layers and GroupSort activations~\citep{li2019preventing}, without skip-connections and normalization layers.
The lack of critical factors for training deep networks, including {\em skip-connections}~\citep{he2016deep,he2016identity}, {\em batch normalization}~\citep{ioffe2015batch,ioffe2017batch}, and {\em proper initialization}~\citep{glorot2010understanding,xiao2018dynamical}, is the main reason why the Lipschitz network have not scaled to deeper architectures.
Therefore, in this section, we explore skip-connections and initialization methods toward addressing this problem.

\vspace{0.2em}
{\bf Skip-connections.} 
Two general types of skip-connections are widely used in deep networks, one based on {\em addition} and another on {\em concatenation}. The addition-based connection is proposed in \ResNet~\citep{he2016deep}, and adopted in state-of-the-art architectures like SE-Net~\citep{hu2018squeeze} and EffcientNet~\citep{tan2019efficientnet}. The concatenation-based  connection is proposed in flow-based generative models~\citep{dinh2014nice,dinh2016density,kingma2018glow}, and adopted in reversible networks~\citep{gomez2017reversible,jacobsen2018revnet} as well as \ShuffleNet~\citep{zhang2018shufflenet,ma2018shufflenet}.
In what follows, we propose Lipschitz skip-connections based on these two different mechanisms, illustrated in \Cref{fig:residual-blocks} (in \Cref{app-sec:deep-networks}).

\begin{restatable}[Lipschitzness of residual blocks]{proposition}{PropResidual}
\label{prop:lipschitz-resnet}
Suppose $f^1$, $f^2$ are two $L$-Lipschitz functions (for residual and shortcut branches) and $\alpha \in [0, 1]$ is a learnable scalar, then an additive residual block $f$ defined in \Cref{eq:resnet-block} is $L$-Lipschitz.
\begin{equation}
\label{eq:resnet-block}
f(\signal{x}) \triangleq \alpha f^1(\signal{x}) + (1 - \alpha) f^2(\signal{x}),
\end{equation}
Alternatively, suppose $g^1$, $g^2$ are two $L$-Lipschitz functions and $\mymatrix{P}$ denotes channel permutation, then a concatenative residual block $g$ defined in \Cref{eq:shufflenet-block} is $L$-Lipschitz
\begin{equation}
\label{eq:shufflenet-block}
g(\signal{x}) \triangleq \mymatrix{P}\left[g^1(\signalSup{x}{1}); g^2(\signalSup{x}{2}) \right]
\end{equation}
where $[\cdot;~\cdot]$ denotes channel concatenation, and $\signal{x}$ is split into $\signal{x}_1$ and $\signal{x}_2$, i.e., $\signal{x} = [\signal{x}_1, \signal{x_2}].$ 
\end{restatable}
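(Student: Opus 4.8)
The plan is to verify each bound directly from the definition of Lipschitzness, since both blocks are engineered so that the composite operator never inflates distances beyond the shared factor $L$. For the \textbf{additive block}, I would fix two inputs $\signal{x}, \signal{x}'$ and observe that $f(\signal{x}) - f(\signal{x}')$ is the \emph{same} convex combination $\alpha(\cdot) + (1-\alpha)(\cdot)$ of the two branch differences. The triangle inequality then yields $\norm{f(\signal{x}) - f(\signal{x}')} \le \alpha\,\norm{f^1(\signal{x}) - f^1(\signal{x}')} + (1-\alpha)\,\norm{f^2(\signal{x}) - f^2(\signal{x}')}$, an estimate that is legitimate precisely because $\alpha \in [0,1]$ makes both coefficients nonnegative. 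Applying the $L$-Lipschitz hypothesis to each branch and using $\alpha + (1-\alpha) = 1$ collapses the right-hand side to $L\,\norm{\signal{x} - \signal{x}'}$, which is the claim.

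For the \textbf{concatenative block}, the idea is to exploit two structural facts: the channel permutation $\mymatrix{P}$ is orthogonal and hence an $\ell_2$ isometry, and channel concatenation splits the squared norm into a sum over disjoint coordinate blocks. Concretely, with $\signal{x} = [\signal{x}_1, \signal{x}_2]$ and $\signal{x}' = [\signal{x}'_1, \signal{x}'_2]$, I would write $g(\signal{x}) - g(\signal{x}') = \mymatrix{P}[\,g^1(\signal{x}_1) - g^1(\signal{x}'_1)\,;\ g^2(\signal{x}_2) - g^2(\signal{x}'_2)\,]$, drop the norm-preserving $\mymatrix{P}$, and replace the concatenated norm by $\sqrt{\norm{g^1(\signal{x}_1) - g^1(\signal{x}'_1)}^2 + \norm{g^2(\signal{x}_2) - g^2(\signal{x}'_2)}^2}$. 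Because each branch reads only its own input slice, the two $L$-Lipschitz hypotheses bound the summands by $L^2\norm{\signal{x}_1 - \signal{x}'_1}^2$ and $L^2\norm{\signal{x}_2 - \signal{x}'_2}^2$; factoring out $L$ and recombining the blocks gives $L\sqrt{\norm{\signal{x}_1 - \signal{x}'_1}^2 + \norm{\signal{x}_2 - \signal{x}'_2}^2} = L\,\norm{\signal{x} - \signal{x}'}$, using that the split is along disjoint coordinates.

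I do not expect a genuine obstacle here; both statements are one-line consequences of the definition once the correct inequality is invoked. The only points demanding care are the two structural hypotheses, and I would make them explicit rather than let them pass silently. For the additive block, the assumption $\alpha \in [0,1]$ is \emph{essential}: it is what guarantees the coefficients are nonnegative and sum to one, so the triangle-inequality estimate does not acquire a factor exceeding $L$. For the concatenative block, the crucial facts are that a permutation matrix preserves the $\ell_2$ norm and that feeding the branches disjoint coordinate blocks makes the Pythagorean splitting exact; were the branches to share inputs, cross terms would spoil the clean $L$ bound. With those two observations in place, the remaining computation is routine.
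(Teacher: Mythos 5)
Your proposal is correct and follows essentially the same route as the paper's proof: the additive case via the convex-combination triangle inequality using $\alpha \in [0,1]$, and the concatenative case via the norm-preservation of the permutation $\mymatrix{P}$ together with the Pythagorean splitting of the squared norm over the disjoint input slices. No gaps; the paper's argument is line-for-line the computation you describe.
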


\vspace{0.2em}
{\bf Initialization.}
Proper initialization is crucial in training deep networks~\citep{glorot2010understanding,he2016deep}. In the context of orthogonal {\RNN}s, various methods are proposed to initialize orthogonal matrices, including the identical/permutation and torus initialization~\citep{henaff2016recurrent,helfrich2018orthogonal,lezcano2019cheap}.
However, initialization of orthogonal convolutional layers was not studied, and all previous approaches inherit the initialization from the underlying parameterization~
\citep{li2019preventing,trockman2021orthogonalizing}.
In \Cref{prop:paraunitary-initialization}, we show how to apply the initialization methods for orthogonal matrices to orthogonal convolutions (represented as in \Cref{eq:sublayer-spectral}). 

In the experiments, we will evaluate the impact of different choices of skip-connections and initialization methods to the performance of deep Lipschitz networks.
\section{Related Work}
\label{sec:related}

\textbf{Orthogonality in neural networks} are advocated in different contexts.
For RNNs to capture long-term dependence, orthogonal transitions are proposed to address the gradient vanishing/exploding problems~\citep{arjovsky2016unitary,wisdom2016full,mhammedi2017efficient,jing2017tunable,henaff2016recurrent,vorontsov2017orthogonality,helfrich2018orthogonal,maduranga2019complex}.
In parallel, orthogonal initialization/regularization~\citep{xiao2018dynamical,qi2020deep} is proposed to ensure signal propagation in training normalization-free deep networks.
Furthermore, orthogonal layers are used in Lipschitz networks, which ensures robustness against perturbations~\citep{cisse2017parseval,anil2019sorting,li2019preventing,trockman2021orthogonalizing}.
Lastly, \citet{jia2019orthogonal} suggests that orthogonal weights decrease the generalization error.

\textbf{Enforcing orthogonal constraints} has two typical families of approaches, one using projected or Riemannian gradient descent and another representing orthogonal matrices/convolutions with unconstrained parameters.
{\bf (1)} For {\em orthogonal matrices}, various types of parameterization are proposed, using Fourier matrix~\citep{arjovsky2016unitary,jing2017tunable}, Householder reflection~\citep{mhammedi2017efficient}, Cayley transform~\citep{helfrich2018orthogonal,maduranga2019complex}, and matrix exponential~\citep{lezcano2019cheap,lezcano2019trivializations}. 
Alternatively, \citep{anil2019sorting} proposes projected gradient descent via Björck's algorithm, and \citep{wisdom2016full} and \citep{vorontsov2017orthogonality} consider Riemannian gradient descent.
{\bf (2)} For {\em orthogonal convolutions}, early works enforce orthogonality of the flattened convolution kernel~\citep{jia2017improving,cisse2017parseval}, which, however, does not lead to the orthogonality of the original convolution.
Projected gradient descent via singular value clipping is proposed in \citep{sedghi2018singular}, which is expensive in practice.
Recent works adopt parameterization-based approaches, either using block convolutions~\citep{li2019preventing} or Cayley transform of convolutions~\citep{trockman2021orthogonalizing}.

\textbf{Paraunitary systems}
were extensively studied in filter banks and wavelets~\citep{vaidyanathan1993multirate,strang1996wavelets,lin1996theory}.
Classic theory shows that 1D-paraunitary systems are fully characterized by spectral factorization (see Chapter 14 of \citep{vaidyanathan1993multirate}), but not all MD-paraunitary systems admit a factorized form (see Chapter 8 of \citep{lin1996theory}).
While the complete characterization of MD-paraunitary systems is known in theory~\citep{venkataraman1995comparison,zhou2005multidimensional}, most practical constructions use separable paraunitary systems~\citep{lin1996theory} and special classes of non-separable paraunitary systems~\citep{hurley2012paraunitary}.
The equivalence between orthogonal convolutions and paraunitary systems thus opens the opportunities to apply these classic theories in orthogonal convolution designs.
\section{Experiments}
\label{sec:experiments}

In the experiments, we achieve three goals.
{\bf (1)} We demonstrate in \Cref{sub:exact-orthogonality} that our \SCFaclong (\SCFac) achieves precise orthogonality (up to machine-precision), resulting in more accurate orthogonal designs than previous ones~\citep{sedghi2018singular,li2019preventing,trockman2021orthogonalizing}.
{\bf (2)} Despite the differences in preciseness, we show in \Cref{sub:adversarial-robustness} that different realizations of paraunitary systems only have a minor impact on the adversarial robustness of Lipschitz networks.
{\bf (3)} Due to the versatility of our convolutional layers and architectures, in \Cref{sub:deep-networks}, we explore the best strategy to scale Lipschitz networks to wider/deeper architectures.
Training details are provided in \cref{app-sub2:training-details}.

\subsection{Exact Orthogonality}
\label{sub:exact-orthogonality}

We evaluate the orthogonality of our \SCFac layer verse previous designs, including \CayleyConv~\citep{trockman2021orthogonalizing}, \BCOP~\citep{li2019preventing}, \SVCM~\citep{sedghi2018singular}, \RKO~\citep{cisse2017parseval}, \OSSN~\citep{miyato2018spectral}. Our experiments are based on convolutional layers with $64$ input channels and $16 \times 16$ input size (typical in a neural network). These layers are made orthogonal using various approaches, and evaluated with Gaussian inputs. For our \SCFac layer, we initialize all orthogonal factors uniformly, and we use the built-in initialization for other approaches.
We evaluate the difference between $1$ and the ratio of the output norm to the input norm, and a layer is precisely orthogonal if the number is close to $0$.

\vspace{-0.5em}
\begin{table}[!htbp]
    \centering
    \caption{\small{{\bf (Left) Orthogonality evaluation of different designs for standard convolution.} 
    The number $\norm {\mathsf{Conv}(\signal{x})} / \norm{\signal{x}} - 1$ indicates the difference between the output and input norms of a layer. A layer is more precisely orthogonal if the number is closer to $0$. 
    As shown, our \SCFac achieves orders of magnitude more orthogonal on standard convolution.
    {\bf (Right) Orthogonality evaluation of our \SCFac design for various convolutions.}
    The numbers $\norm {\mathsf{Conv}(\signal{x})} / \norm{\signal{x}} - 1$ displayed are in the magnitude of $10^{-8}$.
    As shown, our \SCFac achieves machine epsilon orthogonality on variants of convolution.}}
    \vspace{-0.5em}
    \small
    \begin{minipage}{.4\textwidth}
    \label{tab:standard_conv}
    \renewcommand{\tabcolsep}{3pt}
    \centering
    \resizebox{.98\textwidth}{!}{
    \fbox{\begin{tabular}{l | c}
        \toprule
        Conv.  &  $\|\mathsf{Conv}(\signal{x})\|/\|\signal{x}\| - 1$ \\
        \midrule
        \SCFac (Ours)
        & $\mathbf{(+3.14 \pm 7.38) \times 10^{-8}}$
        \\
        \CayleyConv~\citep{trockman2021orthogonalizing}
        & $(+2.88 \pm 1.90) \times 10^{-4}$ 
        \\
        \BCOP~\citep{li2019preventing}
        &  $(+2.59 \pm 6.14) \times 10^{-3}$ 
        \\ \midrule
        \SVCM~\citep{sedghi2018singular}
        & $-0.429 \pm 3.31 \times 10^{-3}$ 
        \\
        \RKO~\citep{cisse2017parseval} 
        & $-0.666 \pm 1.74 \times 10^{-3}$ 
        \\
        \OSSN~\citep{miyato2018spectral}
        & $-0.422 \pm 3.44 \times 10^{-3}$ 
        \\
        \bottomrule
    \end{tabular}}
    }
    \end{minipage}
    \hfill
    \begin{minipage}{.58\textwidth}
    \centering
    \scriptsize
    \label{tab:various_conv}
    \setlength{\tabcolsep}{3pt}
    \resizebox{\textwidth}{!}{
    \fbox{\begin{tabular}{c c| c c c}
    \toprule
    \multicolumn{2}{c|}{\diagbox{Type}{Groups}} & 1 & 4 & 16 \\
    \midrule
    \multirow{3}{*}{$R$-Dilated} 
    & 1 & $+3.14 \pm 7.38$ & $+1.94 \pm 6.87$ & $+1.44 \pm 6.29$ \\
    & 2 & $+3.65 \pm 7.87$ & $+1.41 \pm 6.77$ & $+1.02 \pm 6.46$ \\
    & 4 & $+3.18 \pm 7.46$ & $+1.79 \pm 6.87$ & $+1.54 \pm 6.21$ \\
    \midrule
    \multirow{2}{*}{$\downarrow \!R$-Strided}
    & 2 & $-4.69 \pm 5.10$ & $+4.38 \pm 6.30$ & $+1.79 \pm 5.78$ \\
    & 4 & $+10.39\!\pm\!5.15$ & $+6.35 \pm 6.04$ & $+3.05 \pm 5.79$ \\
    \multirow{2}{*}{$\uparrow\!R$-Strided} 
    & 2 & $+3.67 \pm 7.96$ & $+1.38 \pm 6.70$ & $+1.43 \pm 6.23$ \\
    & 4 & $+3.86 \pm 7.09$ & $+1.12 \pm 6.81$ & N/A \\ 
    \bottomrule
    \end{tabular}}}
    \end{minipage}
\vspace{-0.5em}
\end{table}

{\bf (1) Standard convolution.}
We show in \Cref{tab:standard_conv} (Left) that our \SCFac is orders of magnitude more precise than all other approaches. The \SCFac layer is in fact exactly orthogonal up to machine epsilon, which is $2^{-24} \approx 5.96 \times 10^{-8}$ for $32$-bits floats. While \RKO and \OSSN are known not to be orthogonal, we surprisingly find that \SVCM is far from orthogonal due to its masking step.
{\bf (2) Convolutions variants.}
In \Cref{sec:framework}, we show that various orthogonal convolutions can be constructed using paraunitary systems. We verify our theory in \Cref{tab:standard_conv} (Right): our \SCFac layers are exactly orthogonal (up to machine precision) for various convolutions.

\subsection{Adversarial Robustness}
\label{sub:adversarial-robustness}

In this subsection, we evaluate the adversarial robustness of Lipschitz networks. Following the setup in \citep{trockman2021orthogonalizing}, we adopt KW-Large, ResNet9, WideResNet10-10 as the backbone architectures, and evaluate their robust accuracy on CIFAR-10 with different designs of orthogonal convolutions. We extensively perform a hyper-parameter search and choose the best hyper-parameters for each approach based on the robust accuracy. The details of the hyper-parameter search is in \Cref{app-sec:experiments}. We run each model with 5 different seeds and report the best accuracy. 

{\bf (1) Certified robustness.}
Following \citet{li2019preventing}, we use the raw images (without normalization) for network input to achieve the best certified accuracy. As shown in \Cref{tab:result-certified} (Top), different realizations of paraunitary systems,  \SCFac, \CayleyConv and \BCOP  achieve comparable performance --- \CayleyConv performs $<1\%$ better in clean accuracy, but the difference in robust accuracy are negligible. 
{\bf (2) Practical robustness.}
\citet{trockman2021orthogonalizing} shows that the certified accuracy is too conservative, and it is possible to increase the practical robustness (against PGD attacks) with a standard input normalization. Notice that the normalization increases the Lipschitz bound, thus lower the certified accuracy. Our experiments in \Cref{tab:result-certified} (Bottom) are based on ResNet9, WideResNet10-10~\citep{trockman2021orthogonalizing} and a deeper WideResNet22. For the shallow architectures (ResNet9, WideResNet10-10), our \SCFac, \CayleyConv, and \BCOP again achieve comparable performance --- \CayleyConv is slightly ahead in robust accuracy. \textbf{For the deeper architecture, our \SCFac has a clear advantage in both clean and robustness accuracy}, and the clean accuracy to only $5\%$ lower than a traditional \ResNet32 trained with batch normalization. Surprisingly, we find that \RKO also performs well in robust accuracy while not exactly orthogonal.
In summary, our experiments show that various paraunitary realizations provide different impacts on certified and practical robustness. While exact orthogonality provides tight Lipschitz bound, there is a trade-off between the exact orthogonality and the practical robustness (especially with the shallow architectures).

\begin{table}[!htbp]
\centering
\caption{\small{{\bf (Top) Certified robustness for plain convolutional networks} (without input normalization). We use KW-Large introduced by \citet{wong2018scaling}. The results for \RKO, \OSSN, and \SVCM are produced by \citet{trockman2021orthogonalizing}. {\bf (Bottom) Practical robustness for residual networks} (with input normalization). For 22 layers, the width of \SCFac is multiplied with 10, \CayleyConv with 6, and \BCOP and \RKO with 8. We are unable to scale \CayleyConv, \BCOP, and \RKO due to memory constraint. As shown, deeper architectures perform better than shallow ones for all orthogonal convolution types, and our \SCFac has a clear advantage.
}}
\vspace{-0.5em}
\renewcommand{\tabcolsep}{3pt}
\resizebox{0.55\textwidth}{!}{
\fbox{\begin{tabular}{c|c|c c c | c c c}
    \toprule
    & & \multicolumn{6}{c}{KW-Large} \\
    \cmidrule{3-8}
    $\epsilon$ & Test Acc. & \SCFac & \CayleyConv & \BCOP & \RKO  & \OSSN & \SVCM \\
    \midrule
    0 & Clean & 74.69 & 75.57 & 74.81 & 74.47 & 71.69 & 72.43 \\
    \midrule
    \multirow{2}{*}{$\frac{36}{255}$} & Certified & 58.68 & 59.03 & 58.83 & 57.50 & 55.71 & 52.11 \\
    & PGD & 67.72 & 67.78 & 67.47 & 68.32 & 65.13 & 66.43 \\
    \bottomrule
\end{tabular}}}
\resizebox{\textwidth}{!}{
\fbox{\begin{tabular}{c|c|cccc|cccc|cccc}
    \toprule
    & & \multicolumn{4}{c|}{ResNet9} & \multicolumn{4}{c|}{WideResNet10-10} & \multicolumn{4}{c}{WideResNet22-max} \\
    \cmidrule{3-14}
    $\epsilon$ & Test Acc. & \SCFac & \CayleyConv & \BCOP & \RKO & \SCFac & \CayleyConv & \BCOP & \RKO & \SCFac & \CayleyConv & \BCOP & \RKO \\
    \midrule
    0 & Clean & {82.19}  & \textbf{84.26} & 83.20 & 84.07 & {84.09} & 82.99 & 84.29 & \textbf{84.51} &  \textbf{87.82} & 85.85& 84.50 & 84.55 \\
    \midrule
    $\frac{36}{255}$ & PGD & 71.21 & 73.47  & 73.05 & \textbf{75.03} & 74.29 & {76.02} & 74.60 & \textbf{77.14} & \textbf{76.46} & {74.81} & 75.00 & 76.41  \\
    \bottomrule
\end{tabular}}}
\label{tab:result-certified}
\vspace{-0.5em}
\end{table}

\subsection{Scaling-up Deep Lipschitz Networks}
\label{sub:deep-networks}

All previously proposed Lipschitz networks~\citep{li2019preventing,trockman2021orthogonalizing} consider only shallow architectures (less than $10$ layers). 
In this subsection, we investigate various factors to scale Lipschitz networks to deeper architectures: skip-connection, depth-width, receptive field, and down-sampling.

{\bf (1) Skip-connections.}
Conventional wisdom suggests that skip-connections mainly address the gradient vanishing/exploding problem; thus, they are not needed for orthogonal networks. To understand their role, we perform an experiment that trains deep Lipschitz networks without skip-connection and with skip-connections based on addition/concatenation (see \Cref{sec:deep-networks}). As shown in \Cref{tab:result-skip-kernel} (left), the network with additive skip-connection substantially outperforms the other two, and the one without skip-connections performs the worst. Therefore, we empirically show that (additive) skip-connection is crucial in deep Lipschitz networks. 
{\bf (2) Depth and width.}
Exact orthogonality is criticized for harming the expressive power of neural networks.
We show that the decrease of expressive power can be alleviated by increasing the network depth/width. In \Cref{tab:result-certified} (Bottom) and \Cref{tab:depth-1} (\Cref{app-sec:experiments}), we observe that deeper/wider architectures increase both the clean and robust accuracy.
{\bf (3) Initialization methods.} 
We try different initialization methods, including identical, permutation, uniform, and torus~\citep{henaff2016recurrent,helfrich2018orthogonal}. We find that identical initialization works the best for deep Lipschitz networks ($>10$ layers), while all methods perform similarly in shallow networks as shown in \Cref{tab:result-init} (\Cref{app-sec:experiments}).
{\bf (4) Receptive field and down-sampling.}
Previous works~\citep{li2019preventing,trockman2021orthogonalizing} use larger kernel size and no stride for Lipschitz networks.
In \Cref{tab:result-skip-kernel} (Right), we perform a study on the effects of kernel/dilation size and down-sampling types for the orthogonal convolutions.
We find that an average pooling as down-sampling consistently outperforms strided convolutions. Furthermore, a larger kernel size helps to boost the performance.
{\bf (5) Run-time and memory comparison.}
We find that previously proposed orthogonal convolutions such as \CayleyConv, \BCOP, and \RKO require more GPU memory and computation time than \SCFac. Therefore, we could not to scale them due to memory constraints (for 22 and 32 layers using Tesla V100 32G). 
In order to scale up Lipschitz networks, economical implementation of orthogonal convolution is crucial. As shown in \Cref{fig:comp_time_memory}, for deep and wide architectures, our \SCFac is the most computationally and memory efficient method and the only method that scales to a width increase of 10 on WideResNet22.
Missing numbers in \Cref{fig:comp_time_memory} and \Cref{tab:depth-1} (\Cref{app-sec:experiments}) are due to the large memory requirement.


    

\begin{table}[!tbp]
\centering
\caption{\small{{\bf (Left) Comparisons of various skip connection types} on WideResNet22-10 (kernel size 5). 
{\bf (Right) Comparisons of various receptive field and down-sampling types} on WideResNet10-10.} The symbols \ding{51}, \ding{53} indicate whether average pooling or strided convolutions are used for down-sampling. The kernel size for  strided convolutions is equal to the stride for ``slim'', which is tripled for ``wide''.}
\vspace{-0.5em}
    \hfill
    \begin{minipage}{.37\textwidth}
    \centering
        \resizebox{\textwidth}{!}{
        \fbox{\begin{tabular}{l|cc}
            \toprule
            \multirow{2}{*}{Skip type} & \multicolumn{2}{c}{Test Acc.} \\
            \cmidrule{2-3}
             & Clean & PGD \\
            \midrule
             ConvNet (w/o skip) & 69.59 & 59.22  \\
             ShuffleNet (concat) & 75.21 & 66.00 \\
             ResNet (add) & \textbf{87.82} & \textbf{76.46} \\
            \bottomrule
        \end{tabular}}}
    \end{minipage}
    \hfill
    \begin{minipage}{.6\textwidth}
    \centering
      \small
        \resizebox{0.9\textwidth}{!}{
        \fbox{\begin{tabular}{cc|cc|cc}
            \toprule
            \multicolumn{2}{c|}{Receptive Field} & \multicolumn{2}{c|}{Down-Sampling} & \multicolumn{2}{c}{Test Acc.} \\
            \midrule
            Kernel & Dilation & Pool & Stride & Clean & PGD \\
            \midrule
            \multirow{1}{*}{3} & \multirow{1}{*}{1} & \multirow{1}{*}{\ding{53}} & \multirow{1}{*}{slim} & 80.70 & 68.81 \\ 
            \multirow{1}{*}{3} & \multirow{1}{*}{1} & \multirow{1}{*}{\ding{53}} & \multirow{1}{*}{wide} & {82.36} & {70.36} \\ 
            \midrule
            \multirow{1}{*}{3} & \multirow{1}{*}{1} & \multirow{1}{*}{\ding{51}} & \multirow{1}{*}{\ding{53}} & {84.54} & {71.71} \\ 
            \multirow{1}{*}{3} & \multirow{1}{*}{2} & \multirow{1}{*}{\ding{51}} & \multirow{1}{*}{\ding{53}} & 81.53 & 70.07 \\
            \midrule
            \multirow{1}{*}{5} & \multirow{1}{*}{1} & \multirow{1}{*}{\ding{51}} & \multirow{1}{*}{\ding{53}} & \textbf{84.09} & \textbf{74.29} \\
            \multirow{1}{*}{5} & \multirow{1}{*}{2} & \multirow{1}{*}{\ding{51}} & \multirow{1}{*}{\ding{53}} & 81.28 & 70.58 \\
            \bottomrule
        \end{tabular}}}
    \end{minipage} 
    \hfill
\label{tab:result-skip-kernel}
\vspace{-1em}
\end{table}

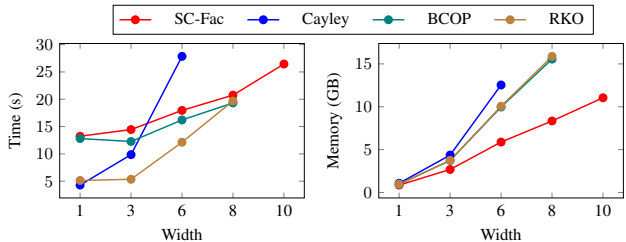
\begin{wrapfigure}[12]{r}{0.6\textwidth}
    \vspace{-1.3em}
    \centering
    \resizebox{\textwidth}{!}{
    \begin{tikzpicture}
    \pgfplotsset{footnotesize,samples=10}
        \begin{groupplot}[group style = {group size = 2 by 1, horizontal sep = 40pt}, width=6.2cm, height=4.5cm]
            \nextgroupplot[legend style = {column sep = 10pt, legend columns = -1, font=\small, at={(1.2,1.25)}, anchor=north},
                tick label style={font=\footnotesize},
                label style={font=\footnotesize},
                symbolic x coords={1,3,6,8,10},
                xtick=data,
                xlabel={Width},
                ylabel={Time (s)}]
                 \addplot[red, thick, mark=*]
                    coordinates{(1, 13.23) (3, 14.45) (6, 17.96) (8, 20.75) (10, 26.45)};
                \addplot[blue, thick, mark=*]
                    coordinates{(1, 4.29) (3, 9.86) (6, 27.85) };
                \addplot[teal, thick, mark=*]
                    coordinates{(1, 12.81) (3, 12.27) (6, 16.21) (8, 19.32)};
                \addplot[brown, thick, mark=*]
                    coordinates{(1, 5.13) (3, 5.35) (6, 12.13) (8, 19.68)};
                \legend{SC-Fac, Cayley, BCOP, RKO}
            \nextgroupplot[
                tick label style={font=\footnotesize},
                label style={font=\footnotesize},
                symbolic x coords={1,3,6,8,10},
                xtick=data,
                xlabel={Width},
                ylabel={Memory (GB)}]
                \addplot[red, thick, mark=*]
                    coordinates{(1, 0.86) (3, 2.68) (6, 5.88) (8, 8.34) (10, 11.05)};
                \addplot[blue, thick, mark=*]
                    coordinates{(1, 1.07) (3, 4.36) (6, 12.54) };
                \addplot[teal, thick, mark=*]
                    coordinates{(1, 0.96) (3, 3.73) (6, 9.97) (8, 15.56)};
                \addplot[brown, thick, mark=*]
                    coordinates{(1, 0.96) (3, 3.74) (6, 10.06) (8, 15.88)};
        \end{groupplot}
    \end{tikzpicture}
    }
    \vspace{-2em}
    \caption{\small {\bf Run-time and memory comparison} using WideResNet22 on Tesla V100 32G. x-axis indicates the width factor (channels $=$ base\_channels $\times$ factor). 
    Our \SCFac is the most computationally and memory efficient for wide architectures and is the only method that scales to width factor to 10 on WideResNet22.}
    \label{fig:comp_time_memory}
\end{wrapfigure}

In summary, we find out the additive skip-connection is still essential for learning deep orthogonal networks. Due to the orthogonal constraints, it is helpful to increase the depth/width of the network. However, this significantly increases the memory requirement; thus, a cheap implementation (like our \SCFac) is desirable. Finally, we find that a larger kernel size and down-sampling based on average pooling is helpful, unlike the standard practices in deep networks.
\section{Conclusion}
\label{sec:conclusion}

In this paper, we presented a framework for orthogonal convolutions based on paraunitary systems. Specifically, we establish the equivalence between orthogonal convolutions in the spatial domain and paraunitary systems in the spectral domain. Therefore, any method that enforces orthogonality in convolutions is implicitly designing paraunitary systems. We further show that the orthogonality for variants of convolution (strided, dilated, group convolutions) is also fully characterized by paraunitary systems. As a result, paraunitary systems are all we need to ensure orthogonality for diverse types of convolutions. 
Based on the complete factorization of paraunitary systems, we develop the first exact and complete design for various convolution layers, ensuring both exact orthogonality and high expressiveness. Our versatile design allows us to perform a systematic study of the design principles for orthogonal convolutional networks. As a result, we show that deep orthogonal networks can scale up to deeper architectures while maintaining a Lipschitz upper bound.

\clearpage
\bibliography{\bibhome/supp_bib}
\bibliographystyle{unsrtnat}

\clearpage
\section*{Checklist}

\begin{enumerate}

\item For all authors...
\begin{enumerate}
  \item Do the main claims made in the abstract and introduction accurately reflect the paper's contributions and scope?
    \answerYes{See \Cref{sec:paraunitary,sec:framework,sec:deep-networks}.}
  \item Did you describe the limitations of your work?
    \answerYes{See \Cref{sec:paraunitary} and \Cref{sec:experiments}}
  \item Did you discuss any potential negative societal impacts of your work?
    \answerYes{See \Cref{app-sec:impact-statement}}
  \item Have you read the ethics review guidelines and ensured that your paper conforms to them?
    \answerYes{}
\end{enumerate}

\item If you are including theoretical results...
\begin{enumerate}
  \item Did you state the full set of assumptions of all theoretical results?
    \answerYes{See \Cref{app-sec:paraunitary,app-sec:framework,app-sec:deep-networks}
    }
	\item Did you include complete proofs of all theoretical results?
    \answerYes{\Cref{app-sec:paraunitary,app-sec:framework,app-sec:deep-networks}
    }
\end{enumerate}

\item If you ran experiments...
\begin{enumerate}
  \item Did you include the code, data, and instructions needed to reproduce the main experimental results (either in the supplemental material or as a URL)?
    \answerYes{Provided in \Cref{app-sec:experiments}}
  \item Did you specify all the training details (e.g., data splits, hyperparameters, how they were chosen)?
    \answerYes{Provided in \Cref{app-sec:experiments}.}
	\item Did you report error bars (e.g., with respect to the random seed after running experiments multiple times)?
    \answerYes{See \Cref{sec:experiments}}
	\item Did you include the total amount of compute and the type of resources used (e.g., type of GPUs, internal cluster, or cloud provider)?
    \answerYes{See \Cref{sec:experiments}}
\end{enumerate}

\item If you are using existing assets (e.g., code, data, models) or curating/releasing new assets...
\begin{enumerate}
  \item If your work uses existing assets, did you cite the creators?
    \answerYes{See \Cref{sec:experiments}}
  \item Did you mention the license of the assets?
    \answerNA{}
  \item Did you include any new assets either in the supplemental material or as a URL?
    \answerNA{}
  \item Did you discuss whether and how consent was obtained from people whose data you're using/curating?
   \answerNA{}
  \item Did you discuss whether the data you are using/curating contains personally identifiable information or offensive content?
    \answerNA{}
\end{enumerate}

\item If you used crowdsourcing or conducted research with human subjects...
\begin{enumerate}
  \item Did you include the full text of instructions given to participants and screenshots, if applicable?
    \answerNA{}
  \item Did you describe any potential participant risks, with links to Institutional Review Board (IRB) approvals, if applicable?
    \answerNA{}
  \item Did you include the estimated hourly wage paid to participants and the total amount spent on participant compensation?
    \answerNA{}
\end{enumerate}

\end{enumerate}

\clearpage
\appendix

{\begin{center}{\bf \Large Appendix: \mytitle}\end{center}}

\textbf{Notations.} 
We use non-bold letters for scalars (e.g., $x$) and bold letters for vectors/matrices (e.g., $\signal{x}$). We denote sequences in the spatial domain using lower-case letters (e.g., $\signalX{x}{n}$) and their spectral representations using upper-case letters (e.g., $\SignalX{X}{z}$, $\SignalX{X}{e^{\im \omega}}$). For a positive integer, say $R \in Z^{+}$, we abbreviate the set $\{0, 1, \cdots, R-1 \}$ as $\range{R}$, and whenever possible, we use its lower-case letter, say $r \in \range{R}$, as the corresponding iterator. 

\vspace{0.3em}
\textbf{Assumptions.}
For simplicity, we assume all sequences are $\mathcal{L}^2$ with range $\Z$ (a sequence $\{\signalX{x}{n}, n \in \Z\}$ is $\mathcal{L}^2$ if $\sum_{n \in \Z} \norm{\signalX{x}{n}}^2 < \infty$). Such assumption is common in the literature, which avoids boundary conditions in signal analysis. To deal with periodic sequences (finite sequences with circular padding), people can either adopt the Dirac function in the spectral domain or use discrete Fourier transform. In our implementation, however, we address the boundary condition case by case for each convolution type, with which we achieve exact orthogonality in the experiments (\Cref{sub:exact-orthogonality}). 

\section{Orthogonal Convolutions via Paraunitary Systems}
\label{app-sec:paraunitary}

In this section, we prove the convolution theorem and Parseval's theorem in the context of standard convolutional layers. 
Subsequently, we prove the paraunitary theorem which establishes the equivalence between orthogonal convolutional layers and paraunitary systems.

\subsection{Spectral Analysis of Standard Convolution Layers}
\label{app-sub:paraunitary-spectral}

\textbf{Standard convolutional layers} are the default building blocks for convolutional neural networks. One such layer consists of a filter bank with $T \times S$ filters $\filter{h} = \{\filterSubX{h}{ts}{n}, n \in \Z\}_{t \in [T], s \in [S]}$, where $S$, $T$ are the number of input and output channels respectively. The layer maps an $S$-channel input $\signal{x} = \{\signalSubX{x}{s}{i}, i \in \Z\}_{s \in [S]}$ to a $T$-channel output $\signal{y} = \{\signalSubX{y}{t}{i}, i \in \Z\}_{t \in [T]}$ according to
\begin{equation}
\label{eq:conv1d}
\signalSubX{y}{t}{i} 
= \sum_{s \in [S]} \sum_{n \in \Z} \filterSubX{h}{ts}{n} \signalSubX{x}{s}{i-n},
\end{equation}
where $i$ indexes the output location to be computed, and $n$ indexes the filter coefficients. Alternatively, we can rewrite \Cref{eq:conv1d} in matrix-vector form as
\begin{equation}
\label{eq:conv1d-matrix}
\signalX{y}{i} = \sum_{n \in \Z} \filterX{h}{n} \signalX{x}{i - n},
\end{equation}
where each $\filterX{h}{n} \in \R^{T \times S}$ is a matrix, and $\signalX{x}{n} \in \R^{S}$ (or $\signalX{y}{n} \in \R^{T}$) is a vector.

Notice that in \Cref{eq:conv1d-matrix} we group entries from {\em all channels} into a vector/matrix (e.g., from $\{\signalSubX{x}{0}{n}\}_{s \in \range{S}}$ to $\signalX{x}{n}$), different from a common notation that groups entries from {\em all locations} into a vector/matrix (e.g., from $\{\signalSubX{x}{s}{n}, n \in \Z\}$ into $\signal{x}_s$). In matrix-vector form, a convolutional layer is a convolution between two sequence of matrices/vectors.

Let us first define the {\bf \ZTLong} and {\bf Fourier transforms} in \Cref{def:z-transform} before proving the convolution theorem (\Cref{thm:conv-theorem}).

\begin{definition}[\ZTLong and Fourier transforms]
\label{def:z-transform}
For a sequence (of scalars, vectors, or matrices) $\{\signal{x}[n], h \in \Z\}$, its \ZTLong $\SignalX{X}{z}$ is defined as
\begin{equation}
\label{eq:z-transform}
\FilterX{X}{z} = \sum_{n \in \Z} \filterX{x}{n} z^{-n},
\end{equation}
where $z \in \C$ is a complex number such that the infinite sum converges. If $z$ is restricted to the unit circle $z = e^{\im \omega}$ (i.e., $|z| = 1$), the \ZTlong $\SignalX{H}{z}$ reduces to a \DTFTLong (\DTFT) $\signalX{X}{e^{\im \omega}}$. If $\omega$ is further restricted to a finite set $\omega \in \{2\pi k / N, k \in \range{N}\}$, the \DTFT $\SignalX{X}{e^{\im \omega}}$ reduces to an $N$-points \DFTlong (\DFT) $\SignalX{X}{e^{\im 2\pi k/N}}$.
\end{definition}

The celebrated {\bf convolution theorem} states that the convolution in spatial domain (\Cref{eq:conv1d-matrix}) leads to a multiplication in the spectral domain, i.e., $\SignalX{Y}{z} = \FilterX{H}{z} \SignalX{X}{z}, \forall z \in \C$.

\begin{proof}[Proof of \Cref{thm:conv-theorem}]
The proof follows directly from the definitions of standard convolution (\Cref{eq:conv1d-matrix}) and \ZTLong (\Cref{eq:z-transform}).
\begin{align}
\SignalX{Y}{z} 
& = \sum_{i \in \Z} \signalX{y}{i} z^{-i} 
\label{proof:conv-theorem-step-1} \\
& = \sum_{i \in \Z} \left( \sum_{n \in \Z} \signalX{h}{n} \signalX{x}{i - n} \right) z^{-i} 
\label{proof:conv-theorem-step-2} \\
& = \sum_{n \in \Z} \signalX{h}{n} z^{-n} \left( \sum_{i \in \Z} \signalX{x}{i - n} z^{-(i - n)} \right) 
\label{proof:conv-theorem-step-3} \\
& = \left(\sum_{n \in \Z} \signalX{h}{n} z^{-n} \right) \left(\sum_{k \in \Z} \signalX{x}{k} z^{-k} \right) 
\label{proof:conv-theorem-step-4} \\
& = \SignalX{H}{z} \SignalX{X}{z},
\label{proof:conv-theorem-step-5}
\end{align}
where \Cref{proof:conv-theorem-step-1,proof:conv-theorem-step-5} use the definition of \ZTLong, \Cref{proof:conv-theorem-step-2} uses the definition of convolution, and \Cref{proof:conv-theorem-step-4} makes a change of variable $k = i - n$.
\end{proof}

Next, we introduce the concepts of {\bf inner product} and {\bf Frobenius norm} for sequences. We then prove {\bf Parseval's theorem}, which allows us to compute the sequence norm in the spectral domain.

\begin{definition}[Inner product and norm for sequences]
\label{def:sequence-inner-prodct-and-norm}
Given two sequences $\signal{x} = \{\signalX{x}{n}, n \in \Z\}$ and $\signal{y} = \{\signalX{y}{n}, n \in \Z\}$ with $\signalX{x}{n}, \signalX{y}{n}$ having the same dimension for all $n$, the inner product of these two sequences is defined as
\begin{equation}
\label{eq:sequence-inner-product}
\Product{\signal{x}}{\signal{y}} \triangleq
\sum_{n \in \Z} \Product{\signalX{x}{n}}{\signalX{y}{n}}
\end{equation}
where $\Product{\signalX{x}{n}}{\signalX{y}{n}}$ denotes the Frobenius inner product between $\signalX{x}{n}$ and $\signalX{y}{n}$. Subsequently, we can define the Frobenius norm of a sequence using inner product as
\begin{equation}
\label{eq:sequence-norm}
\left\|\signal{x}\right\| \triangleq \sqrt{\left< \signal{x}, \signal{x} \right>}
\end{equation}
\end{definition}

\begin{theorem}[Parsavel's theorem]
\label{thm:parseval-theorem}
Given a sequence $\{\signalX{x}{n}, n \in \Z\}$, its sequence norm $\norm{\signal{x}}$ can be computed by $\FilterX{X}{e^{\im \omega}}$ in the spectral domain as
\begin{equation}
\label{eq:parseval-theorem}
\norm{\signal{x}}^2 
= \sum_{n \in \Z} \norm{\signalX{x}{n}}^2
= \frac{1}{2\pi} \int_{-\pi}^{\pi} \norm{\SignalX{X}{e^{j\omega}}}^2 d\omega,
\end{equation}
where $\norm{\FilterX{X}{e^{\im \omega}}}^2 = \FilterX{X}{e^{\im \omega}}^{\dagger} \FilterX{X}{e^{\im \omega}}$ is an inner product between two identical complex arrays.
\end{theorem}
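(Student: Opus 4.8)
The plan is to prove Parseval's theorem by reducing the statement about sequence norms to the convolution theorem (\Cref{thm:conv-theorem}) already established, combined with the well-known inversion formula for the discrete-time Fourier transform. First I would observe that the squared norm $\norm{\signal{x}}^2 = \sum_{n \in \Z} \norm{\signalX{x}{n}}^2$ is, by \Cref{def:sequence-inner-prodct-and-norm}, an inner product of the sequence with itself, so it suffices to establish the more general polarization-friendly identity
\begin{equation}
\Product{\signal{x}}{\signal{y}} = \frac{1}{2\pi} \int_{-\pi}^{\pi} \Product{\SignalX{X}{e^{\im\omega}}}{\SignalX{Y}{e^{\im\omega}}}\, d\omega
\end{equation}
and then set $\signal{y} = \signal{x}$.

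The key step I would carry out is the DTFT inversion formula: for each fixed $n$, the coefficient $\signalX{x}{n}$ can be recovered from its transform via $\signalX{x}{n} = \frac{1}{2\pi}\int_{-\pi}^{\pi} \SignalX{X}{e^{\im\omega}} e^{\im\omega n}\, d\omega$, which follows from the orthogonality relation $\frac{1}{2\pi}\int_{-\pi}^{\pi} e^{\im\omega(n-m)}\, d\omega = \delta_{nm}$. Substituting the transform $\SignalX{X}{e^{\im\omega}} = \sum_{m} \signalX{x}{m} e^{-\im\omega m}$ into the right-hand integral, expanding the Frobenius inner product $\SignalX{X}{e^{\im\omega}}^{\dagger}\SignalX{X}{e^{\im\omega}}$ as a double sum over indices $m, m'$, and then integrating term by term, the exponentials collapse under the orthogonality relation to leave only the diagonal terms $m = m'$. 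This yields $\sum_{m}\norm{\signalX{x}{m}}^2$, which is exactly $\norm{\signal{x}}^2$.

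The main obstacle is the justification of interchanging the infinite sum with the integral (Fubini/Tonelli). Since the paper's standing assumption is that all sequences are $\mathcal{L}^2$ with range $\Z$, the transform $\SignalX{X}{e^{\im\omega}}$ lies in $L^2$ of the circle, and the manipulation is legitimate; I would invoke the $\mathcal{L}^2$ assumption explicitly to guarantee absolute convergence (or, more carefully, convergence in the $L^2$ sense) so that the rearrangement of summation and integration is valid. A cleaner alternative that sidesteps some of this bookkeeping is to view the result as a special case of the convolution theorem: define the ``matched filter'' sequence $\widetilde{\signal{x}}$ by time-reversal and conjugation, so that the zeroth coefficient of the convolution $\widetilde{\signal{x}} \ast \signal{x}$ equals $\norm{\signal{x}}^2$; applying \Cref{thm:conv-theorem} gives the transform of this convolution as $\SignalX{X}{e^{\im\omega}}^{\dagger}\SignalX{X}{e^{\im\omega}}$ on the unit circle, and extracting the zeroth coefficient by the inversion formula produces the stated integral. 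Either route requires only the inversion formula plus the $\mathcal{L}^2$ assumption, so I would present the direct double-sum computation as the main argument and flag the $\mathcal{L}^2$ hypothesis as the point where convergence is secured.
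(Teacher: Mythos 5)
Your main argument --- expanding $\SignalX{X}{e^{\im\omega}}$ into its defining sum, writing the squared norm as a double sum of inner products, and collapsing it to the diagonal via the orthogonality relation $\frac{1}{2\pi}\int_{-\pi}^{\pi} e^{-\im\omega(m-n)}\,d\omega = \mathbbm{1}_{m=n}$ --- is exactly the paper's proof. Your additional care about Fubini/Tonelli under the $\mathcal{L}^2$ assumption, and the matched-filter alternative, are reasonable embellishments but do not change the route; the proposal is correct and essentially identical to the paper's argument.
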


\begin{proof}[Proof of \Cref{thm:parseval-theorem}]
The theorem follows from the definitions of convolution and \DTFT.
\begin{align}
\frac{1}{2\pi} \int_{-\pi}^{\pi} 
\Norm{\SignalX{X}{e^{j\omega}}}^2 d\omega
& = \frac{1}{2\pi} \int_{-\pi}^{\pi} \Product{\SignalX{X}{e^{j\omega}}}{ \SignalX{X}{e^{j\omega}}} d \omega
\\
& = \frac{1}{2\pi} \int_{-\pi}^{\pi} 
\Product{\sum_{n \in \Z} \signalX{x}{n} e^{-\im \omega n}}{\sum_{m \in \Z} \signalX{x}{m} e^{-\im \omega m}} d\omega 
\\
& = \sum_{n \in \Z} \sum_{m \in \Z} \Product{\signalX{x}{n}}{\signalX{x}{m}} \frac{1}{2\pi} \int_{-\pi}^{\pi}  e^{-\im\omega(m - n)} d\omega 
\label{proof:parseval-theorem-step-1} \\
& = \sum_{n \in \Z} \sum_{m \in \Z} \Product{\signalX{x}{n}}{\signalX{x}{m}} \mathbbm{1}_{m = n}
\label{proof:parseval-theorem-step-2} \\
& = \sum_{n \in \Z} \left<\signalX{x}{n}, \signalX{x}{n} \right>
= \sum_{n \in \Z} \Norm{\signalX{x}{n}}^2,
\end{align}
where \Cref{proof:parseval-theorem-step-1} is due to the bi-linearity of inner products, and \Cref{proof:parseval-theorem-step-2} makes uses of the fact that $\int_{-\pi}^{\pi} e^{-\im\omega k} d\omega = 0$ for $k \neq 0$ and $\int_{-\pi}^{\pi} e^{-\im\omega k} d\omega = \int_{-\pi}^{\pi} d\omega = 2\pi$ for $k = 0$.
\end{proof}

\subsection{Equivalence between Orthogonal Convolutions and Paraunitary Systems}
\label{app-sub:paraunitary-equivalence}

With the sequence norm introduced earlier, we formally define orthogonality for convolutional layers.

\begin{definition}[Orthogonal convolutional layer]
\label{def:orthogonal-conv1d}
A convolution layer is orthogonal if the input norm $\norm{\signal{x}}$ is equal to the output norm $\norm{\signal{y}}$ for arbitrary input $\signal{x}$, that is
\begin{equation}
\label{eq:orthogonal-conv1d}
\|\signal{y}\|
\triangleq \sqrt{\sum_{n \in \Z} \Norm{\signalX{y}{n}}^2 } 
= \sqrt{\sum_{n \in \Z} \Norm{\signalX{x}{n}}^2}
\triangleq \|\signal{x}\|
\end{equation}
where $\norm{\signal{x}}$ (or $\norm{\signal{y}}$) is defined as the squared root of $\sum_{n \in \Z} \norm{\signalX{x}{n}}^2$ (or $\sum_{n \in \Z} \norm{\signalX{y}{n}}^2$).
\end{definition}

This definition of orthogonality not only applies to standard convolutions in \Cref{eq:conv1d-matrix} but also variants of convolutions in \Cref{app-sub:framework-construction}. In this section, however, we first establish the equivalence between {\em orthogonality for standard convolutions} and {\em paraunitary systems}.

\begin{theorem}[Paraunitary theorem]
\label{thm:paraunitary-conv1d}
A standard convolutional layer (in \Cref{eq:conv1d-matrix})
is orthogonal (by \Cref{def:orthogonal-conv1d}) 
if and only if its transfer matrix $\FilterX{H}{z}$ is paraunitary, i.e.,
\begin{equation}
\label{eq:paraunitary}
\adjoint{\FilterX{H}{z}} \FilterX{H}{z} = \identity, ~ \forall |z| = 1 \Longleftrightarrow \adjoint{\FilterX{H}{e^{\im \omega}}} \FilterX{H}{e^{\im \omega}} = \identity, ~ \forall \omega \in \R.
\end{equation}
In other words, the transfer matrix $\FilterX{H}{e^{\im \omega}}$ is unitary for all frequencies $\omega$.
\end{theorem}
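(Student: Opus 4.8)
The plan is to transport the orthogonality condition of \Cref{def:orthogonal-conv1d} into the spectral domain using the two results already in hand, Parseval's theorem (\Cref{thm:parseval-theorem}) and the convolution theorem (\Cref{thm:conv-theorem}), and then reduce the resulting integral identity to a pointwise statement about the transfer matrix. Writing $\bm{M}(\omega) \triangleq \adjoint{\FilterX{H}{e^{\im\omega}}}\FilterX{H}{e^{\im\omega}} - \identity$, which is Hermitian for every $\omega$, the orthogonality requirement $\norm{\signal{y}} = \norm{\signal{x}}$ becomes, after applying Parseval to both norms and substituting $\SignalX{Y}{e^{\im\omega}} = \FilterX{H}{e^{\im\omega}}\SignalX{X}{e^{\im\omega}}$,
\begin{equation}
\int_{-\pi}^{\pi} \adjoint{\SignalX{X}{e^{\im\omega}}}\, \bm{M}(\omega)\, \SignalX{X}{e^{\im\omega}}\, d\omega = 0,
\end{equation}
which must hold for every admissible input $\signal{x}$. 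The task is then to show that this family of integral identities is equivalent to $\bm{M}(\omega) = \bm{0}$ for all $\omega$. The ``if'' direction is immediate: if $\bm{M}(\omega) \equiv \bm{0}$ then the integrand vanishes identically, the two norms agree, and the layer is orthogonal.

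For the ``only if'' direction --- the substantive part --- I would first invoke the Fourier (Plancherel / Riesz--Fischer) isomorphism between $\ell^2(\Z)$ sequences and $L^2$ functions on the circle, so that as $\signal{x}$ ranges over all admissible inputs, $\SignalX{X}{e^{\im\omega}}$ ranges over all square-integrable $\C^S$-valued functions on $[-\pi,\pi]$. This opens the door to localized test functions: for a fixed constant vector $\bm{u} \in \C^S$ and an arbitrary subinterval $I \subseteq [-\pi,\pi]$, choosing the test function $\bm{u}\,\mathbbm{1}_I(\omega)$ yields $\int_I \adjoint{\bm{u}}\,\bm{M}(\omega)\,\bm{u}\, d\omega = 0$. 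Since this holds for every interval $I$ and $\omega \mapsto \adjoint{\bm{u}}\bm{M}(\omega)\bm{u}$ is integrable, the Lebesgue differentiation theorem forces $\adjoint{\bm{u}}\bm{M}(\omega)\bm{u} = 0$ for almost every $\omega$.

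Finally I would remove the dependence of the exceptional null set on $\bm{u}$ and pass from the scalar quadratic form to the full matrix. Running the previous step over a countable dense set of vectors $\bm{u}$ and exploiting the continuity of $\bm{u} \mapsto \adjoint{\bm{u}}\bm{M}(\omega)\bm{u}$ yields a single null set off which $\adjoint{\bm{u}}\bm{M}(\omega)\bm{u} = 0$ for all $\bm{u}$; because $\bm{M}(\omega)$ is Hermitian, a standard polarization argument (evaluating at $\bm{e}_i$, $\bm{e}_i + \bm{e}_j$, and $\bm{e}_i + \im\,\bm{e}_j$) then forces $\bm{M}(\omega) = \bm{0}$ for almost every $\omega$, i.e. $\adjoint{\FilterX{H}{e^{\im\omega}}}\FilterX{H}{e^{\im\omega}} = \identity$ almost everywhere. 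For a finitely supported filter spanning $[-\underline{L},\overline{L}]$, the transfer matrix $\FilterX{H}{e^{\im\omega}}$ is a trigonometric polynomial, hence continuous, so ``almost everywhere'' upgrades to ``for all $\omega$'', which establishes the claimed equivalence.

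The main obstacle is precisely this only-if direction: converting an integral identity that holds for every test function into a pointwise matrix equality. The delicate points are ensuring the test functions really exhaust all of $L^2$ (via the Fourier isomorphism), the localization step that extracts interval-by-interval information, and the density-plus-continuity argument needed to control the $\bm{u}$-dependence of the exceptional sets before polarization can recover the whole Hermitian matrix.
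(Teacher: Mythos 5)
Your proof is correct, and your ``if'' direction coincides with the paper's (Parseval's theorem plus the convolution theorem plus pointwise unitarity). For the ``only if'' direction, however, you take a genuinely different route. The paper argues by contrapositive with an explicit counterexample: if $\adjoint{\FilterX{H}{e^{\im\omega_0}}}\FilterX{H}{e^{\im\omega_0}} \neq \identity$ at some frequency $\omega_0$, continuity of the transfer matrix gives a nearby rational frequency $2\pi k/N$ at which unitarity still fails, and a periodic sinusoidal input $\signalX{x}{n} = \myvector{u}\,e^{\im 2\pi nk/N}$ built from a norm-distorting vector $\myvector{u}$ then violates norm preservation over one period. Your argument instead stays in $L^2$: via the Plancherel isomorphism you test the integral identity against indicator-supported functions $\bm{u}\,\mathbbm{1}_I$, invoke Lebesgue differentiation to get $\adjoint{\bm{u}}\bm{M}(\omega)\bm{u}=0$ almost everywhere, and then countable density plus polarization followed by continuity of the trigonometric polynomial $\FilterX{H}{e^{\im\omega}}$ upgrades this to $\bm{M}(\omega)=\bm{0}$ for all $\omega$. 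What each approach buys: the paper's construction is short and concrete, but, as it concedes, it only treats ``the special case of periodic inputs,'' which are not the $\ell^2(\Z)$ sequences of its own orthogonality definition (\Cref{def:orthogonal-conv1d}); your test functions correspond to genuine $\ell^2$ inputs, so your proof works directly in the stated setting, at the cost of heavier measure-theoretic machinery. That machinery could in fact be lightened: since $\omega \mapsto \adjoint{\bm{u}}\bm{M}(\omega)\bm{u}$ is continuous for a finite-length filter, vanishing of its integral over every interval already forces it to vanish identically, with no need for Lebesgue differentiation or null-set bookkeeping. One shared wrinkle worth noting: both proofs use complex-valued inputs/test functions although the paper's signals are real; in both cases this is repairable because the filter is real, so $\bm{M}(-\omega) = \overline{\bm{M}(\omega)}$ and conjugate-symmetric test functions (or real parts of the sinusoids) yield the same conclusion.
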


\begin{proof}[Proof of \Cref{thm:paraunitary-conv1d}]
We first prove that a convolutional layer is orthogonal if its transfer matrix $\SignalX{H}{z}$ is paraunitary.
\begin{align}
\left\|\signal{y}\right\|^2 
& = \frac{1}{2\pi} \int_{-\pi}^{\pi} 
\Norm{\SignalX{Y}{e^{j\omega}}}^2 d\omega
\label{proof:paraunitary-step-1} \\
& = \frac{1}{2\pi} \int_{-\pi}^{\pi} 
\Norm{\FilterX{H}{e^{j\omega}} \SignalX{X}{e^{j\omega}}}^2 d\omega 
\label{proof:paraunitary-step-2} \\
& = \frac{1}{2\pi} \int_{-\pi}^{\pi} 
\Norm{\SignalX{X}{e^{j\omega}}}^2 d\omega 
\label{proof:paraunitary-step-3} \\
& = \Norm{\signal{x}}^2
\label{proof:paraunitary-step-4}
\end{align}
where \Cref{proof:paraunitary-step-1,proof:paraunitary-step-4} are due to Parseval's theorem (\Cref{thm:parseval-theorem}), \Cref{proof:paraunitary-step-1,proof:paraunitary-step-2} follows from the convolution theorem (\Cref{thm:conv-theorem}), and \Cref{proof:paraunitary-step-3} utilizes that $\FilterX{H}{e^{j\omega}}$ is unitary for all frequency $\omega$ (thus $\norm{\FilterX{H}{e^{j\omega}} \SignalX{X}{e^{j\omega}}} = \norm{\SignalX{X}{e^{j\omega}}}$ for any $\SignalX{X}{e^{j\omega}}$).

The `only if' part also holds in practice. We here prove for the special case of periodic inputs (e.g., finite inputs with circular padding). Suppose for contradiction there exists a frequency $\omega$ and $\FilterX{H}{e^{\im \omega}} $ is not unitary. Since $\FilterX{H}{e^{\im \omega}}$ is continuous (as a summation of complex sinusoids), there exists two integers $N$, $k$, such that such that $\omega \approx 2k\pi/N$ and $\FilterX{H}{e^{\im 2\pi k / N}}$ is also not unitary. As a result, there exist a complex vector $\myvector{u}$ such that $\myvector{v} = \SignalX{H}{e^{\im 2 \pi k / N}} \myvector{u}$ while $\|\myvector{v}\| \neq \|\myvector{u}\|$. Therefore, we can construct two periodic sequences $\{\signalX{x}{n}, n \in \range{N}\}$ and $\{\signalX{y}{n}, n \in \range{N}\}$ such that
\begin{equation}
\signalX{x}{n} = \myvector{u} e^{\im 2\pi n k / N} \implies \signalX{y}{n} = \myvector{v} e^{\im 2\pi n k / N}.
\end{equation}
Now the input norm $\|\signal{x}\| = \sqrt{\sum_{n \in \range{N}} \|\signalX{x}{n}\|^2} = \sqrt{N} \|\myvector{u}\|$ is not equal to the output norm $\|\signal{y}\| = \sqrt{\sum_{n \in \range{N}}  \|\signalX{y}{n}\|^2} = \sqrt{N} \|\myvector{v}\|$, i.e., the layer is not orthogonal, which leads to a contradiction.
\end{proof}
\section{A Paraunitary Framework for Orthogonal Convolutional Layers}
\label{app-sec:framework}

\begin{figure}[!htbp]
    \centering
    \includegraphics[trim={0.5cm 0.7cm 0.6cm 0.5cm},clip,width=\textwidth]{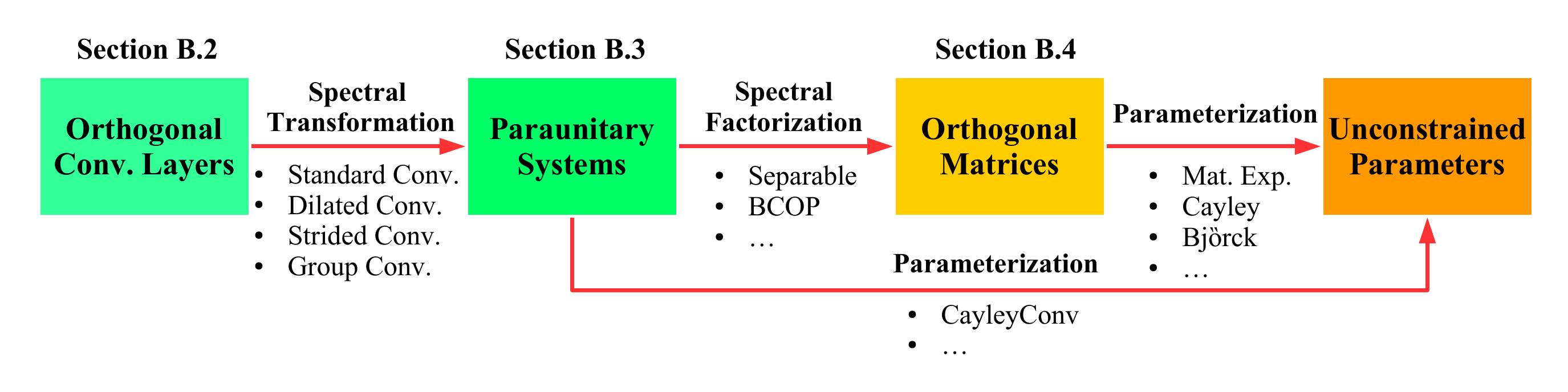}
\caption{
{\bf A framework for designing orthogonal convolutional layers.}
In \Cref{app-sub:framework-unification}, we unify variants of orthogonal convolutional layers in the spectral domain and show that their designs reduce to constructing paraunitary systems.
In \Cref{app-sub:framework-construction}, we show that a paraunitary system can be constructed with different approaches: our approach and \BCOP~\citep{li2019preventing} represent the paraunitary using orthogonal matrices, while \CayleyConv~\citep{trockman2021orthogonalizing} directly parameterizes it using unconstrained parameters.
In \Cref{app-sub:framework-parameterization}, we investigate various parameterizations for orthogonal matrices, including matrix exponential, Cayley transform, and Björck orthogonalization.
}
\label{fig:framework}
\end{figure}

\subsection{Multi-resolution Analysis}
\label{app-sub:multi-resolution}

Multi-resolution operations are essential in various convolutional layers, in particular strided and dilated convolutions. In order to define and analyze these convolutions rigorously, we first review the concepts of {\em up-sampling}, {\em down-sampling}, and {\em polyphase components}. 

\vspace{0.2em}
\textbf{(1) Up-sampling.}
Given a sequence (of scalars, vectors, matrices) $\signal{x}\!=\! \{\signalX{x}{n}, n \in \Z\}$, its {\em up-sampled sequence} $\signalSup{x}{\uparrow R} = \{\signalSupX{x}{\uparrow R}{n}, n \in \Z\}$ is defined as
\begin{equation}
\label{eq:up-sampling}
\signalSupX{x}{\uparrow R}{n} \triangleq 
\begin{cases}
\signalX{x}{n / R} & n \equiv 0 \!\!\!\!\! \pmod R \\
0 & \mathrm{otherwise}
\end{cases},
\end{equation}
where $R \in \Z^{+}$ is the up-sampling rate. Accordingly, we denote the \ZTLong of $\signalSup{x}{\uparrow R}$ as
\begin{equation}
\label{eq:up-sampling-z}
\SignalSupX{X}{\uparrow R}{z} = \sum_{n \in \Z} \signalSupX{x}{\uparrow R}{n} z^{-n}.
\end{equation}

The following proposition shows that $\SignalSupX{X}{\uparrow R}{z}$ is easily computed from $\SignalX{X}{z}$.

\begin{proposition}[Z-transform of up-sampled sequence]
\label{prop:up-sampling-z}
Given a sequence $\signal{x}$ and its up-sampled sequence $\signalSup{x}{\uparrow R}$, their {\ZTLong}s $\SignalX{X}{z}$ and $\SignalSupX{X}{\uparrow R}{z}$ are related by 
\begin{equation}
\label{eq:up-sampling-relation}
\SignalSupX{X}{\uparrow R}{z} = \SignalX{X}{z^R}.
\end{equation}
\end{proposition}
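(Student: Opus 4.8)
The plan is to substitute the definition of the up-sampled sequence directly into the definition of the \ZTLong\ and then reindex the surviving terms. First I would start from $\SignalSupX{X}{\uparrow R}{z} = \sum_{n \in \Z} \signalSupX{x}{\uparrow R}{n} z^{-n}$ as given in \Cref{eq:up-sampling-z}. The key observation is that, by the definition of up-sampling in \Cref{eq:up-sampling}, every term with $n \not\equiv 0 \pmod R$ vanishes, so the sum collapses to the contributions at $n = kR$ for $k \in \Z$.

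Next I would perform the change of index $n = kR$. On these surviving indices one has $\signalSupX{x}{\uparrow R}{kR} = \signalX{x}{k}$ and $z^{-n} = z^{-kR} = (z^R)^{-k}$, so the sum becomes $\sum_{k \in \Z} \signalX{x}{k} (z^R)^{-k}$, which is exactly the \ZTLong\ $\SignalX{X}{z}$ evaluated at the argument $z^R$, i.e.\ $\SignalX{X}{z^R}$. This chain of equalities establishes the claimed identity $\SignalSupX{X}{\uparrow R}{z} = \SignalX{X}{z^R}$.

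Since this is a routine computation, there is no genuine obstacle; the only point requiring minor care is the legitimacy of the reindexing, which is justified because (under the standing $\mathcal{L}^2$ assumption on sequences) the series converges absolutely on the annular region where the {\ZTLong}s are defined, so discarding the zero terms and relabeling the index does not affect the value of the sum. The vector- and matrix-valued cases require no separate argument, as the manipulation acts entrywise and $z^{-n}$ enters only as a scalar factor.
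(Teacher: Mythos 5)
Your proof is correct and follows essentially the same route as the paper's: substitute the definition of up-sampling into the \ZTLong, drop the vanishing terms, and reindex via $n = kR$ to recognize $\SignalX{X}{z^R}$. The extra remarks on absolute convergence and the entrywise nature of the matrix-valued case are sound but not needed beyond what the paper already assumes.
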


\begin{proof}[Proof of \Cref{prop:up-sampling-z}]
The proof makes direct use of the definition of \ZTLong.
\begin{align}
\SignalSupX{X}{\uparrow R}{z} 
& = \sum_{n \in \Z} \signalSupX{x}{\uparrow R}{n} z^{-n} 
\\
& = \sum_{m \in \Z} \signalSupX{x}{\uparrow R}{m R} z^{-m R} 
\label{proof:up-sampling-z-step} \\
& = \sum_{m \in \Z} \signalX{x}{m} (z^R)^{-m} = \SignalX{X}{z^{R}},
\end{align}
where \Cref{proof:up-sampling-z-step} makes a change of variables $m = n / R$ since $\signalSupX{x}{\uparrow R}{n} = 0, \forall n \neq m R$.
\end{proof}

\vspace{0.2em}
\textbf{(2) Down-sampling and polyphase components.} 
Different from the up-sampled sequence, there exist multiple down-sampled sequences, depending on the {\em phase} of down-sampling.
These sequences are named as the {\em polyphase components}. Specifically, given a sequence (of scalars, vectors, or matrices) $\signal{x} = \{\signalX{x}{n}, n \in \Z\}$,
its $r^{\mathrm{th}}$ polyphase component $\signalSup{x}{\polyphase{r}{R}} = \{\signalSupX{x}{\polyphase{r}{R}}{n}, n \in \Z\}$ is defined as
\begin{equation}
\label{eq:polyphase-component}
\signalSupX{x}{\polyphase{r}{R}}{n} \triangleq \signalX{x}{n R + r},
\end{equation}
where $R \in \Z^{+}$ is the down-sampling rate.
And we denote the \ZTLong of $\signalSup{x}{\polyphase{r}{R}}$ as
\begin{equation}
\SignalSupX{X}{\polyphase{r}{R}}{z} = \sum_{n \in \Z} \signalSupX{x}{\polyphase{r}{R}}{n} z^{-n},
\end{equation}
where $R$ is the down-sampling rate. Note that $r \in \Z$ is an arbitrary integer, which does not necessarily take values from $\range{R}$. In fact, we have $\signalSupX{x}{\polyphase{(r + k R)}{R}}{n} = \signalSupX{x}{\polyphase{r}{R}}{n + k}$ and  $\SignalSupX{X}{\polyphase{r + k R}{R}}{z} = z^{k} \SignalSupX{X}{\polyphase{r}{R}}{z}$. 
In the following proposition, 
we establish the relation between $\SignalX{H}{z}$ and $\{\SignalSupX{H}{\polyphase{r}{R}}{z}\}_{r \in \range{R}}$.

\begin{proposition}[Polyphase decomposition]
\label{prop:polyphase-decomposition}
Given a sequence $\signal{x}$ and its polyphase components $\signalSup{x}{\polyphase{r}{R}}$'s, the \ZTLong $\SignalX{X}{z}$ can be represented by $\{ \SignalSupX{X}{\polyphase{r}{R}}{z} \}_{r \in \range{R}}$ as 
\begin{equation}
\label{eq:polyphase-decomposition}
\SignalX{X}{z} = \sum_{r \in [R]}
\SignalSupX{X}{\polyphase{r}{R}}{z^R} z^{-r}.
\end{equation}
\end{proposition}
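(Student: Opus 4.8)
The plan is to mimic the computation used in the proof of the convolution theorem (\Cref{thm:conv-theorem}): start from the definition of the \ZTLong\ and merely reorganize the summation index. First I would partition the index set $\Z$ according to residues modulo $R$. By the division algorithm, every $n \in \Z$ can be written uniquely as $n = mR + r$ with $r \in \range{R}$ and $m \in \Z$, so the map $(r,m) \mapsto mR+r$ is a bijection between $\range{R} \times \Z$ and $\Z$. This lets me rewrite $\SignalX{X}{z} = \sum_{n \in \Z} \signalX{x}{n} z^{-n}$ as the double sum $\sum_{r \in \range{R}} \sum_{m \in \Z} \signalX{x}{mR+r}\, z^{-(mR+r)}$.

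Next I would substitute the definition of the polyphase component (\Cref{eq:polyphase-component}), namely $\signalSupX{x}{\polyphase{r}{R}}{m} = \signalX{x}{mR+r}$, and factor the power of $z$ as $z^{-(mR+r)} = (z^R)^{-m} z^{-r}$. Pulling the factor $z^{-r}$, which is independent of $m$, outside the inner sum leaves $\sum_{m \in \Z} \signalSupX{x}{\polyphase{r}{R}}{m}\, (z^R)^{-m}$, which is precisely the \ZTLong\ of $\signalSup{x}{\polyphase{r}{R}}$ evaluated at $z^R$, i.e.\ $\SignalSupX{X}{\polyphase{r}{R}}{z^R}$ by \Cref{eq:z-transform}. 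Collecting the $R$ terms then yields the claimed identity $\SignalX{X}{z} = \sum_{r \in \range{R}} \SignalSupX{X}{\polyphase{r}{R}}{z^R}\, z^{-r}$.

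The only nontrivial point is justifying the regrouping of the (in general doubly infinite) series, exactly the same subtlety that arises in \Cref{thm:conv-theorem}. Since we assume all sequences are $\mathcal{L}^2$ with range $\Z$, the series defining the \ZTLong\ converges absolutely on its region of convergence (in particular on $|z| = 1$), so the rearrangement into residue classes and the interchange of the order of summation are legitimate by absolute convergence. I expect this convergence bookkeeping to be the main (and essentially only) obstacle; the algebraic core is a single change of variable, so the proof should be as short as that of \Cref{prop:up-sampling-z}.
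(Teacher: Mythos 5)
Your proof is correct and follows essentially the same route as the paper's: a change of variables $n = mR + r$ to split the sum over residue classes, substitution of the polyphase definition, and factoring $z^{-(mR+r)} = (z^R)^{-m} z^{-r}$ to recognize the inner sum as $\SignalSupX{X}{\polyphase{r}{R}}{z^R}$. The only addition is your explicit remark on justifying the rearrangement of the series, which the paper silently takes for granted; this is harmless (though strictly speaking $\mathcal{L}^2$ gives convergence in the mean-square sense rather than absolute convergence, the regrouping into residue classes remains valid).
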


\begin{proof}[Proof of \Cref{prop:polyphase-decomposition}]
We begin with $\SignalX{X}{z}$, and try to decompose it into its polyphase components $\{\SignalSupX{X}{\polyphase{r}{R}}{z}\}_{r \in \range{R}}$.
\begin{align}
\SignalX{X}{z}
& = \sum_{n \in \Z} \signalX{x}{n} z^{-n} 
\\
& = \sum_{r \in \range{R}} \sum_{m \in \Z} \signalX{x}{m R + r} z^{-(m R + r)} \label{proof:polyphase-decomposition-step-1} \\
& = \sum_{r \in \range{R}} \left( \sum_{m \in \Z} \signalX{x}{m R + r} z^{-m R} \right) z^{-r}
\\
& = \sum_{r \in \range{R}} \left( \sum_{m \in \Z} \signalSupX{x}{\polyphase{r}{R}}{m} (z^R)^{-m} \right) z^{-r} 
\label{proof:polyphase-decomposition-step-2} \\
& = \sum_{r \in \range{R}} 
\SignalSupX{X}{\polyphase{r}{R}}{z^R} z^{-r},
\end{align}
where \Cref{proof:polyphase-decomposition-step-1} makes a change of variables $n = m R + r$, and \Cref{proof:polyphase-decomposition-step-2} is the definition of polyphase components $\signalSupX{x}{\polyphase{r}{R}}{m} = \signalX{x}{m R + r}$.
\end{proof}

For simplicity, we stack $R$ consecutive polyphase components into a \textbf{polyphase matrix} as
\begin{equation}
\label{eq:polyphase-matrices}
\SignalSupX{X}{\range{R}}{z} = 
\begin{bmatrix}
\SignalSupX{X}{\polyphase{0}{R}}{z} \\ \vdots \\
\SignalSupX{X}{\polyphase{R-1}{R}}{z}
\end{bmatrix}, \mathrm{or}~
\SignalSupX{\widetilde{X}}{\range{R}}{z} = 
\left[
\SignalSupX{X}{\polyphase{-0}{R}}{z}; \cdots; \SignalSupX{X}{\polyphase{-(R-1)}{R}}{z} \right].
\end{equation}

The following proposition extends the Parseval's theorem in \Cref{thm:parseval-theorem} and shows that the sequence norm $\norm{\signal {x}}$ can also be computed in terms of the polyphase matrix $\SignalSupX{X}{\range{R}}{z}$.

\begin{proposition}[Parseval's theorem for polyphase matrix]
Given a sequence $\signal{x}$ and its polyphase matrix $\signalSupX{X}{\range{R}}{z}$, the sequence norm $\norm{\signal{x}}$ can be computed by $\SignalSupX{X}{\range{R}}{e^{\im \omega}}$ in the spectral domain as
\label{thm:polyphase-parseval}
\begin{equation}
\label{eq:polyphase-parseval}
\Norm{\signal{x}}^2 =
\frac{1}{2\pi} \int_{-\pi}^{\pi}
\Norm{\SignalSupX{X}{\range{R}}{e^{\im \omega}}}^2 d\omega.
\end{equation}
\end{proposition}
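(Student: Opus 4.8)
The plan is to reduce the claim to the scalar/vector Parseval identity already established in \Cref{thm:parseval-theorem} by splitting the polyphase matrix into its individual polyphase components and then re-assembling the full sequence from those components. The whole argument is essentially the ordinary Parseval theorem applied $R$ times, glued together by a re-indexing of $\Z$.

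First, I would observe that the polyphase matrix $\SignalSupX{X}{\range{R}}{z}$ defined in \Cref{eq:polyphase-matrices} is simply the vertical stack of the $R$ polyphase components $\SignalSupX{X}{\polyphase{0}{R}}{z}, \ldots, \SignalSupX{X}{\polyphase{R-1}{R}}{z}$. Consequently, at each fixed frequency $\omega$ the squared Frobenius norm of the stacked object splits additively over the blocks, $\Norm{\SignalSupX{X}{\range{R}}{e^{\im \omega}}}^2 = \sum_{r \in \range{R}} \Norm{\SignalSupX{X}{\polyphase{r}{R}}{e^{\im \omega}}}^2$. Integrating over $\omega \in [-\pi, \pi]$ and interchanging the finite sum with the integral then rewrites the left-hand side of \Cref{eq:polyphase-parseval} as $\sum_{r \in \range{R}} \frac{1}{2\pi}\int_{-\pi}^{\pi} \Norm{\SignalSupX{X}{\polyphase{r}{R}}{e^{\im \omega}}}^2 d\omega$.

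Next, I would apply \Cref{thm:parseval-theorem} to each polyphase component $\signalSup{x}{\polyphase{r}{R}}$ separately; this is legitimate since each polyphase component is itself an $\mathcal{L}^2$ sequence of the same kind (of scalars, vectors, or matrices) to which the theorem applies. This gives $\frac{1}{2\pi}\int_{-\pi}^{\pi}\Norm{\SignalSupX{X}{\polyphase{r}{R}}{e^{\im \omega}}}^2 d\omega = \sum_{n \in \Z}\Norm{\signalSupX{x}{\polyphase{r}{R}}{n}}^2$, and the defining relation $\signalSupX{x}{\polyphase{r}{R}}{n} = \signalX{x}{nR + r}$ from \Cref{eq:polyphase-component} turns the spatial sum into $\sum_{n \in \Z}\Norm{\signalX{x}{nR + r}}^2$.

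Finally, I would recover the full sequence norm by summing over the phase index $r$. The one place that deserves care is the combinatorial fact that the map $(n, r) \mapsto nR + r$ with $n \in \Z$ and $r \in \range{R}$ is a bijection onto $\Z$, i.e., the residue classes $\{nR + r : n \in \Z\}$ for $r = 0, \ldots, R-1$ partition $\Z$. Hence $\sum_{r \in \range{R}}\sum_{n \in \Z}\Norm{\signalX{x}{nR+r}}^2 = \sum_{m \in \Z}\Norm{\signalX{x}{m}}^2 = \Norm{\signal{x}}^2$, matching the right-hand side of \Cref{eq:polyphase-parseval}. I expect no real obstacle: the additive block-norm split and the sum-integral interchange are routine, so the proof rests only on invoking the already-proven Parseval theorem component-wise and on the partition-of-$\Z$ re-indexing.
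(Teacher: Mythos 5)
Your proposal is correct and follows essentially the same route as the paper's own proof: both arguments reduce the claim to the ordinary Parseval theorem (\Cref{thm:parseval-theorem}) applied to each polyphase component, combined with the block-additive splitting of the stacked norm and the re-indexing of $\Z$ by residue classes $n = mR + r$. The only difference is cosmetic — you run the chain of equalities from the spectral side to the spatial side, while the paper runs it from the spatial side to the spectral side.
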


\begin{proof}[Proof of \Cref{thm:polyphase-parseval}.]
The proof follows the standard Parseval's theorem in \Cref{thm:parseval-theorem}.
\begin{align}
\Norm{\signal{x}}^2
& = \sum_{n \in \Z} \Norm{\signalX{x}{n}}^2 
\label{proof:polyphase-parseval-step-1} \\
& = \sum_{r \in [R]} \sum_{m \in \Z} 
\Norm{\signalX{x}{m R + r}}^2 
\label{proof:polyphase-parseval-step-2} \\
& = \sum_{r \in [R]} \sum_{m \in \Z} \Norm{\signalSupX{x}{\polyphase{r}{R}}{m}}^2 
\label{proof:polyphase-parseval-step-3} \\ 
& = \frac{1}{2\pi} \sum_{r \in [R]} \int_{-\pi}^{\pi} \Norm{\SignalSupX{X}{\polyphase{r}{R}}{e^{\im\omega}}}^2 d\omega
\label{proof:polyphase-parseval-step-4} \\
& = \frac{1}{2\pi} \int_{-\pi}^{\pi}
\Norm{\SignalSupX{X}{\range{R}}{e^{\im \omega}}}^2 d\omega,
\end{align}
where \Cref{proof:polyphase-parseval-step-1} follows from the definition of sequence norm (\Cref{def:sequence-inner-prodct-and-norm}),
\Cref{proof:polyphase-parseval-step-2} makes a change of variables $n = m R + r$, \Cref{proof:polyphase-parseval-step-3} is the definition of polyphase components, and \Cref{proof:polyphase-parseval-step-4} applies Parseval's theorem (\Cref{thm:parseval-theorem}) to $\signalSup{x}{\polyphase{r}{R}}$'s,
\end{proof}
\subsection{Unifying Various Convolutional Layers in the Spectral Domain}
\label{app-sub:framework-unification}

In \Cref{app-sec:paraunitary}, the convolution theorem states that a standard convolutional layer is a matrix-vector product $\SignalX{Y}{z} = \FilterX{H}{z} \SignalX{X}{z}$ in the spectral domain, and the layer is orthogonal if and only if $\FilterX{H}{z}$ is paraunitary (\Cref{thm:paraunitary-conv1d}). However, {\em the canonical convolution theorem does not hold for variants of convolutions, thus enforcing a paraunitary $\FilterX{H}{z}$ may not lead to orthogonal convolution}. In this subsection, we address this limitation by showing that various convolutions can be uniformly written as $\FilterX{\underline{Y}}{z} = \FilterX{\underline{H}}{z} \FilterX{\underline{X}}{z}$, where $\FilterX{\underline{Y}}{z}$, $\FilterX{\underline{H}}{z}$, $\FilterX{\underline{X}}{z}$ are some spectral representations of $\signal{y}$, $\filter{h}$, $\signal{x}$. Subsequently, we prove that any of these layers is orthogonal if and only if its $\FilterX{\underline{H}}{z}$ is paraunitary.

\vspace{0.3em}
\textbf{(1) Strided convolutional layers} are widely used in neural networks to adjust the feature resolution: a strided convolution layer decreases the resolution by down-sampling after a standard convolution, while a transposed convolution increases the resolution by up-sampling before a standard convolution.

Formally, a strided convolutional layer with stride $R$ (abbrev.\ as $\downarrow\!R$-strided convolution) computes its output following
\begin{equation}
\label{eq:strided-conv1d-matrix}
\signalX{y}{i} = \sum_{n \in \Z} \filterX{h}{n} \signalX{x}{R i - n}.
\end{equation}
In contrast, a transposed strided convolutional layer with stride $R$ (abbrev.\ as $\uparrow\!R$-strided convolution) computes its output according to
\begin{equation}
\label{eq:transposed-conv1d-matrix}
\signalX{y}{i} = \sum_{n \in \Z} \filterX{h}{n} \signalSupX{x}{\uparrow R}{i - n}.
\end{equation}

\begin{proposition}[Orthogonality of strided convolutional layers]
\label{prop:strided-conv1d}
For a $\downarrow\!R$-strided convolution,
the spatial convolution in \Cref{eq:strided-conv1d-matrix}
leads to the following spectral representation:
\begin{equation}
\label{eq:strided-conv1d-matrix-z}
\FilterX{Y}{z} = \FilterSupX{\widetilde{H}}{\range{R}}{z} \SignalSupX{X}{\range{R}}{z}
\end{equation}
And for an $\uparrow\!R$-strided convolution, the spatial convolution
is represented in spectral domain as:
\begin{equation}
\label{eq:transposed-conv1d-matrix-z}
\SignalSupX{Y}{\range{R}}{z} = \FilterSupX{H}{\range{R}}{z} \SignalX{X}{z}
\end{equation}
Furthermore, a $\downarrow\!R$-strided convolution is orthogonal if and only if $\FilterSupX{\widetilde{H}}{\range{R}}{z}$ is paraunitary, and an $\uparrow\!R$-strided convolution is orthogonal if and only if $\filterSupX{H}{\range{R}}{z}$ is paraunitary.
\end{proposition}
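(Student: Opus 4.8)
The plan is to dispatch this in two stages. The two spectral identities \Cref{eq:strided-conv1d-matrix-z} and \Cref{eq:transposed-conv1d-matrix-z} are purely algebraic: they follow from the definition of the \ZTLong together with the polyphase decomposition, with the convolution theorem (\Cref{thm:conv-theorem}) applied to each polyphase stream. The orthogonality characterizations then follow by comparing the input and output norms, where one norm is computed by the standard Parseval's theorem (\Cref{thm:parseval-theorem}) and the other by its polyphase extension (\Cref{thm:polyphase-parseval}); equality for all inputs forces norm-preservation of the transfer matrix at every frequency, which is exactly paraunitarity.

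For the $\downarrow\!R$-strided identity I would start from $\FilterX{Y}{z} = \sum_{i \in \Z} \signalX{y}{i} z^{-i}$ and substitute \Cref{eq:strided-conv1d-matrix}. The decisive manipulation is to split the filter index as $n = m R + r$ with $r \in \range{R}$, so that $\filterX{h}{n} = \filterSupX{h}{\polyphase{r}{R}}{m}$ and, using $\signalX{x}{R i - n} = \signalX{x}{R(i - m) - r} = \signalSupX{x}{\polyphase{-r}{R}}{i - m}$, each summand over $r$ becomes an ordinary matrix convolution between the filter polyphase component $\filterSup{h}{\polyphase{r}{R}}$ and the input polyphase component $\signalSup{x}{\polyphase{-r}{R}}$. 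Applying \Cref{thm:conv-theorem} to each stream turns the whole expression into $\sum_{r \in \range{R}} \FilterSupX{H}{\polyphase{r}{R}}{z} \SignalSupX{X}{\polyphase{-r}{R}}{z}$, which is precisely the block row-times-column product $\FilterSupX{\widetilde{H}}{\range{R}}{z}\, \SignalSupX{X}{\range{R}}{z}$ once the blocks are assembled using the tilde convention of \Cref{eq:polyphase-matrices}.

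For the transposed $\uparrow\!R$ identity I would instead compute the output polyphase components directly: $\signalSupX{y}{\polyphase{r}{R}}{i} = \signalX{y}{R i + r}$. Substituting \Cref{eq:transposed-conv1d-matrix} and using that $\signalSupX{x}{\uparrow R}{k}$ vanishes unless $k \equiv 0 \pmod{R}$ forces $n \equiv r \pmod{R}$; writing $n = m R + r$ collapses the up-sampled input to $\signalX{x}{i - m}$, so the $r$-th output stream is the plain convolution $\filterSup{h}{\polyphase{r}{R}} \ast \signal{x}$. The convolution theorem gives $\SignalSupX{Y}{\polyphase{r}{R}}{z} = \FilterSupX{H}{\polyphase{r}{R}}{z}\, \SignalX{X}{z}$, and stacking over $r \in \range{R}$ yields \Cref{eq:transposed-conv1d-matrix-z}. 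The orthogonality claims then mirror the proof of the paraunitary theorem (\Cref{thm:paraunitary-conv1d}): in the $\downarrow\!R$ case the output is an ordinary sequence so $\norm{\signal{y}}^2 = \frac{1}{2\pi}\int_{-\pi}^{\pi}\norm{\FilterX{Y}{e^{\im\omega}}}^2 d\omega$ by \Cref{thm:parseval-theorem}, while $\norm{\signal{x}}^2 = \frac{1}{2\pi}\int_{-\pi}^{\pi}\norm{\SignalSupX{X}{\range{R}}{e^{\im\omega}}}^2 d\omega$ by \Cref{thm:polyphase-parseval}; substituting \Cref{eq:strided-conv1d-matrix-z}, equality for all inputs holds iff $\FilterSupX{\widetilde{H}}{\range{R}}{e^{\im\omega}}$ preserves the norm of every vector at every $\omega$, i.e. iff it is paraunitary. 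The $\uparrow\!R$ case is symmetric with the two Parseval theorems swapped, and both ``only if'' directions reuse the periodic-input / continuity construction of \Cref{thm:paraunitary-conv1d}.

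The step I expect to be the main obstacle is the index bookkeeping in the $\downarrow\!R$ case: correctly identifying that it is the negatively-indexed polyphase components $\signalSup{x}{\polyphase{-r}{R}}$ that appear, and verifying that the assembled block $\FilterSupX{\widetilde{H}}{\range{R}}{z}$ (with the tilde reindexing in \Cref{eq:polyphase-matrices}) lines up with $\SignalSupX{X}{\range{R}}{z}$ so the product reproduces $\FilterX{Y}{z}$ exactly. A secondary subtlety is that the relevant paraunitary condition here is rectangular norm-preservation (column-orthogonality of $\FilterSupX{\widetilde{H}}{\range{R}}{e^{\im\omega}}$) rather than square unitarity, reflecting the channel change forced by striding; one must check that this is precisely the condition extracted from the Parseval comparison.
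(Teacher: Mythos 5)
Your proposal is correct and follows the same overall architecture as the paper's proof: polyphase decomposition plus the convolution theorem for the two spectral identities, then a comparison of the standard Parseval theorem (\Cref{thm:parseval-theorem}) against its polyphase version (\Cref{thm:polyphase-parseval}) for the orthogonality claims, with the ``only if'' direction recycled from the periodic-input contradiction of \Cref{thm:paraunitary-conv1d}. Two steps differ in route, and one deserves a caution. For the $\downarrow\!R$ case the paper splits the filter index as $n = mR - r$, which directly yields $\sum_{r \in \range{R}} \FilterSupX{H}{\polyphase{-r}{R}}{z} \SignalSupX{X}{\polyphase{r}{R}}{z}$, i.e.\ the negative indices land on the filter exactly as required by the block form $\FilterSupX{\widetilde{H}}{\range{R}}{z}\SignalSupX{X}{\range{R}}{z}$ in \Cref{eq:strided-conv1d-matrix-z}. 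Your split $n = mR + r$ instead produces $\sum_{r \in \range{R}} \FilterSupX{H}{\polyphase{r}{R}}{z} \SignalSupX{X}{\polyphase{-r}{R}}{z}$, which equals the same quantity but is \emph{not} term-by-term the paper's block product: you need the reindexing $r \mapsto R - r$ together with the shift identity $\SignalSupX{X}{\polyphase{r + k R}{R}}{z} = z^{k} \SignalSupX{X}{\polyphase{r}{R}}{z}$ to line the two up, so your word ``precisely'' glosses over exactly the step you later flag as the main obstacle --- the fix is easy (or simply adopt the paper's $n = mR - r$ split), but as written the assembly claim is incomplete. For the $\uparrow\!R$ case you compute the output polyphase streams directly in the spatial domain, using that $\signalSupX{x}{\uparrow R}{k}$ vanishes off multiples of $R$ to collapse each stream to a standard convolution $\SignalSupX{Y}{\polyphase{r}{R}}{z} = \FilterSupX{H}{\polyphase{r}{R}}{z}\SignalX{X}{z}$; the paper instead works spectrally, writing $\SignalX{Y}{z} = \FilterX{H}{z}\SignalX{X}{z^R}$ via \Cref{prop:up-sampling-z} and then matching polyphase decompositions by uniqueness of the \ZTLong. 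Your route is more elementary and equally valid. The Parseval comparison and the ``only if'' arguments match the paper's essentially verbatim, and your remark that paraunitarity here means rectangular column-orthogonality $\adjoint{\FilterX{\underline{H}}{z}}\FilterX{\underline{H}}{z} = \identity$ rather than square unitarity is consistent with the paper's definition in \Cref{eq:paraunitary}.
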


\begin{proof}[Proof of \Cref{prop:strided-conv1d}]
\textbf{(1a) $\downarrow\!R$-strided convolutions.} 
We first prove the spectral representation of $\downarrow\!R$-strided convolution in \Cref{eq:strided-conv1d-matrix-z}.
\begin{align}
\SignalX{Y}{z} 
& = \sum_{i \in \Z} \signalX{y}{i} z^{-i} 
  = \sum_{i \in \Z} \left( \sum_{n \in \Z} \filterX{h}{n} \signalX{x}{R i - n} \right) z^{-i}
\label{proof-1:strided-conv1d-step-1} \\
& = \sum_{i \in \Z} \left( \sum_{r \in \range{R}} \sum_{m \in \Z} \filterX{h}{m R - r} \signalX{x}{(i - m)R + r} \right) z^{-i} 
\label{proof-1:strided-conv1d-step-2} \\
& = \sum_{r \in \range{R}} \left( \sum_{m \in \Z} \filterX{h}{m R - r} z^{-m} \left(\sum_{i} \signalX{x}{(i - m) R + r} z^{-(i - m)} \right) \right) 
\\
& = \sum_{r \in \range{R}} \left( \sum_{m \in \Z} \filterX{h}{m R - r} z^{-m} \right) \left( \sum_{i^{\prime} \in \Z} \signalX{x}{i^{\prime} R + r} z^{-i^{\prime}} \right) \label{proof-1:strided-conv1d-step-3} \\
& = \sum_{r \in \range{R}} \FilterSupX{H}{\polyphase{-r}{R}}{z} \SignalSupX{X}{\polyphase{r}{R}}{z},
\label{proof-1:strided-conv1d-step-4}
\end{align}
where \Cref{proof-1:strided-conv1d-step-1} follows from the definitions of the $\downarrow \!R$-strided convolution (\Cref{eq:strided-conv1d-matrix}) and the \ZTLong (\Cref{eq:z-transform}), \Cref{proof-1:strided-conv1d-step-2} makes a change of variables $n = m R - r$, \Cref{proof-1:strided-conv1d-step-3} further changes $i^{\prime} = i - m$, and \Cref{proof-1:strided-conv1d-step-4} is due to the definition of polyphase components (\Cref{eq:polyphase-component}).
Now We rewrite the last equation concisely as
\begin{equation}
\SignalX{Y}{z}
= \underbrace{\begin{bmatrix}
\FilterSupX{H}{\polyphase{-0}{R}}{z}; 
\cdots; 
\FilterSupX{H}{-(R-1)}{z} \end{bmatrix}}_{\textstyle \FilterSupX{\widetilde{H}}{\range{R}}{z} \mathstrut} ~ 
\underbrace{\begin{bmatrix}
\SignalSupX{X}{\polyphase{0}{R}}{z} \\ 
\vdots \\
\SignalSupX{X}{\polyphase{R-1}{R}}{z} 
\end{bmatrix}}_{\textstyle \SignalSupX{X}{\range{R}}{z} \mathstrut},
\end{equation}
which is the spectral representation of $\downarrow\!R$-strided convolutions in \Cref{eq:strided-conv1d-matrix-z}.

Now we prove the orthogonality condition for $\downarrow\!R$-strided convolutions.
\begin{align}
\Norm{\signal{y}}^2
& = \frac{1}{2\pi} \int_{-\pi}^{\pi}
\Norm{\SignalX{Y}{e^{\im \omega}}}^2 d\omega
\label{proof-2:strided-conv1d-step-1} \\
& = \frac{1}{2\pi} \int_{-\pi}^{\pi}
\Norm{\FilterSupX{\widetilde{H}}{\range{R}}{e^{\im \omega}} \SignalSupX{X}{\range{R}}{e^{\im \omega}}}^2 d\omega 
\label{proof-2:strided-conv1d-step-2} \\
& = \frac{1}{2\pi} \int_{-\pi}^{\pi}
\Norm{\SignalSupX{X}{\range{R}}{e^{\im \omega}}}^2 d\omega
\label{proof-2:strided-conv1d-step-3} \\
& = \Norm{\signal{x}}^2,
\label{proof-2:strided-conv1d-step-4}
\end{align}
where \Cref{proof-2:strided-conv1d-step-1,proof-2:strided-conv1d-step-4} are due to Parseval's theorems (\Cref{thm:parseval-theorem} and \Cref{thm:polyphase-parseval}), \Cref{proof-2:strided-conv1d-step-2} follows from the spectral representation of the $\downarrow\!R$-strided convolution (\Cref{eq:strided-conv1d-matrix-z}), and \Cref{proof-2:strided-conv1d-step-3} utilizes that the transfer matrix is unitary at each frequency. The ``only if'' part can be proved by contradiction similar to \Cref{thm:paraunitary-conv1d}.

\textbf{(1b) $\uparrow\!R$-strided convolutions.} According to \Cref{prop:up-sampling-z}, the \ZTLong of $\signalSup{x}{\uparrow R}$ is $\SignalX{X}{z^R}$. Therefore, an application of the convolution theorem (\Cref{thm:conv-theorem}) on \Cref{eq:transposed-conv1d-matrix} leads us to
\begin{equation}
\SignalX{Y}{z} = \FilterX{H}{z} \SignalX{X}{z^{R}}
\end{equation}
Expanding $\SignalX{Y}{z}$ and $\FilterX{H}{z}$ using polyphase decomposition (\Cref{prop:polyphase-decomposition}), we have
\begin{align}
\sum_{r \in \range{R}} \SignalSupX{Y}{\polyphase{r}{R}}{z^R} z^{-r} & = 
\left( \sum_{r \in \range{R}} \FilterSupX{H}{\polyphase{r}{R}}{z^R} z^{-r} \right) \SignalX{X}{z^R} 
\\
\sum_{r \in \range{R}} \SignalSupX{Y}{\polyphase{r}{R}}{z^R} z^{-r} & = 
\sum_{r \in \range{R}} \left(\FilterSupX{H}{\polyphase{r}{R}}{z^R} \SignalX{X}{z^R} \right) z^{-r} 
\\
\SignalSupX{Y}{\polyphase{r}{R}}{z^R} & = \FilterSupX{H}{\polyphase{r}{R}}{z^R} \SignalX{X}{z^R}, ~\forall r \in \range{R} 
\label{proof-1:transposed-conv1d-step-1} \\
\SignalSupX{Y}{\polyphase{r}{R}}{z} &
= \FilterSupX{H}{\polyphase{r}{R}}{z} \SignalX{X}{z}, ~\forall r \in \range{R},
\label{proof-1:transposed-conv1d-step-2}
\end{align}
where \Cref{proof-1:transposed-conv1d-step-1} is due to the uniqueness of \ZTLong, and \Cref{proof-1:transposed-conv1d-step-2} changes the variables from $z^{R}$ to $z$.
Again,,we can rewrite the last equation in concisely as
\begin{equation}
\underbrace{\begin{bmatrix}
\SignalSupX{Y}{\polyphase{0}{R}}{z} \\ 
\SignalSupX{Y}{\polyphase{1}{R}}{z} \\ \vdots \\ 
\SignalSupX{Y}{\polyphase{R-1}{R}}{z}
\end{bmatrix}}_{\textstyle \SignalSupX{Y}{\range{R}}{z} \mathstrut}
= \underbrace{\begin{bmatrix}
\FilterSupX{H}{\polyphase{0}{R}}{z} \\ 
\FilterSupX{H}{\polyphase{1}{R}}{z} \\ \vdots \\ 
\FilterSupX{H}{\polyphase{R-1}{R}}{z}
\end{bmatrix}}_{\textstyle\FilterSupX{H}{\range{R}}{z} \mathstrut} \SignalX{X}{z},
\end{equation}
which is the spectral representation of $\uparrow\!R$-strided convolutions in \Cref{eq:transposed-conv1d-matrix-z}.

Lastly, we prove the orthogonality condition for $\uparrow\!R$-strided convolutions.
\begin{align}
\Norm{\signal{y}}^2
& = \frac{1}{2\pi} \int_{-\pi}^{\pi}
\Norm{\SignalSupX{Y}{\range{R}}{e^{\im \omega}}}^2 d \omega 
\label{proof-2:transposed-conv1d-step-1} \\
& = \frac{1}{2\pi} \int_{-\pi}^{\pi}
\Norm{\SignalSupX{H}{\range{R}}{e^{j\omega}} \SignalX{X}{e^{\im \omega}}}^2 d \omega 
\label{proof-2:transposed-conv1d-step-2} \\
& = \frac{1}{2\pi} \int_{-\pi}^{\pi}
\Norm{\SignalX{X}{e^{\im \omega}}}^2 d \omega 
\label{proof-2:transposed-conv1d-step-3} \\
& = \Norm{\signal{x}}^2,
\label{proof-2:transposed-conv1d-step-4}
\end{align}
where \Cref{proof-2:transposed-conv1d-step-1,proof-2:transposed-conv1d-step-4} are due to Parseval's theorems (\Cref{thm:parseval-theorem} and \Cref{thm:polyphase-parseval}), \Cref{proof-2:transposed-conv1d-step-2} follows from the spectral representation of the $\uparrow R$-strided convolution (\Cref{eq:transposed-conv1d-matrix-z}), and \Cref{proof-2:transposed-conv1d-step-3} uses the fact that the transfer matrix is unitary for each frequency. The ``only if'' part can be proved by contradiction similar to \Cref{thm:paraunitary-conv1d}.
\end{proof}

\vspace{0.3em}
\textbf{(2) Dilated convolutional layer} is proposed to increase the receptive field of a convolutional layer without extra parameters and computation. The layer up-samples its filter bank before convolution with the input. $R$-dilated convolutional layer)
computes its output with the following equation:
\begin{equation}
\label{eq:dilated-conv1d-matrix}
\signalX{y}{i} = \sum_{n \in \Z} \filterSupX{h}{\uparrow R}{n} \signalX{x}{i - n}
\end{equation}

\begin{proposition}[Orthogonality of dilated convolutional layer]
\label{prop:dilated-conv1d}
For an $R$-dilated convolution, the spatial convolution in \Cref{eq:dilated-conv1d-matrix} leads to a spectral representation as
\begin{equation}
\label{eq:dilated-conv1d-matrix-z}
\SignalX{Y}{z} = \SignalX{H}{z^R} \SignalX{X}{z},
\end{equation}
Furthermore, an $R$-dilated convolutional layer is orthogonal if and only if $\SignalX{H}{z^R}$ is paraunitary.
\end{proposition}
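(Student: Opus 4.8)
The plan is to observe that an $R$-dilated convolution is nothing more than a \emph{standard} convolution in which the filter bank $\filter{h}$ has been replaced by its up-sampled counterpart $\filterSup{h}{\uparrow R}$. Once this identification is made, both claims follow by composing three results already established in the excerpt: the convolution theorem (\Cref{thm:conv-theorem}), the $Z$-transform of an up-sampled sequence (\Cref{prop:up-sampling-z}), and the paraunitary theorem (\Cref{thm:paraunitary-conv1d}). There is essentially no heavy machinery to build; the work is in chaining these facts correctly.

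First I would prove the spectral representation. Viewing \Cref{eq:dilated-conv1d-matrix} as a standard convolution with filter $\filterSup{h}{\uparrow R}$, the convolution theorem gives $\SignalX{Y}{z} = \SignalSupX{H}{\uparrow R}{z}\,\SignalX{X}{z}$, where $\SignalSupX{H}{\uparrow R}{z}$ is the $Z$-transform of the up-sampled filter. By \Cref{prop:up-sampling-z}, this $Z$-transform equals $\SignalX{H}{z^R}$. Substituting yields $\SignalX{Y}{z} = \SignalX{H}{z^R}\,\SignalX{X}{z}$, which is exactly \Cref{eq:dilated-conv1d-matrix-z}.

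Next I would handle the orthogonality equivalence. Since the $R$-dilated layer is literally a standard convolution whose transfer matrix is $\SignalX{H}{z^R}$, the paraunitary theorem applies verbatim with $\SignalX{H}{z^R}$ in the role of the transfer matrix: the layer is orthogonal (in the sense of \Cref{def:orthogonal-conv1d}) if and only if $\SignalX{H}{z^R}$ is paraunitary. Concretely, one chains Parseval's theorem, the spectral identity just derived, and unitarity of $\SignalX{H}{e^{\im\omega R}}$ at every frequency for the ``if'' direction, and argues by contradiction (exactly as in the proof of \Cref{thm:paraunitary-conv1d}) for the ``only if'' direction.

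The statement is almost immediate, so the only point demanding a little care is the reduction step itself: one must confirm that the up-sampled filter $\filterSup{h}{\uparrow R}$ is a legitimate $\mathcal{L}^2$ filter (which it is, since up-sampling merely interleaves zeros and preserves the sum of squared norms), so that the convolution and paraunitary theorems are genuinely applicable. I do not anticipate a real obstacle here; the subtlety, if any, is purely notational --- keeping straight that dilation acts as up-sampling on the \emph{filter} (producing $\SignalX{H}{z^R}$) rather than on the input.
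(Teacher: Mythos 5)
Your proposal is correct and follows essentially the same route as the paper: identify the dilated convolution as a standard convolution with the up-sampled filter, apply \Cref{prop:up-sampling-z} to get the transfer matrix $\SignalX{H}{z^R}$, invoke the convolution theorem for the spectral identity and the ``if'' direction, and reuse the counterexample construction of \Cref{thm:paraunitary-conv1d} for the ``only if'' direction. Your extra remark about the up-sampled filter remaining $\mathcal{L}^2$ is a harmless (and valid) bit of added care that the paper leaves implicit.
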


\begin{proof}[Proof of \Cref{prop:dilated-conv1d}]
According to \Cref{prop:up-sampling-z}, the \ZTLong of $\filterSup{h}{\uparrow R}$ is $\SignalX{H}{z^{R}}$. Therefore, the ``if'' part follows directly from the convolution theorem. The ``only if'' part can be proved by constructing a counterexample similar to \Cref{thm:paraunitary-conv1d}.
\end{proof}

Notice that $\SignalX{H}{z^{R}}$ is paraunitary if and only if $\SignalX{H}{e^{\im \omega}}$ is unitary for all frequency $\omega \in \R$, which is the same as $\Signal X{H}{z}$ being paraunitary. In other words, any filter bank that is orthogonal for a standard convolution is also orthogonal for a dilated convolution and vice versa.

\vspace{0.3em}
\textbf{(3) Group convolutional layer} is proposed to reduce the parameters and computations and used in many efficient architectures, including MobileNet, ShuffleNet.
The layer divides both input/output channels into multiple groups and restricts the connections within each group.

Formally, a group convolutional layer with $G$ groups (abbrev.\ as $G$-group convolutions) is parameterized by $G$ filter banks $\{\filterSup{h}{g}\}_{g \in \range{G}}$, each consists of $(T/G) \times (S/G)$ filters. The layer maps an $S$ channels input $\signal{x}$ to a $T$ channels output $\signal{y}$ according to
\begin{equation}
\label{eq:group-conv1d-matrix}
\signalX{y}{i} = \sum_{n \in \Z} \Blkdiag{\{\filterSupX{h}{g}{n}\}_{g \in \range{G}}} \signalX{x}{i - n},
\end{equation}
where $\blkdiag{\{\cdot\}}$ computes a block diagonal  matrix from a set of matrices.

\begin{proposition}[Orthogonality of group convolutional layer]
\label{prop:group-conv1d}
For a $G$-group convolution, the spatial convolution in \Cref{eq:group-conv1d-matrix} leads a spectral representation as
, their {\ZTlong}s satisfy
\begin{equation}
\label{eq:group-conv1d-matrix-z}
\SignalX{Y}{z} = \Blkdiag{\{\FilterSupX{H}{g}{z}\}_{g \in \range{G}}} \SignalX{X}{z},
\end{equation}
Furthermore, a $G$-group convolutional layer is orthogonal if and only if the block diagonal  matrix is paraunitary, i.e., each $\FilterSupX{h}{g}{z}$ is paraunitary.
\end{proposition}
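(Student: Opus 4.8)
The plan is to observe that a $G$-group convolution is nothing but a standard convolution whose combined filter matrix $\filterX{h}{n}$ already carries the block-diagonal structure $\Blkdiag{\{\filterSupX{h}{g}{n}\}_{g \in \range{G}}}$ appearing in \Cref{eq:group-conv1d-matrix}. This reduction lets me invoke the two results already established for standard convolutions, namely the convolution theorem (\Cref{thm:conv-theorem}) and the paraunitary theorem (\Cref{thm:paraunitary-conv1d}), essentially verbatim.

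First I would derive the spectral representation. Applying the convolution theorem to \Cref{eq:group-conv1d-matrix} immediately yields $\SignalX{Y}{z} = \FilterX{H}{z} \SignalX{X}{z}$ with $\FilterX{H}{z} = \sum_{n \in \Z} \Blkdiag{\{\filterSupX{h}{g}{n}\}_{g \in \range{G}}} z^{-n}$. The only point to check is that the \ZTLong\ commutes with the block-diagonal assembly: since both the summation over $n$ and the placement of matrices into diagonal blocks are linear operations, I can push the sum inside each block to obtain $\FilterX{H}{z} = \Blkdiag{\{\sum_{n \in \Z} \filterSupX{h}{g}{n} z^{-n}\}_{g \in \range{G}}} = \Blkdiag{\{\FilterSupX{H}{g}{z}\}_{g \in \range{G}}}$. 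This establishes \Cref{eq:group-conv1d-matrix-z}.

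Next, for the orthogonality characterization, I would apply the paraunitary theorem (\Cref{thm:paraunitary-conv1d}) to this standard convolution: it is orthogonal if and only if its transfer matrix $\FilterX{H}{z} = \Blkdiag{\{\FilterSupX{H}{g}{z}\}_{g \in \range{G}}}$ is paraunitary. It then remains to show that a block-diagonal transfer matrix is paraunitary precisely when every block is. Fixing $z = e^{\im \omega}$ and writing $\mymatrix{M} = \Blkdiag{\{\FilterSupX{H}{g}{e^{\im \omega}}\}_{g \in \range{G}}}$, the off-diagonal products vanish, so $\adjoint{\mymatrix{M}} \mymatrix{M} = \Blkdiag{\{\adjoint{\FilterSupX{H}{g}{e^{\im \omega}}} \FilterSupX{H}{g}{e^{\im \omega}}\}_{g \in \range{G}}}$ (this holds even when the blocks are rectangular $(T/G) \times (S/G)$ matrices). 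Hence $\adjoint{\mymatrix{M}} \mymatrix{M} = \identity$ holds if and only if each $\adjoint{\FilterSupX{H}{g}{e^{\im \omega}}} \FilterSupX{H}{g}{e^{\im \omega}} = \identity$; requiring this for every $\omega \in \R$ shows that the whole system is paraunitary if and only if each $\FilterSupX{H}{g}{z}$ is paraunitary.

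I expect no substantive obstacle here: the real content is simply recognizing the group convolution as a block-diagonal instance of a standard convolution, after which \Cref{thm:conv-theorem} and \Cref{thm:paraunitary-conv1d} carry the argument. The single elementary lemma needed is that a block-diagonal matrix is (column-)unitary exactly when each of its diagonal blocks is, which follows directly from the block structure of $\adjoint{\mymatrix{M}} \mymatrix{M}$.
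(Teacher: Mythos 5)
Your proposal is correct and follows essentially the same route as the paper: both reduce the group convolution to a standard convolution with block-diagonal filters, show that the \ZTLong{} preserves the block-diagonal structure, and then observe that a block-diagonal transfer matrix is paraunitary precisely when every diagonal block is. Your explicit remark that the block identity $\adjoint{\mymatrix{M}}\mymatrix{M} = \Blkdiag{\{\adjoint{\FilterSupX{H}{g}{e^{\im\omega}}}\FilterSupX{H}{g}{e^{\im\omega}}\}}$ holds even for rectangular $(T/G)\times(S/G)$ blocks is a small clarification the paper leaves implicit, but it does not change the argument.
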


\begin{proof}[Proof of \Cref{prop:group-conv1d}]
Due to the convolution theorem, it suffices to prove that the \ZTLong of a sequence of block diagonal matrices is also block diagonal in the spectral domain.
\begin{align}
\sum_{n \in \Z} 
\underbrace{\begin{bmatrix}
\filterSupX{h}{0}{n} & & \\
& \ddots & \\
& & \filterSupX{h}{G-1}{n} 
\end{bmatrix}}_{\textstyle \Blkdiag{\{\filterSupX{h}{g}{n}\}_{g \in \range{G}}} \mathstrut}
z^{-n} & = 
\begin{bmatrix}
\sum_{n \in \Z} \filterSupX{h}{0}{n} z^{-n} & & \\
& \ddots & \\
& & \sum_{n \in \Z} \filterSupX{h}{G-1}{n} z^{-n}
\end{bmatrix} \\
& = \underbrace{\begin{bmatrix}
\FilterSupX{H}{0}{z} & & \\
& \ddots & \\
& & \FilterSupX{H}{G-1}{z} 
\end{bmatrix}}_{\textstyle \Blkdiag{\{\FilterSupX{H}{g}{z}\}_{g \in \range{G}}} \mathstrut}.
\end{align}
As a result, we can write the orthogonality condition as 
\begin{align}
& \begin{bmatrix}
\FilterSupX{H}{0}{z} & & \\
& \ddots & \\
& & \FilterSupX{h}{G-1}{z} 
\end{bmatrix}^{\dagger}
\begin{bmatrix}
\FilterSupX{H}{0}{z} & & \\
& \ddots & \\
& & \FilterSupX{H}{G-1}{z} 
\end{bmatrix} 
\nonumber \\
= ~ & 
\begin{bmatrix}
\FilterSupX{H}{0}{z}^{\dagger} \FilterSupX{H}{0}{z} & & \\
& \ddots & \\
& & \FilterSupX{H}{G-1}{z}^{\dagger} \FilterSupX{H}{G-1}{z} 
\end{bmatrix} = \identity, ~\forall |z| = 1.
\end{align}
The equation implies $\FilterSupX{H}{g}{z}^{\dagger} \FilterSupX{H}{g}{z} = \identity, \forall |z| = 1, \forall g \in \range{G}$, i.e., each $\FilterSupX{H}{g}{z}$ is paraunitary.
\end{proof}
\subsection{Realizations of Paraunitary Systems}
\label{app-sub:framework-construction}

In this subsection, we first prove that all finite-length 1D-paraunitary systems can be represented in a factorized form. Next, we show how we can construct MD-paraunitary systems using 1D systems. Lastly, we study the relationship of existing approaches to paraunitary systems.

\subsubsection{Complete Factorization of 1D-Paraunitary Systems}
\label{app-sub2:framework-construction-factorization}

The classic theorem for spectral factorization of paraunitary systems is traditionally developed for causal systems~\citep{vaidyanathan1993multirate,kautsky1994matrix}.
Given a causal paraunitary system of length $L$ (i.e., polynomial in $z^{-1}$), there always exists a factorization such that 
\begin{equation}
\label{eq:paraunitary-factorization-causal}
\FilterX{H}{\myvector{z}} = \mathbf{Q} \FilterX{V}{z^{-1}; \matrixSup{U}{1}} \cdots \FilterX{V}{z^{-1}; \matrixSup{U}{L-1}},
\end{equation}
where $\mymatrix{Q}$ is an orthogonal matrix, $\matrixSup{U}{\ell}$ is a column-orthogonal matrix, and $\FilterX{V}{z; \mymatrix{U}}$ is defined as
\begin{equation}
\label{eq:v-block}
\FilterX{V}{z; \mymatrix{U}} = (\identity - \mymatrix{U} \mymatrix{U}^{\top}) + \mymatrix{U} \mymatrix{U}^{\top} z.
\end{equation}
In \Cref{thm:paraunitary-factorization}, we extends this theorem from causal systems to finite-length (but non-causal) ones.

\begin{theorem}[Complete factorization for 1D-paraunitary systems]
\label{thm:paraunitary-factorization}
Suppose that a paraunitary system $\FilterX{H}{z}$ is finite-length, i.e., it can be written as $\sum_{n} \filterX{h}{n} z^{-n}$ for some sequence $\{\filterX{h}{n}, n \in [-\underline{L}, \overline{L}]\}$, then it can be factorized in the following form:
\begin{equation}
\label{eq:paraunitary-factorization}
\FilterX{H}{\myvector{z}} = \FilterX{V}{z; \matrixSup{U}{-\underline{L}}} \cdots \FilterX{V}{z; \matrixSup{U}{-1}} \mathbf{Q} \FilterX{V}{z^{-1}; \matrixSup{U}{1}} \cdots \FilterX{V}{z^{-1}; \matrixSup{U}{\overline{L}}},
\end{equation}
where $\mymatrix{Q}$ is an orthogonal matrix, $\matrixSup{U}{\ell}$ is a column-orthogonal matrix, and $\FilterX{V}{z; \mymatrix{U}}$ is defined in \Cref{eq:v-block}. Consequently, the paraunitary system $\FilterX{H}{z}$ is parameterized by $\underline{L} + \overline{L} + 1$ (column-)orthogonal matrices $\mymatrix{Q}$ and $\matrixSup{U}{\ell}$'s. 
\end{theorem}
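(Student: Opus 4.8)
The plan is to reduce the non-causal, finite-length case to the classical causal factorization in \Cref{eq:paraunitary-factorization-causal} by peeling off one anti-causal factor $\FilterX{V}{z;\mymatrix{U}}$ at a time from the left, each time lowering the top degree in $z$ without disturbing the bottom degree. The whole argument rests on three ingredients that I would establish first. (i) Each building block is itself paraunitary with an explicit inverse: a direct expansion on $|z|=1$ gives $\FilterX{V}{z;\mymatrix{U}}\FilterX{V}{z^{-1};\mymatrix{U}} = \identity$, using that $\mymatrix{U}\mymatrix{U}^{\top}$ is a projector. (ii) Paraunitarity constrains the filter coefficients: writing $\FilterX{H}{e^{\im\omega}} = \sum_{n} \filterX{h}{n} e^{-\im\omega n}$ and matching Fourier modes in $\adjoint{\FilterX{H}{e^{\im\omega}}}\FilterX{H}{e^{\im\omega}} = \identity$ yields $\sum_{n} \adjoint{\filterX{h}{n}}\filterX{h}{n+j} = \delta_{j,0}\identity$ for every shift $j$. (iii) Taking the extreme shift $j = \underline{L}+\overline{L}$ leaves a single surviving term and gives the orthogonality relation $\adjoint{\filterX{h}{\overline{L}}}\filterX{h}{-\underline{L}} = 0$; that is, the column space of the top coefficient $\filterX{h}{-\underline{L}}$ (the coefficient of $z^{\underline{L}}$) is orthogonal to that of the bottom coefficient $\filterX{h}{\overline{L}}$ (the coefficient of $z^{-\overline{L}}$).

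With these in hand, the key step is the extraction lemma. Assuming $\underline{L}>0$, let $\mymatrix{U}$ be any column-orthogonal matrix whose columns form an orthonormal basis of the column space of $\filterX{h}{-\underline{L}}$, so that $P := \mymatrix{U}\mymatrix{U}^{\top}$ satisfies $(\identity - P)\filterX{h}{-\underline{L}} = 0$ by construction and $P\filterX{h}{\overline{L}} = 0$ by ingredient (iii). I would set $\FilterX{H_1}{z} := \FilterX{V}{z^{-1};\mymatrix{U}}\FilterX{H}{z}$ and read off its coefficients: since $\FilterX{V}{z^{-1};\mymatrix{U}} = (\identity - P) + P z^{-1}$ (from \Cref{eq:v-block}), the coefficient of $z^{\underline{L}}$ in $\FilterX{H_1}{z}$ is $(\identity - P)\filterX{h}{-\underline{L}} = 0$, while the coefficient of $z^{-\overline{L}-1}$ is $P\filterX{h}{\overline{L}} = 0$. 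Hence $\FilterX{H_1}{z}$ is finite-length with span $[-(\underline{L}-1),\overline{L}]$, and it is paraunitary as a product of two paraunitary systems; multiplying on the left by the inverse from (i) gives $\FilterX{H}{z} = \FilterX{V}{z;\mymatrix{U}}\FilterX{H_1}{z}$ with the extracted factor in the required $\FilterX{V}{z;\cdot}$ form.

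Iterating the extraction $\underline{L}$ times produces $\FilterX{H}{z} = \FilterX{V}{z;\matrixSup{U}{-\underline{L}}}\cdots\FilterX{V}{z;\matrixSup{U}{-1}}\,\FilterX{H_{\underline{L}}}{z}$, where $\FilterX{H_{\underline{L}}}{z}$ is a causal ($[0,\overline{L}]$) paraunitary system. Applying the classical causal factorization \Cref{eq:paraunitary-factorization-causal} to $\FilterX{H_{\underline{L}}}{z}$ writes it as $\mymatrix{Q}\,\FilterX{V}{z^{-1};\matrixSup{U}{1}}\cdots\FilterX{V}{z^{-1};\matrixSup{U}{\overline{L}}}$, and substituting yields exactly \Cref{eq:paraunitary-factorization}; when an extraction happens to drop the top degree by more than one, the missing factors are filled in with trivial blocks $\mymatrix{U}=\mathbf{0}$ (for which $\FilterX{V}{z;\mathbf{0}} = \identity$), so the stated count of $\underline{L}+\overline{L}+1$ (column-)orthogonal matrices is exact. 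The converse is immediate, since every $\FilterX{V}{z;\cdot}$ factor and the constant $\mymatrix{Q}$ are paraunitary and paraunitarity is preserved under products. I expect the main obstacle to be the two-sided bookkeeping inside the extraction lemma --- guaranteeing that removing a factor from the top does not lengthen the bottom --- which is precisely what the extreme-shift relation $\adjoint{\filterX{h}{\overline{L}}}\filterX{h}{-\underline{L}} = 0$ resolves; without it, the naive degree-reduction argument that works for causal systems would break down in the non-causal setting.
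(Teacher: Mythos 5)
Your proof is correct, and it reduces to the causal case by a genuinely different mechanism than the paper's. Both arguments invoke the classical causal factorization (\Cref{eq:paraunitary-factorization-causal}) as a black box; the difference lies in how the anti-causal part is produced. The paper shifts the whole system --- writing $\FilterX{H}{z} = z^{\underline{L}}\FilterX{\hat{H}}{z}$ with $\FilterX{\hat{H}}{z}$ causal --- factorizes $\FilterX{\hat{H}}{z}$, and then pushes the monomial $z^{\underline{L}}$ back through the factored form using the commutation identity $z\,\mymatrix{Q}\,\FilterX{V}{z^{-1}; \mymatrix{\hat{U}}} = \FilterX{V}{z; \mymatrix{Q}\mymatrix{\bar{U}}}\,\mymatrix{Q}$, where $\mymatrix{\bar{U}}$ spans the orthogonal complement of the range of $\mymatrix{\hat{U}}$; each power of $z$ converts one causal factor to the right of $\mymatrix{Q}$ into an anti-causal factor on its left, and the coefficients $\filterX{h}{n}$ are never examined. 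You instead pre-process the system itself: you peel off anti-causal blocks one degree at a time, using the extreme-shift relation $\adjoint{\filterX{h}{\overline{L}}}\filterX{h}{-\underline{L}} = \zero$ to certify that annihilating the top degree does not lengthen the bottom degree --- exactly the obstruction you identify, and your coefficient checks $(\identity - P)\filterX{h}{-\underline{L}} = \zero$ and $P\filterX{h}{\overline{L}} = \zero$ are right. What each buys: the paper's route is shorter, a purely algebraic post-processing of an already-factored form; yours is constructive (the extracted projectors are explicitly the column spaces of the leading coefficients of the successively reduced systems) and is the same degree-reduction machinery by which the causal theorem is itself proved, so your argument could in principle be made self-contained end to end. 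Two small repairs: a literal $\mymatrix{U} = \mathbf{0}$ is not a column-orthogonal matrix, so your padding blocks should be phrased as rank-zero projectors (empty $\mymatrix{U}$ with $\mymatrix{U}\mymatrix{U}^{\top} = \zero$), the same convention the classical theorem implicitly needs when the degree is smaller than the length; and in your ingredient (i) you should note that $\adjoint{\FilterX{V}{z; \mymatrix{U}}} = \FilterX{V}{z^{-1}; \mymatrix{U}}$ on $|z| = 1$ (since the projector is real symmetric), so that the displayed product identity really is paraunitarity of the block rather than merely invertibility.
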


\begin{proof}[Proof for \Cref{thm:paraunitary-factorization}]
Given a non-causal paraunitary system $\FilterX{H}{z}$, we can always find a causal counterpart $\FilterX{\hat{H}}{z}$ such that $\FilterX{H}{z} = z^{\underline{L}} \FilterX{\hat{H}}{z}$ (This can be done by shifting the causal system backward by $\underline{L}$ steps, which is equivalent to multiplying $z^{\underline{L}}$ in the spectral domain). Since the causal system $\FilterX{\hat{H}}{z}$ admits a factorization in \Cref{eq:paraunitary-factorization}, we can write the non-causal system $\FilterX{H}{z}$ as
\begin{equation}
\label{proof:paraunitary-factorization-step-1}
\FilterX{H}{z} = z^{\underline{L}} \mymatrix{Q} \FilterX{V}{z^{-1}; \matrixSup{\hat{U}}{1}} \cdots \FilterX{V}{z^{-1}; \matrixSup{\hat{U}}{\underline{L} + \overline{L}}}.
\end{equation}
Therefore, it suffices to show that for an orthogonal matrix $\mymatrix{Q}$ and any column-orthogonal matrix $\mymatrix{\hat{U}}$, we can always find another column-orthogonal matrix $\mymatrix{U}$ such that
\begin{equation}
\label{proof:paraunitary-factorization-step-2}
z \mymatrix{Q} \FilterX{V}{z^{-1}; \mymatrix{\hat{U}}} = \FilterX{V}{z; \mymatrix{U}} \mymatrix{Q}.
\end{equation}
If the equation above is true, we can set $\matrixSup{U}{\ell} = \matrixSup{\hat{U}}{\ell - 1 - \underline{L}}$ for $\ell < 0$ and $\matrixSup{U}{\ell} = \matrixSup{\hat{U}}{\ell - \underline{L}}$ for $\ell > 0$, which will convert \Cref{proof:paraunitary-factorization-step-1} into \Cref{eq:paraunitary-factorization}.

Now we start to prove \Cref{proof:paraunitary-factorization-step-2}. Note that any column-orthogonal $\mymatrix{\hat{U}}$ has a complement $\mymatrix{\bar{U}}$ such that $[\mymatrix{\hat{U}}, \mymatrix{\bar{U}}]$ is orthogonal and $\identity = \mymatrix{\hat{U}} \mymatrix{\hat{U}}^{\top} + \mymatrix{\bar{U}} \mymatrix{\bar{U}}^{\top}$. We then rewrite \Cref{proof:paraunitary-factorization-step-2} as
\begin{align}
z \mymatrix{Q} \FilterX{V}{z^{-1}; \mymatrix{\hat{U}}} & = 
z \mymatrix{Q} (\identity - \mymatrix{\hat{U}} \mymatrix{\hat{U}}^{\top} + \mymatrix{\hat{U}} \mymatrix{\hat{U}}^{\top} z^{-1}) 
\\
& = \mymatrix{Q} (\identity - \mymatrix{\bar{U}} \mymatrix{\bar{U}}^{\top} + \mymatrix{\bar{U}} \mymatrix{\bar{U}}^{\top} z) 
\\
& = (\identity - \mymatrix{Q} \mymatrix{\bar{U}} \mymatrix{\bar{U}}^{\top} \mymatrix{Q}^{\top} + \mymatrix{Q} \mymatrix{\bar{U}} \mymatrix{\bar{U}}^{\top} \mymatrix{Q}^{\top} z) \mymatrix{Q} 
\\
& = (\identity - \mymatrix{U} \mymatrix{U}^{\top} + \mymatrix{U} \mymatrix{U}^{\top} z) \mymatrix{Q} 
\label{proof:paraunitary-factorization-step-3} \\
& = \FilterX{V}{z; \mymatrix{U}} \mymatrix{Q},
\end{align}
where in \Cref{proof:paraunitary-factorization-step-3} we set $\mymatrix{U} = \mymatrix{Q} \mymatrix{\bar{U}}$. 
This completes the proof.
\end{proof}

\subsubsection{Multi-dimensional (MD) Paraunitary Systems}
\label{app-sub2:framework-construction-md}

If the data are multi-dimensional (MD), we will need MD-convolutional layers in neural networks. Analogously, we can prove the equivalence between orthogonal MD-convolutions in the spatial domain and MD-paraunitary systems in the spectral domain, i.e.,
\begin{equation}
\FilterX{H}{\myvector{z}}^{\dagger} \FilterX{H}{\myvector{z}} = \identity, \myvector{z} = (z_1, \cdots, z_D), |z_d| = 1, \forall d \in \range{D},
\end{equation}
where $D$ is the data dimension. In this work, we adopt a parameterization based on separable systems.

\begin{definition}[Separable MD-paraunitary system]
\label{def:seperable-paraunitary-md}
A MD-paraunitary system $\FilterX{H}{\myvector{z}}$ is separable if there exists $D$ 1D-paraunitary systems $\FilterX{H_1}{z_1}, \cdots, \FilterX{H_D}{z_D}$ such that 
\begin{equation}
\label{eq:paraunitary-separable}
\FilterX{H}{\myvector{z}} = \FilterX{H}{z_1, \cdots, z_D}
\triangleq \FilterX{H_1}{z_1} \cdots \FilterX{H_D}{z_D}.
\end{equation}
\end{definition}
Therefore, we can construct an MD-paraunitary system with $D$ number of 1D-paraunitary systems, each of which is represented in \Cref{eq:paraunitary-factorization}.
Notice that {\em not} all MD-paraunitary systems are separable, thus the parameterization in \Cref{eq:paraunitary-separable} is {\em not} complete (see \Cref{sec:related} for a discussion). However, we can guarantee that our parameterization realizes all separable MD-paraunitary systems --- each separable paraunitary system admits a factorization in \Cref{eq:paraunitary-separable}, where each 1D-system admits a factorization in \Cref{eq:paraunitary-factorization}.

\subsubsection{Interpretations of Previous Approaches}
\label{app-sub2:framework-construction-interpretation}

In \Cref{thm:paraunitary-conv1d}, we have shown that a paraunitary transfer matrix is both necessary and sufficient for a convolution to be orthogonal.
Therefore, we can interpret existing approaches for orthogonal convolutions, including {\em \SVCMLONG (\SVCM)}~\citep{sedghi2018singular}, {\em \BCOPLONG (\BCOP)}~\citep{li2019preventing}, {\em \CayleyConvLONG (\CayleyConv)}~\citep{trockman2021orthogonalizing}, as implicit constructions of paraunitary systems.

\vspace{0.3em}
{\bf(1) \SVCMLong (\SVCM)~
\citep{sedghi2018singular}} clips all singular values of $\FilterX{H}{e^{\im \omega}}$ to ones for all frequencies $\omega$ after gradient update. Since the clipping step can arbitrarily enlarge the filter length, \SVCM subsequently masks out the coefficients outside the filter length. However, the masking step breaks the orthogonality, as we have seen in the experiments (\Cref{sub:exact-orthogonality}).

\vspace{0.3em}
{\bf(2) \BCOPLong (\BCOP)~\citep{li2019preventing}} tries to generalize the spectral factorization of 1D-paraunitary systems in \Cref{eq:paraunitary-factorization} to the 2D paraunitary systems.
\begin{equation}
\label{eq:paraunitary-bcop}
\FilterX{H}{z_1, z_2} = z_1^{\frac{K - 1}{2}} z_2^{\frac{K - 1}{2}}
\mymatrix{Q} \FilterX{V}{z_1, z_2; \mymatrix{U}_1^{(1)}, \mymatrix{U}_2^{(1)}} \cdots \FilterX{V}{z_1, z_2; \mymatrix{U}_1^{(K-1)}, \mymatrix{U}_2^{(K-1)}},
\end{equation}
where $\FilterX{V}{z_1, z_2; \mymatrix{U}_1^{(\ell)}, \mymatrix{U}_2^{(\ell)}} = \FilterX{V}{z_1; \mymatrix{U}_1^{(\ell)}} \FilterX{V}{z_2; \mymatrix{U}_2^{(\ell)}}$. In other words, this approach makes each $V$-block, instead of the whole paraunitary system, separable. It is known that the factorization in \Cref{eq:paraunitary-bcop} is not complete for 2D-paraunitary system~\citep{lin1996theory}).

\vspace{0.3em}
{\bf(3) \CayleyConvLong (\CayleyConv)}~\citep{trockman2021orthogonalizing} to generalizes the Cayley transform for orthogonal matrices (\Cref{eq:orthogonal-cayley}) to 2D-paraunitary systems $\FilterX{H}{z_1,z_2}$:
\begin{equation}
\label{eq:paraunitary-cayley-conv}
\FilterX{H}{z_1, z_2} = \left(\identity - \FilterX{A}{z_1, z_2} \right) \left(\identity + \FilterX{A}{z_1, z_2} \right)^{-1},
\end{equation}
where $\FilterX{A}{z_1, z_2}$ is a skew-symmetric matrix for any $z_1, z_2$. Since a matrix with singular value $-1$ cannot be realized by Cayley transform, this parameterization is incomplete. Furthermore, the approach requires matrix inversion at each frequency, which is expensive in practice.

\vspace{0.3em}
{\bf(4) Orthogonal regularization~\cite{wang2019orthogonal,qi2020deep}} is developed to encourage orthogonality in convolutional layers.  
We will show that this regularization is equivalent to a unitary regularization of the paraunitary system with uniform weights on all frequencies:
\begin{equation}
\label{eq:paraunitary-regularization}
\sum_{i \in \Z} \Norm{\sum_{n \in \Z} \signalX{h}{n}
\signalX{h}{n - R i}^{\top} - \signalX{\bm{\delta}}{i}}^2
= \frac{1}{2\pi} \int_{-\pi}^{\pi}
\Norm{\adjoint{\FilterSupX{H}{\range{R}}{e^{\im \omega}}} \FilterSupX{H}{\range{R}}{e^{\im \omega}} - \identity}^2 d \omega,
\end{equation}
where $\filterX{\bm{\delta}}{0} = \identity$ is an identity matrix, $\filterX{\bm{\delta}}{n} = \zero$ is a zero matrix for $n \neq 0$. 
We prove the equivalence more generally in \Cref{prop:parseval-regularization}.
However, this approach cannot enforce exact orthogonality and in practice requires hyperparameter search for a proper regularizer coefficient.

\begin{proposition}[Parseval's theorem for ridge regularization]
\label{prop:parseval-regularization}
Given a sequence of matrices $\filter{h} = \{\filterX{h}{n}, n \in \Z\}$, the following four expressions are equivalent:
\begin{subequations}
\begin{gather}
\frac{1}{2\pi} \int_{-\pi}^{\pi} \Norm{ \adjoint{\FilterSupX{H}{\range{R}}{e^{\im \omega}}} \FilterSupX{H}{\range{R}}{e^{\im \omega}}  - \identity}^2 d \omega,
\label{eq:parseval-regularization-1} \\
\sum_{i \in \Z} \Norm{\sum_{n \in \Z} \signalX{h}{n}^{\top} \signalX{h}{n - R i} - \signalX{\bm{\delta}}{i}}^2,
\label{eq:parseval-regularization-2} \\
\frac{1}{2\pi} \int_{-\pi}^{\pi} \Norm{ \FilterSupX{H}{\range{R}}{e^{\im \omega}} \adjoint{\FilterSupX{H}{\range{R}}{e^{\im \omega}}} - \identity}^2 d \omega,
\label{eq:parseval-regularization-3} \\
\sum_{i \in \Z} \Norm{\sum_{n \in \Z} \signalX{h}{n} \signalX{h}{n - R i}^{\top} - \signalX{\bm{\delta}}{i}}^2,
\label{eq:parseval-regularization-4}
\end{gather}
\end{subequations}
where $\norm{\cdot}$ denotes the Frobenius norm of a matrix.
\end{proposition}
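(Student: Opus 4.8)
The plan is to recognize each of the four expressions as one side of a Parseval identity applied to a suitable matrix-valued autocorrelation sequence, so that the whole statement reduces to (i) two direct applications of Parseval's theorem and (ii) a single bridge that exchanges a Gram form for a co-Gram form.

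First I would identify the sequence whose spectrum appears in \Cref{eq:parseval-regularization-1}. Expanding the column-stacked polyphase matrix gives $\adjoint{\FilterSupX{H}{\range{R}}{e^{\im\omega}}}\FilterSupX{H}{\range{R}}{e^{\im\omega}} = \sum_{r\in\range{R}}\adjoint{\FilterSupX{H}{\polyphase{r}{R}}{e^{\im\omega}}}\FilterSupX{H}{\polyphase{r}{R}}{e^{\im\omega}}$, and substituting the $z$-transforms of the polyphase components and collecting equal powers of $e^{-\im\omega}$ shows that this matrix, minus $\identity$, is exactly the discrete-time Fourier transform of the matrix sequence $\signalX{g}{i}=\sum_{n\in\Z}\signalX{h}{n}^{\top}\signalX{h}{n-Ri}-\signalX{\bm{\delta}}{i}$. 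The cross-phase terms between distinct polyphase components cancel and only lags that are multiples of $R$ survive, which is the source of the $n-Ri$ offset. Applying the Frobenius form of Parseval's theorem (\Cref{thm:parseval-theorem}) to $\{\signalX{g}{i}\}$ then equates \Cref{eq:parseval-regularization-1} with \Cref{eq:parseval-regularization-2}.

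Next I would repeat the same computation with the conjugation order reversed. Using the row-stacked polyphase matrix $\FilterSupX{\widetilde{H}}{\range{R}}{e^{\im\omega}}$, the phase factors attached to its blocks cancel in the product $\FilterSupX{\widetilde{H}}{\range{R}}{e^{\im\omega}}\adjoint{\FilterSupX{\widetilde{H}}{\range{R}}{e^{\im\omega}}}$, and an identical collection-of-terms argument shows that this matrix minus $\identity$ is the Fourier transform of $\signalX{g'}{i}=\sum_{n\in\Z}\signalX{h}{n}\signalX{h}{n-Ri}^{\top}-\signalX{\bm{\delta}}{i}$. Parseval's theorem again equates \Cref{eq:parseval-regularization-3} with \Cref{eq:parseval-regularization-4}. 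At this stage the two ``integral $=$ sum'' equivalences are established purely mechanically.

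The remaining and genuinely delicate step is the bridge between the two pairs, i.e.\ showing that the Gram form \Cref{eq:parseval-regularization-1} and the co-Gram form \Cref{eq:parseval-regularization-3} integrate to the same value. The natural tool is the singular value decomposition at each frequency: $\adjoint{M}M$ and $M\adjoint{M}$ share their nonzero spectrum, so for a square (or otherwise dimension-matched) transfer matrix one has $\Norm{\adjoint{M}M-\identity}^2 = \sum_k(\sigma_k^2-1)^2 = \Norm{M\adjoint{M}-\identity}^2$ frequency-by-frequency, and integrating over $\omega$ closes the claim. I expect this bridge to be the main obstacle, because for a rectangular transfer matrix the Gram and co-Gram live in different dimensions and their distances to the identity differ by the mismatch in the number of zero singular values; making the argument airtight therefore requires either restricting to the square/paraunitary-target regime relevant to the regularizer or tracking these dimension counts explicitly, and this is where the real care is needed rather than in the routine Parseval bookkeeping.
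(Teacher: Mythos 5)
Your proposal follows essentially the same route as the paper's proof. For the two ``integral $=$ sum'' pairs, the paper does exactly what you describe: it expands $\adjoint{\FilterSupX{H}{\range{R}}{e^{\im \omega}}} \FilterSupX{H}{\range{R}}{e^{\im \omega}}$ over polyphase components, changes variables $n = Rm + r$ so that only lags that are multiples of $R$ survive, identifies the result minus $\identity$ as the discrete-time Fourier transform of the autocorrelation sequence $\sum_{n} \filterX{h}{n}^{\top}\filterX{h}{n-Ri} - \signalX{\bm{\delta}}{i}$, and applies \Cref{thm:parseval-theorem}; the pair \Cref{eq:parseval-regularization-3,eq:parseval-regularization-4} is handled by the same computation. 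The only divergence is the bridge \Cref{eq:parseval-regularization-1} $\Leftrightarrow$ \Cref{eq:parseval-regularization-3}: you invoke the shared nonzero singular spectrum of $M^{\dagger}M$ and $MM^{\dagger}$ at each frequency, whereas the paper expands both Frobenius norms as traces and uses cyclicity, $\mathrm{tr}(AB) = \mathrm{tr}(BA)$. These are interchangeable proofs of the same pointwise identity, so nothing substantive hinges on that choice.

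The dimension caveat you flag in the bridge is genuine, and it applies to the paper's own proof as well: the paper's trace expansion ends in a ``$+\,\identity$'' term that should really be $\mathrm{tr}(\identity)$, and for a rectangular polyphase matrix the identity matrices appearing in \Cref{eq:parseval-regularization-1} and \Cref{eq:parseval-regularization-3} have different sizes, so the two integrals differ by exactly that dimension gap. A concrete failure of the stated equivalence: take $R = 1$, one input channel, $T > 1$ output channels, and $\filterX{h}{n}$ supported at $n = 0$ with $\filterX{h}{0} = \myvector{u}$ a unit-norm column vector in $\R^{T}$; then \Cref{eq:parseval-regularization-2} evaluates to $0$ while \Cref{eq:parseval-regularization-4} evaluates to $T - 1$. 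So the proposition holds as stated only in the dimension-matched regime, which the paper assumes silently; your explicit tracking of this point is the more careful treatment, and your proof is correct once that restriction is made.
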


\begin{proof}[Proof of \Cref{prop:parseval-regularization}]
We first prove the equivalence between \Cref{eq:parseval-regularization-1,eq:parseval-regularization-3}.
\begin{align}
& \Norm{\adjoint{\FilterSupX{H}{\range{R}}{e^{\im \omega}}} \FilterSupX{H}{\range{R}}{e^{\im \omega}} - \identity}^2
\nonumber \\
= ~ & \Tr{\left(\adjoint{\FilterSupX{H}{\range{R}}{e^{\im \omega}}} \FilterSupX{H}{\range{R}}{e^{\im \omega}} - \identity \right)^{\dagger} \left(\adjoint{\FilterSupX{H}{\range{R}}{e^{\im \omega}}} \FilterSupX{H}{\range{R}}{e^{\im \omega}} - \identity \right)}
\label{proof-1:parseval-regularization-step-1} \\
= ~ & \Tr{\adjoint{\FilterSupX{H}{\range{R}}{e^{\im \omega}}} \FilterSupX{H}{\range{R}}{e^{\im \omega}}\adjoint{\FilterSupX{H}{\range{R}}{e^{\im \omega}}} \FilterSupX{H}{\range{R}}{e^{\im \omega}}} - 2 \Tr{\adjoint{\FilterSupX{H}{\range{R}}{e^{\im \omega}}} \FilterSupX{H}{\range{R}}{e^{\im \omega}}} + \identity
\label{proof-1:parseval-regularization-step-2} \\
= ~ & \Tr{ \FilterSupX{H}{\range{R}}{e^{\im \omega}}\adjoint{\FilterSupX{H}{\range{R}}{e^{\im \omega}}} \FilterSupX{H}{\range{R}}{e^{\im \omega}}\adjoint{\FilterSupX{H}{\range{R}}{e^{\im \omega}}}} - 2 \Tr{ \FilterSupX{H}{\range{R}}{e^{\im \omega}}\adjoint{\FilterSupX{H}{\range{R}}{e^{\im \omega}}}} + \identity
\label{proof-1:parseval-regularization-step-3} \\
= ~ & \Tr{\left( \FilterSupX{H}{\range{R}}{e^{\im \omega}}\adjoint{\FilterSupX{H}{\range{R}}{e^{\im \omega}}} - \identity \right)^{\dagger} \left(\FilterSupX{H}{\range{R}}{e^{\im \omega}}\adjoint{\FilterSupX{H}{\range{R}}{e^{\im \omega}}}  - \identity \right)}
\label{proof-1:parseval-regularization-step-4} \\
= ~ & \Norm{\FilterSupX{H}{\range{R}}{e^{\im \omega}}\adjoint{\FilterSupX{H}{\range{R}}{e^{\im \omega}}} - \identity}^2,
\label{proof-1:parseval-regularization-step-5}
\end{align}
where \Cref{proof-1:parseval-regularization-step-1,proof-1:parseval-regularization-step-5} make use of $\norm{\mymatrix{A}}^2 = \tr{\mymatrix{A}^{\dagger} \mymatrix{A}}$, \Cref{proof-1:parseval-regularization-step-2,proof-1:parseval-regularization-step-4} are due to the linearity of $\tr{\cdot}$, and \Cref{proof-1:parseval-regularization-step-3} utilizes $\tr{\mymatrix{A}\mymatrix{B}} = \tr{\mymatrix{B} \mymatrix{A}}$.

Next, we prove the equivalence between \Cref{eq:parseval-regularization-1,eq:parseval-regularization-2}.
\begin{align}
& \adjoint{\FilterSupX{H}{\range{R}}{e^{\im \omega}}} \FilterSupX{H}{\range{R}}{e^{\im \omega}} - \identity =
\sum_{r \in \range{R}} \adjoint{\FilterSupX{H}{\polyphase{r}{R}}{e^{\im \omega}}} \FilterSupX{H}{\polyphase{r}{R}}{e^{\im \omega}} - \identity
\label{proof-2:parseval-regularization-step-1} \\
= ~ & \sum_{r \in \range{R}} \sum_{i \in \Z} \left(\sum_{m \in \Z} \signalSupX{h}{\polyphase{r}{R}}{m}^{\top}  \signalSupX{h}{\polyphase{r}{R}}{m - i} - \signalX{\delta}{i} \right) e^{-\im \omega i} 
\label{proof-2:parseval-regularization-step-2} \\
= ~ & \sum_{r \in \range{R}} \sum_{i \in \Z} \left(\sum_{m \in \Z} \signalX{h}{R m + r}^{\top} \signalX{h}{R(m - i) + r} - \signalX{\delta}{i} \right) e^{-\im \omega i} 
\label{proof-2:parseval-regularization-step-3} \\
= ~ & \sum_{i \in \Z} \left( \sum_{r \in \range{R}} \sum_{m \in \Z} \signalX{h}{R m + r}^{\top} \signalX{h}{R m + r - R i}  - \signalX{\delta}{i} \right) e^{-\im \omega i}
\label{proof-2:parseval-regularization-step-4} \\
= ~ & \sum_{i \in \Z} \left( \sum_{n \in \Z} \signalX{h}{n}^{\top} \signalX{h}{n - R i} - \signalX{\delta}{i} \right) e^{-\im \omega i},
\label{proof-2:parseval-regularization-step-5}
\end{align}
where \Cref{proof-2:parseval-regularization-step-1} follows from the definition of polyphase matrix in \Cref{eq:polyphase-matrices}. \Cref{proof-2:parseval-regularization-step-2} uses a number of properties of Fourier transform: a Hermitian in the spectral domain is a transposed reflection in the spatial domain, a frequency-wise multiplication in the spectral domain is a convolution in the spatial domain, and an identical mapping in the spectral domain is an impulse sequence in the spatial domain. \Cref{proof-2:parseval-regularization-step-3} follows from the definition of polyphase components in \Cref{eq:polyphase-component}, and \Cref{proof-2:parseval-regularization-step-5} makes a change of variables $n = R m + r$.  In summary, we show that the LHS (denoted $\FilterX{D}{e^{\im \omega}}$) is a Fourier transform of the RHS (denoted as $\filterX{d}{i}$):  
\begin{equation}
\underbrace{\adjoint{\FilterSupX{H}{\range{R}}{e^{\im \omega}}} \FilterSupX{H}{\range{R}}{e^{\im \omega}} - \identity}_{\textstyle  \FilterX{D}{e^{\im \omega}} \mathstrut} =
\sum_{i \in \Z} \underbrace{\left(\sum_{n \in \Z} \signalX{h}{n}^{\top} \signalX{h}{n - R i} - \signalX{\delta}{n}\right)}_{\textstyle \filterX{d}{i} \mathstrut} e^{-\im \omega i}.
\end{equation}
Applying Parseval's theorem (\Cref{thm:parseval-theorem}) to the sequence $\filter{d} = \{\filterX{d}{i}, i \in \Z\}$, we have
\begin{equation}
\frac{1}{2\pi} \int_{-\pi}^{\pi} 
\Norm{\adjoint{\FilterSupX{H}{\range{R}}{e^{\im \omega}}} \FilterSupX{H}{\range{R}}{e^{\im \omega}}  - \identity}^2 d\omega
= \sum_{i \in \Z} \Norm{\sum_{n \in \Z} \signalX{h}{n}^{\top}
\signalX{h}{n - R i} - \signalX{\bm{\delta}}{i}}^2,
\end{equation}
which proves the equivalence between \Cref{eq:parseval-regularization-1,eq:parseval-regularization-2}. With almost identical arguments, we can prove the equivalence between \Cref{eq:parseval-regularization-3,eq:parseval-regularization-4}, that is
\begin{equation}
\frac{1}{2\pi} \int_{-\pi}^{\pi}
\Norm{\FilterSupX{H}{\range{R}}{e^{\im \omega}} 
\adjoint{\FilterSupX{H}{\range{R}}{e^{\im \omega}}} - \identity}^2 d\omega
= \sum_{i \in \Z} \Norm{\sum_{n \in \Z} \signalX{h}{n}
\signalX{h}{n - R i}^{\top} - \signalX{\bm{\delta}}{i}}^2,
\end{equation}
which completes the proof. 
\end{proof}
\subsection{Constrained Optimization over Orthogonal Matrices}
\label{app-sub:framework-parameterization}

In \Cref{thm:paraunitary-factorization}, we have shown how orthogonal matrices completely represent a paraunitary system. 
Our remaining goal, therefore, is to parameterize orthogonal matrices using unconstrained parameters.
In this part, we analyze three common approaches, {\em Björck orthogonalization}, {\em Cayley transform}, and {\em Lie exponential map}, and show that only Lie exponential map provides an exact and complete characterization of all orthogonal matrices.

\vspace{0.3em}
{\bf Björck orthogonalization~\citep{anil2019sorting,li2019preventing}.}
The algorithm was first introduced by Björck~\citep{bjorck1971iterative}. Given an initial matrix $\mymatrix{U}_{0}$, this algorithm finds the closest orthogonal matrix via an {\em iterative} process:
\begin{equation}
\label{eq:orthogonal-bjorck}
\mymatrix{U}_{k + 1} = \mymatrix{U}_{k} \left(\identity + \frac{1}{2} \mymatrix{P}_{k} + \cdots + (-1)^p \begin{pmatrix} -\frac{1}{2} \\ p \end{pmatrix} \mymatrix{P}^p_k \right), \forall k \in \range{K},
\end{equation}
where $K$ is the iterative steps, $\mymatrix{P}_k = \identity - \mymatrix{U}_k^{\top} \mymatrix{U}_k$, and $p$ controls the trade-off between efficiency and accuracy at each step.
When the algorithm is used for parameterization, it maps an unconstrained matrix $\mymatrix{U}_0$ to an approximately orthogonal matrix $\mymatrix{U}_{K}$ in $K$ steps. Although Björck parameterization is complete (since any orthogonal matrix $\mymatrix{Q}$ can be represented by $\mymatrix{U}_{0} = \mymatrix{Q}$), it is not exact due to the iterative approximation.

\vspace{0.3em}
{\bf Cayley transform~\citep{helfrich2018orthogonal,maduranga2019complex}.}
The transform provides a {\em bijective} parameterization of orthogonal matrices {\em without $-1$ eigenvalue} with skew-symmetric matrices (i.e., $\mymatrix{A}^{\top} = -\mymatrix{A}$)
\begin{equation}
\label{eq:orthogonal-cayley}
\mymatrix{U} = (\identity - \mymatrix{A}) (\identity + \mymatrix{A})^{-1},
\end{equation}
where the skew-symmetric matrix $\mymatrix{A}$ is represented by its upper-triangle entries. Since the orthogonal matrices with $-1$ eigenvalue are out of consideration, the Cayley parameterization is incomplete. Furthermore, the matrix inversion becomes unstable when there exists an eigenvalue close to $-1$.

\vspace{0.3em}
{\bf Lie Exponential Map~\citep{lezcano2019cheap,lezcano2019trivializations}.}
The Lie exponential map provides a {\em surjective} parameterization of all orthogonal matrices with skew-symmetry matrices (i.e., $\mymatrix{A}^{\top} = -\mymatrix{A}$).
\begin{equation}
\label{eq:orthogonal-matrix-exponential-appendix}
\mymatrix{U} = \exp(\mymatrix{A}) \triangleq \sum_{k = 0}^{\infty} \frac{\mymatrix{A}^{k}}{k!} = \identity + \mymatrix{A} + \frac{1}{2} \mymatrix{A}^2 +  \cdots, 
\end{equation}
where the infinite sum can be computed exactly up to machine-precision~\citep{higham2009scaling}.
\citet{lezcano2019cheap} further addresses the problem of gradient computation, thus makes Lie exponential map a complete and exact parameterization of orthogonal matrices.  

In principle, we can use any of these approach to parameterize the orthogonal matrices. In our implementation, we choose Lie exponential map due to its completeness and exactness.
\section{Learning Deep Orthogonal Networks with Lipschitz Bounds}
\label{app-sec:deep-networks}

In this section, we provide the proof for \Cref{prop:lipschitz-resnet}, which exhibits two approaches to construct Lipschitz residual blocks. Furthermore, we will prove in \Cref{prop:paraunitary-initialization} when a paraunitary system (represented in \Cref{thm:paraunitary-factorization}) reduces to an orthogonal matrix. The reduction allows us to apply the initialization methods for orthogonal matrices to paraunitary systems.

In \Cref{prop:lipschitz-resnet}, we prove the Lipschitzness of two types of residual blocks, one based on additive skip-connection and another based on concatenative one (See \Cref{fig:residual-blocks} for illustration).

\begin{proof}[Proof for \Cref{prop:lipschitz-resnet}]
To begin with, we prove the Lipschitzness for the additive residual block $f$ in \Cref{eq:resnet-block}.
Let $\signal{x}$, $\signal{x}^{\prime}$ be two inputs to $f$ and $f(\signal{x})$, $f(\signal{x}^{\prime})$ be their respective outputs,  we have
\begin{align}
\Norm{f(\signal{x}^{\prime}) - f(\signal{x})}
& = \Norm{\left(\alpha f^1(\signal{x}^{\prime}) + (1 - \alpha) f^2(\signal{x}^{\prime}) \right) - \left(\alpha f^1(\signal{x}) + (1 - \alpha) f^2(\signal{x}) \right)} 
\\
& = \Norm{\alpha \left(f^1(\signal{x}^{\prime}) - f^1(\signal{x}) \right) + (1 - \alpha) \left(f^2(\signal{x}^{\prime}) - f^2(\signal{x}) \right)}
\\
& \leq \alpha \Norm{f^1(\signal{x}^{\prime}) - f^1(\signal{x})} + (1 - \alpha) \Norm{ f^2(\signal{x}^{\prime}) - f^2(\signal{x})}
\label{proof:lipschitz-resnet-step-1} 
\\
& \leq \alpha L \Norm{\signal{x}^{\prime} - \signal{x}} + (1 - \alpha) L \Norm{ \signal{x}^{\prime} - \signal{x}}
\label{proof:lipschitz-resnet-step-2} 
\\
& = L \norm{\signal{x}^{\prime} - \signal{x}},
\end{align}
where \Cref{proof:lipschitz-resnet-step-1} makes uses of the triangle inequality, and \Cref{proof:lipschitz-resnet-step-2} is due to the $L$-Lipschitzness of both $f^1$, $f^2$. Therefore, we have shown that $\norm{f(\signal{x}^{\prime}) - f(\signal{x})} \leq L \norm{\signal{x}^{\prime} - \signal{x}}$.

Similarly, we can prove the Lipschitzness for the concatenative residual block $g$ in \Cref{eq:shufflenet-block}.
Let $\signal{x}$, $\signal{x}^{\prime}$ be two inputs to $g$ and $g(\signal{x})$, $g(\signal{x}^{\prime})$ be their respective outputs, we have
\begin{align}
\Norm{g(\signal{x}^{\prime}) - g(\signal{x})}^2
& = \Norm{\mymatrix{P} \left([g^1({\signalSup{x}{1}}^{\prime}); g^2({\signalSup{x}{2}}^{\prime})] - [g^1(\signalSup{x}{1}); g^2(\signalSup{x}{2})] \right)}
\\
& = \Norm{[g^1(\signal{x}^{\prime}); g^2(\signal{x}^{\prime})] - [g^1(\signal{x}); g^2(\signal{x})]}^2
\label{proof:lipschitz-shufflenet-step-1} \\
& = \Norm{g^1({\signalSup{x}{1}}^{\prime}) - g^1(\signal{x})}^2 + \Norm{ g^2({\signalSup{x}{2}}^{\prime}) - g^2(\signalSup{x}{2})}^2
\\
& \leq L^2 \Norm{{\signalSup{x}{1}}^{\prime} - \signalSup{x}{1}}^2 + L^2 \Norm {{\signalSup{x}{2}}^{\prime} - \signalSup{x}{2}}^2 
\label{proof:lipschitz-shufflenet-step-2} \\
& = L^2 \Norm{[{\signalSup{x}{1}}^{\prime}; {\signalSup{x}{2}}^{\prime}] - [{\signalSup{x}{1}}; {\signalSup{x}{2}}]}^2 
\\
& = L^2 \Norm{\signal{x}^{\prime} - \signal{x}}^2,
\end{align}
where \Cref{proof:lipschitz-shufflenet-step-1} utilizes $\norm{\mymatrix{P} \signal{x}} = \norm{\signal{x}}, \forall \signal{x}$, 
and \Cref{proof:lipschitz-shufflenet-step-2} is due to the $L$-Lipschitzness of $g^1$, $g^2$. The equations above implies that $\norm{g(\signal{x}^{\prime}) - g(\signal{x})} \leq L \norm{\signal{x}^{\prime} - \signal{x}}$.
\end{proof}

\vspace{0.3em}
\begin{proposition}[Initialization of a paraunitary matrix]
\label{prop:paraunitary-initialization}
Suppose a paraunitary system $\FilterX{H}{z}$ takes the complete factorization in \Cref{eq:paraunitary-factorization}, and if we further assume $\underline{L} = \overline{L}$ with $\matrixSup{U}{-\ell} = \mymatrix{Q} \matrixSup{U}{\ell}$ for all $\ell$, then the paraunitary matrix $\FilterX{H}{z}$ reduces to an orthogonal matrix $\mymatrix{Q}$,
\begin{equation}
\label{eq:paraunitary-initialization}
\FilterX{H}{z} = \FilterX{V}{z; \matrixSup{U}{-\underline{L}}} \cdots \FilterX{V}{z; \matrixSup{U}{-1}} \mymatrix{Q} \FilterX{V}{z^{-1}; \matrixSup{U}{1}} \cdots \FilterX{V}{z^{-1}; \matrixSup{U}{\overline{L}}} = \mymatrix{Q}.
\end{equation}
\end{proposition}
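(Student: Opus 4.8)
The plan is to collapse the factorization in \Cref{eq:paraunitary-initialization} from the center outward, reducing the entire claim to a single pairing identity. Under the hypotheses $\underline{L}=\overline{L}$ (call this common value $L$) and $\matrixSup{U}{-\ell}=\mymatrix{Q}\matrixSup{U}{\ell}$, the product is symmetric about the central factor $\mymatrix{Q}$: the innermost pair of $V$-blocks are $\FilterX{V}{z;\matrixSup{U}{-1}}$ (immediately left of $\mymatrix{Q}$) and $\FilterX{V}{z^{-1};\matrixSup{U}{1}}$ (immediately right of $\mymatrix{Q}$). The key fact I would establish first is the one-block collapse identity: for each $\ell$ with $\matrixSup{U}{-\ell}=\mymatrix{Q}\matrixSup{U}{\ell}$,
\begin{equation}
\FilterX{V}{z;\matrixSup{U}{-\ell}}\,\mymatrix{Q}\,\FilterX{V}{z^{-1};\matrixSup{U}{\ell}}=\mymatrix{Q}.
\end{equation}
Granting this, the proposition follows by a short induction on $k=1,\dots,L$: the innermost pair collapses (identity with $\ell=1$) to leave $\mymatrix{Q}$ in the middle, the next pair then sees exactly the configuration $\FilterX{V}{z;\matrixSup{U}{-2}}\,\mymatrix{Q}\,\FilterX{V}{z^{-1};\matrixSup{U}{2}}$, and so on until all $L$ pairs are consumed and only $\mymatrix{Q}$ remains.

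To prove the one-block identity I would work with the orthogonal projection $\mymatrix{P}=\matrixSup{U}{\ell}{\matrixSup{U}{\ell}}^{\top}$, which satisfies $\mymatrix{P}^2=\mymatrix{P}=\mymatrix{P}^{\top}$ because $\matrixSup{U}{\ell}$ is column-orthogonal; set $\mymatrix{P}'=\mymatrix{Q}\mymatrix{P}\mymatrix{Q}^{\top}$, again an orthogonal projection, and note $\matrixSup{U}{-\ell}{\matrixSup{U}{-\ell}}^{\top}=\mymatrix{Q}\mymatrix{P}\mymatrix{Q}^{\top}=\mymatrix{P}'$, so that $\FilterX{V}{z;\matrixSup{U}{-\ell}}=(\identity-\mymatrix{P}')+\mymatrix{P}'z$. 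The crux is the commutation $\mymatrix{Q}\mymatrix{P}=\mymatrix{P}'\mymatrix{Q}$ (immediate from $\mymatrix{Q}^{\top}\mymatrix{Q}=\identity$), which lets me slide $\mymatrix{Q}$ rightward through the right-hand factor:
\begin{equation}
\mymatrix{Q}\,\FilterX{V}{z^{-1};\matrixSup{U}{\ell}}=\bigl[(\identity-\mymatrix{P}')+\mymatrix{P}'z^{-1}\bigr]\mymatrix{Q}.
\end{equation}
After this slide the two remaining factors share the \emph{same} projection $\mymatrix{P}'$ with conjugate spectral variables $z$ and $z^{-1}$, and since $(\identity-\mymatrix{P}')$ and $\mymatrix{P}'$ are complementary orthogonal projections their cross terms vanish, giving $\bigl[(\identity-\mymatrix{P}')+\mymatrix{P}'z\bigr]\bigl[(\identity-\mymatrix{P}')+\mymatrix{P}'z^{-1}\bigr]=\identity$; the leftover $\mymatrix{Q}$ establishes the identity.

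The main obstacle, such as it is, lies in the bookkeeping that makes the $\mymatrix{Q}$ slide clean: one must use the hypothesis $\matrixSup{U}{-\ell}=\mymatrix{Q}\matrixSup{U}{\ell}$ precisely to match the left factor's projection to $\mymatrix{P}'=\mymatrix{Q}\mymatrix{P}\mymatrix{Q}^{\top}$, so that after conjugating $\mymatrix{Q}$ through the right factor both $V$-blocks are built from the identical projection $\mymatrix{P}'$ and therefore telescope. Everything else --- the idempotency and symmetry of the projections, the vanishing of the cross terms, and the outer induction --- is routine projection algebra, and no analytic or convergence issues arise since the factors are matrix Laurent polynomials in $z$, so the identity holds for all $z$.
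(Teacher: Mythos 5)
Your proposal is correct and follows essentially the same route as the paper: both reduce the claim to the single-pair collapse identity $\FilterX{V}{z; \matrixSup{U}{-\ell}} \mymatrix{Q} \FilterX{V}{z^{-1}; \matrixSup{U}{\ell}} = \mymatrix{Q}$ and then collapse the product recursively from the center outward. The only difference lies in how that identity is verified --- the paper expands the product into Laurent coefficients $\filterX{c}{-1} z + \filterX{c}{0} + \filterX{c}{1} z^{-1}$ and checks $\filterX{c}{-1} = \filterX{c}{1} = \zero$, $\filterX{c}{0} = \mymatrix{Q}$ directly from $\mymatrix{L} = \mymatrix{Q}\mymatrix{R}$ and column-orthogonality, whereas you conjugate the projection through $\mymatrix{Q}$ and invoke the complementary-projection telescoping $\left[(\identity - \mymatrix{P}') + \mymatrix{P}' z\right]\left[(\identity - \mymatrix{P}') + \mymatrix{P}' z^{-1}\right] = \identity$; these are equivalent elementary computations, with yours making slightly more explicit the conceptual reason the pair cancels.
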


\begin{figure}[!htbp]
\centering
    \begin{subfigure}[b]{0.22\textwidth}
    \centering
        \includegraphics[trim={0.6cm 0.8cm 0.6cm 0.9cm },clip,width=\textwidth]{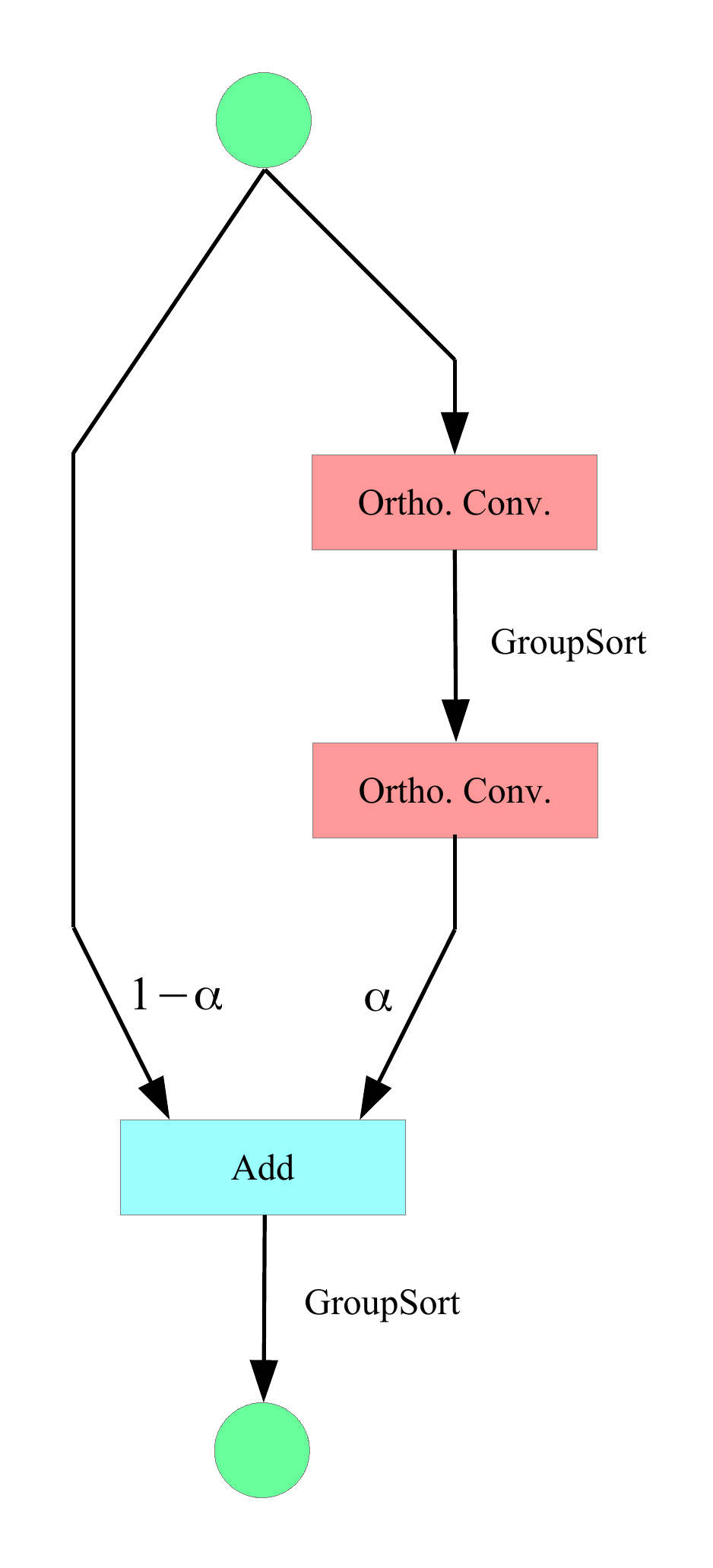}
        \caption{Basic additive block.}
        \label{subfig:resnet}
    \end{subfigure}
    \hfill
    \begin{subfigure}[b]{0.27\textwidth}
    \centering
        \includegraphics[trim={0.6cm 0.8cm 0.6cm 0.8cm },clip,width=\textwidth]{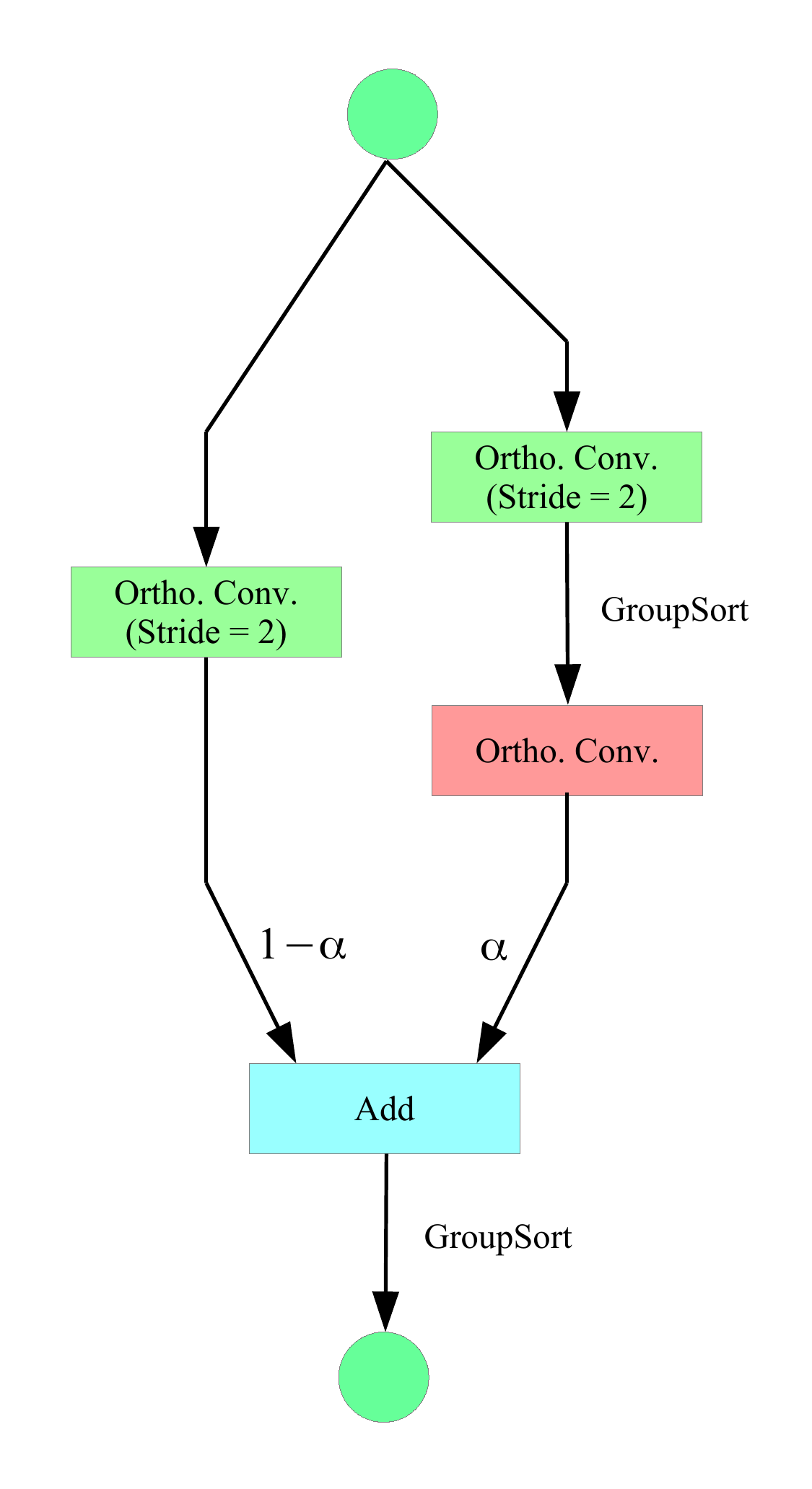}
        \caption{Strided additive block.}
        \label{subfig:resnet-strided}
    \end{subfigure}
    \hfill
    \begin{subfigure}[b]{0.22\textwidth}
    \centering
        \includegraphics[trim={0.6cm 0.8cm 0.6cm 0.8cm },clip,width=\textwidth]{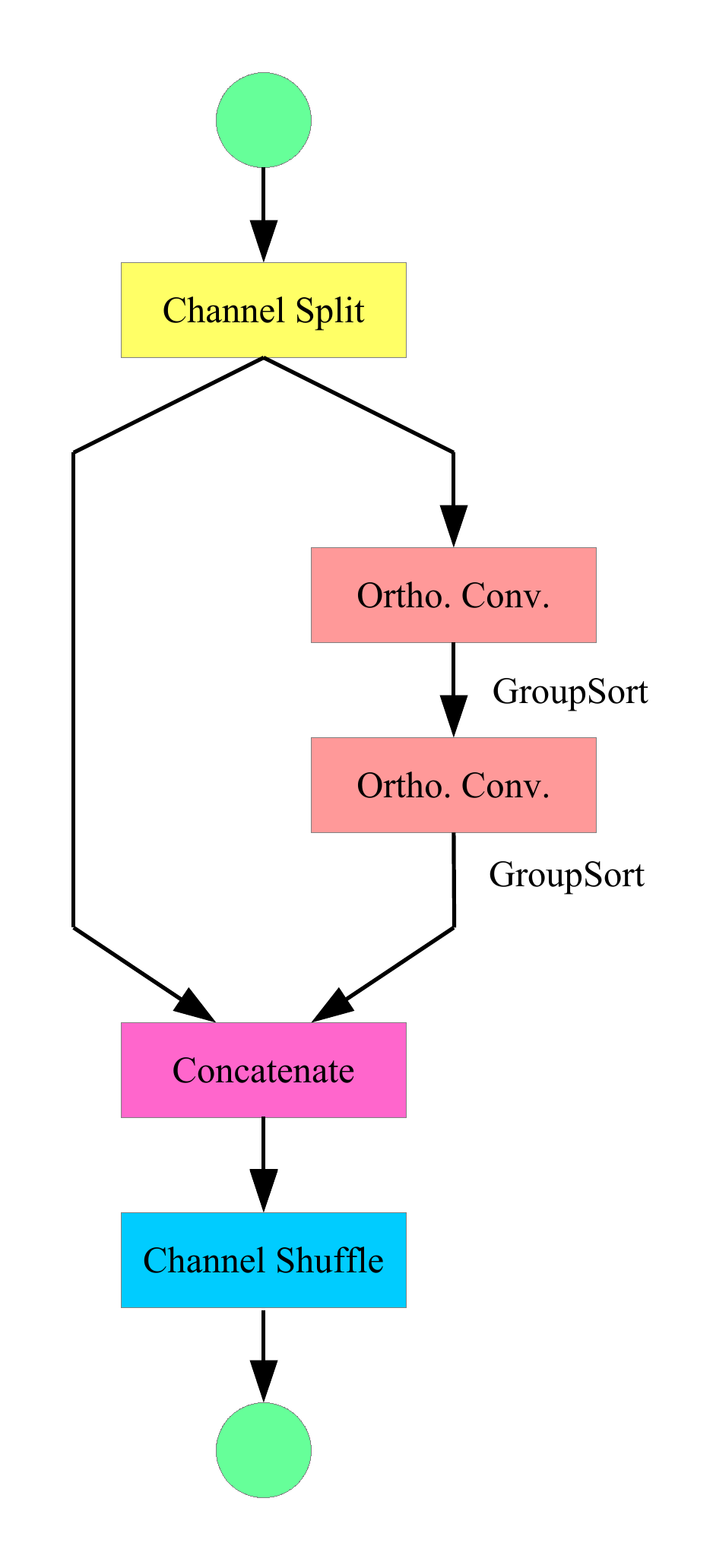}
        \caption{Basic shuffling block.}
        \label{subfig:shufflenet}
    \end{subfigure}
    \hfill
    \begin{subfigure}[b]{0.27\textwidth}
    \centering
        \includegraphics[trim={0.6cm 0.8cm 0.6cm 0.8cm },clip,width=\textwidth]{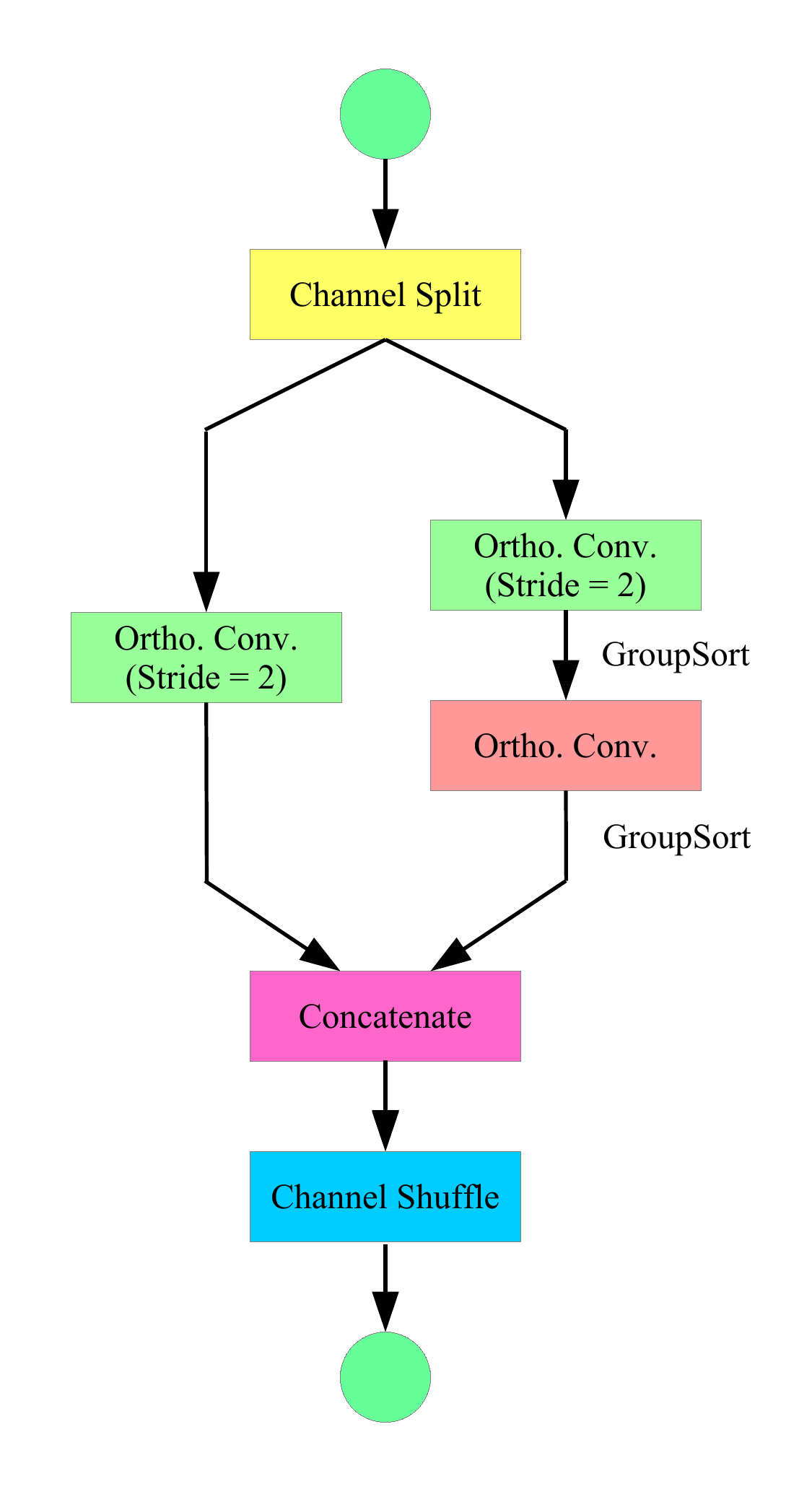}
        \caption{Strided shuffling block.}
        \label{subfig:shufflenet-strided}
    \end{subfigure}
\caption{{\bf Variants of residual blocks.}
In our experiments, we combine {\bf (a) \& (b)} to construct an {\em orthogonal \ResNet},
and {\bf (c) \& (d)} to construct an {\em orthogonal \ShuffleNet}.
In \Cref{prop:lipschitz-resnet}, we prove the Lipschitzness of these building blocks. 
Since composition of Lipschitz functions is still Lipschitz, it implies that a network constructed by these building blocks is also Lipschitz. 
}
\label{fig:residual-blocks}
\end{figure}

\begin{proof}[Proof for \Cref{prop:paraunitary-initialization}]
In order to prove \Cref{eq:paraunitary-initialization}, it suffice to show that
\begin{equation}
\FilterX{V}{z; \matrixSup{U}{-\ell}} \mymatrix{Q} \FilterX{V}{z^{-1}; \matrixSup{U}{\ell}} = \mymatrix{Q},
\end{equation}
and \Cref{eq:paraunitary-initialization} will reduce recursively to the orthogonal matrix $\mymatrix{Q}$. For simplicity, we rewrite $\matrixSup{U}{-\ell}$ as $\mymatrix{L}$ and $\matrixSup{U}{\ell}$ as $\mymatrix{R}$, by which we have $\mymatrix{L} = \mymatrix{Q} \mymatrix{R}$ (or $\mymatrix{R} = \mymatrix{Q}^{\top} \mymatrix{L}$) and we aim to prove $\FilterX{V}{z; \mymatrix{L}} \mymatrix{Q} \FilterX{V}{z^{-1}; \mymatrix{R}} = \mymatrix{Q}$. By the definition of $\FilterX{V}{z; \cdot}$ in \Cref{eq:v-block}, we expand it as
\begingroup
\small
\begin{equation}
\begin{aligned}
& \qquad \FilterX{V}{z; \mymatrix{L}} \mymatrix{Q} \FilterX{V}{z^{-1}; \mymatrix{R}} = \left[ \left( \identity - \mymatrix{L} \mymatrix{L}^{\top} \right) + \mymatrix{L} \mymatrix{L}^{\top} z \right] \mymatrix{Q} \left[\left(\identity - \mymatrix{R} \mymatrix{R}^{\top} \right) + \mymatrix{R} \mymatrix{R}^{\top} z^{-1} \right] = \\
& \underbrace{\mymatrix{L} \mymatrix{L}^{\top} \mymatrix{Q} (\identity - \mymatrix{R} \mymatrix{R}^{\top} )}_{\textstyle \filterX{c}{-1} \mathstrut} z + \underbrace{(\identity - \mymatrix{L} \mymatrix{L}^{\top} ) \mymatrix{Q} (\identity - \mymatrix{R} \mymatrix{R}^{\top} ) + \mymatrix{L} \mymatrix{L}^{\top} \mymatrix{Q} \mymatrix{R} \mymatrix{R}^{\top}}_{\textstyle \filterX{c}{0} \mathstrut} + \underbrace{(\identity - \mymatrix{L} \mymatrix{L}^{\top}) \mymatrix{Q} \mymatrix{R} \mymatrix{R}^{\top}}_{\textstyle \filterX{c}{1} \mathstrut} z^{-1}
\end{aligned}
\end{equation}
\endgroup
Therefore, we will need to show that $\filterX{c}{-1} = 0$, $\filterX{c}{1} = 0$ and $\filterX{c}{0} = \mymatrix{Q}$.

We first show that both $\filterX{c}{-1}$ for $z$ and $\filterX{c}{1}$ for $z^{-1}$ are zero matrices.
\begin{align}
\filterX{c}{-1} & =
\mymatrix{L} \mymatrix{L}^{\top}  \mymatrix{Q} \left(\identity - \mymatrix{R} \mymatrix{R}^{\top} \right) \\
& = \mymatrix{L} \mymatrix{L}^{\top} \mymatrix{Q} - \mymatrix{L} \mymatrix{L}^{\top} \mymatrix{Q} \mymatrix{R} \mymatrix{R}^{\top} \\
& = \mymatrix{L} ( \mymatrix{Q}^{\top} \mymatrix{L})^{\top}  - \mymatrix{L} ( \mymatrix{Q}^{\top} \mymatrix{L})^{\top} \mymatrix{R}  \mymatrix{R}^{\top} \\
& = \mymatrix{L} \mymatrix{R}^{\top} - \mymatrix{L} (\mymatrix{R}^{\top} \mymatrix{R}) \mymatrix{R}^{\top} \\
& = \mymatrix{L} \mymatrix{R}^{\top} - \mymatrix{L} \mymatrix{R}^{\top} = \zero, \\
\filterX{c}{1} & = 
(\identity - \mymatrix{L} \mymatrix{L}^{\top}) \mymatrix{Q} \mymatrix{R} \mymatrix{R}^{\top} \\
& = \mymatrix{Q} \mymatrix{R} \mymatrix{R}^{\top} - \mymatrix{L} \mymatrix{L}^{\top} \mymatrix{Q} \mymatrix{R} \mymatrix{R}^{\top} \\
& = (\mymatrix{Q} \mymatrix{R}) \mymatrix{R}^{\top} - \mymatrix{L} \mymatrix{L}^{\top} (\mymatrix{Q} \mymatrix{R}) \mymatrix{R}^{\top} \\
& = \mymatrix{L} \mymatrix{R}^{\top} - \mymatrix{L} ( \mymatrix{L}^{\top} \mymatrix{L} ) \mymatrix{R}^{\top} \\
& = \mymatrix{L} \mymatrix{R}^{\top} - \mymatrix{L} \mymatrix{R}^{\top} = \zero.
\end{align}
Lastly, we show that the constant coefficient $\filterX{c}{0}$ is equal to $\mymatrix{Q}$.
\begin{align}
\filterX{c}{0}
& = (\identity - \mymatrix{L} \mymatrix{L}^{\top} ) \mymatrix{Q} (\identity - \mymatrix{R} \mymatrix{R}^{\top} ) + \mymatrix{L} \mymatrix{L}^{\top} \mymatrix{Q} \mymatrix{R} \mymatrix{R}^{\top} \\
& = \mymatrix{Q} - \mymatrix{L} \mymatrix{L}^{\top} \mymatrix{Q} - \mymatrix{Q} \mymatrix{R} \mymatrix{R}^{\top} + 2 \mymatrix{L} \mymatrix{L}^{\top} \mymatrix{Q} \mymatrix{R} \mymatrix{R}^{\top} \\
& = \mymatrix{Q} - \mymatrix{L} \mymatrix{R}^{\top} - \mymatrix{L} \mymatrix{R}^{\top} + 2 \mymatrix{L} \mymatrix{R}^{\top} = \mymatrix{Q}
\end{align}
which completes the proof.
\end{proof}

\section{Supplementary Materials for Experiments}
\label{app-sec:experiments}

\subsection{Experimental Setup}
\label{app-sub2:training-details}

\textbf{Network architectures.}
For fair comparisons, we follow the architectures by \citet{trockman2021orthogonalizing} for KW-Large, ResNet9, WideResNet10-10 (i.e., shallow networks). For networks deeper than $10$ layers, we implement their architectures modifying from the Pytorch official implementation of ResNet. We find that it is crucial to replace the global pooling before fully-connected layers with a local average pooling with a window size of $4$. For the average pooling, we multiple the output with the window size to maintain its $1$-Lipschitzness. Other architectures, including ShuffleNet and plain convolutional network (ConvNet), are further modified from the ResNet, where only the skip-connections are changed or removed. We use the widen factor to indicate the number of channels for an architecture: the number of channels at each layer is defined as base channels multiply with the widen factor. The base channels are $16, 32, 64$ for three groups of residual blocks. More details of the ResNet architecture can be found in \url{https://github.com/pytorch/vision/blob/master/torchvision/models/resnet.py}.

\vspace{0.3em}
\textbf{Learning strategies.}
We use the CIFAR-10 dataset for all our experiments. All input images are normalized to $[0, 1]$ followed by standard augmentation, including random cropping and horizontal flipping. We use the Adam optimizer with a maximum learning rate of $10^{-2}$ coupled with a piece-wise triangular learning rate scheduler. We initialize all our \SCFac layers as permutation matrices. 

\subsection{Additional Empirical Results on Hyper-parameters Selection}
\label{app-sub2:initialization}

\textbf{Multi-class hinge loss.} 
Following previous works on Lipschitz networks~\citep{anil2019sorting,li2019preventing,trockman2021orthogonalizing}, we adopt the multi-class hinge loss in training. For each model, we perform a grid search on different margins $\epsilon_0 \in \{1\times10^{-3}, 2\times10^{-3}, 5\times10^{-3}, 1\times10^{-2}, 2\times10^{-2}, 5\times10^{-2}, 0.1, 0.2, 0.5\}$ and report the best performance in terms of robust accuracy. 
Notice that the margin $\epsilon_0$ controls the trade-off between clean and robust accuracy, as shown in \Cref{fig:comp_margin}.

\begin{figure}
    \centering
    \begin{tikzpicture}
        \begin{axis}[
            legend pos=outer north east,
            symbolic x coords={500, 200, 100, 50, 20, 10, 5, 2, 1},
            xtick=data,
            xlabel={Margin $\epsilon_0$ ($\times10^{-3})$},
            ylabel={Accuracy (\%)},
            width=7cm, height=5.5cm,
            legend style={font=\footnotesize},
            tick label style={font=\scriptsize},
            label style={font=\footnotesize},
            ]
            \addplot[
                red,
                mark=x] 
                coordinates{
                (500, 81.92) (200, 87.58) (100, 94.16) 
                (50, 95.19) (20, 96.79) (10, 96.95) 
                (5, 97.13) (2, 96.33) (1, 95.09)};
            \addplot[
                teal,
                mark=x]
                coordinates{
                (500, 78.19) (200, 82.30) (100, 86.34) 
                (50, 87.42) (20, 88.94) (10, 88.74) 
                (5, 88.54) (2, 87.96) (1, 86.40)};
            \addplot[
                blue,
                mark=*] 
                coordinates{
                (500, 69.07) (200, 70.69) (100, 75.41) 
                (50, 75.69) (20, 75.17) (10, 73.17)
                (5, 70.55) (2, 64.85) (1, 56.83)};
            \legend{Train, Test, PGD}
        \end{axis}
    \end{tikzpicture}
    \caption{{\bf Effect of the Lipschitz margin $\epsilon_0$ for WideResNet22-10}. It shows a trade-off between clean and robust accuracy with different margins for multi-class hinge loss. As shown, the training and test accuracy become higher with larger margin, but the robust accuracy decreases after $\epsilon_0=0.1$.}
    \label{fig:comp_margin}
\end{figure}
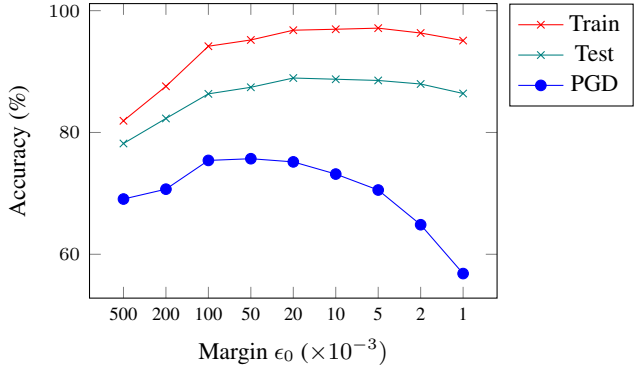

\vspace{0.3em}
\textbf{Network depth and width.}
Exact orthogonality is criticized for harming the expressive power of neural networks, and we find that increasing network depth/width can partially compensate for such loss.
In \Cref{tab:depth-1}, we perform a study on the impact of network depth/width on the predictive performance.
As shown, deeper/wider architectures consistently improve both the clean and robust accuracy for our implementation. However, the best robust accuracy is achieved by a $22$-layer network since we can afford a wide architecture for $34$-layer architecture.

\begin{table}
\centering
\caption{{\bf Comparison of different depth and width} on WideResNet (kernel size $5$). Some numbers are missing due to the large memory requirement (on Tesla V100 32G). The notation width factor indicates (channels = base channels $\times$ factor)}
\begin{tabular}{c|ccccc|ccccc}
    \toprule
    &  \multicolumn{10}{c}{10 layers} \\
    \midrule
    Width &  1 & 3 & 6 & 8 & 10 &  1 & 3 & 6 & 8 & 10 \\
    \midrule
    & \multicolumn{5}{c}{Clean} & \multicolumn{5}{c}{PGD (\multirow{1}{*}{$\frac{36}{255}$})}  \\
    \midrule
    Ours  & 79.96 & 84.17 & {84.96} & 84.61 & 84.09 & 65.92 & 69.70 & 72.18 & 72.51 & {74.29} \\
    Cayley  & 77.88 & 82.14 & 82.56 & \textbf{85.53} & {85.01} & 66.65 & 73.06 & 74.33 & 75.66 & {76.13} \\
    RKO & 81.37 & 83.55 & 84.67 & 85.18 & 84.62 & 70.55 & 74.44 & 76.41 & 76.65 & \textbf{77.02} \\
    \midrule
    \midrule
    &  \multicolumn{10}{c}{22 layers} \\
    \midrule
    Width &  1 & 3 & 6 & 8 & 10 & 1 & 3 & 6 & 8 & 10 \\
    \midrule
      & \multicolumn{5}{c}{Clean} & \multicolumn{5}{c}{PGD (\multirow{1}{*}{$\frac{36}{255}$})}  \\
    \midrule
    Ours & 79.90 & 82.22 & 87.21 & \textbf{88.10} & 87.82 & 67.95 & 70.88 & 74.30 & 75.12 & \textbf{76.46} \\
    Cayley & 79.11 & 84.82 & 85.85 & - & - & 69.79 & 65.61 & 74.81 & - & -\\
    RKO & 82.71 & 84.19 & 84.33 & 84.55 & - & 72.40 & 74.36 & 75.66 & 76.41 & - \\
    \midrule
    \midrule
    &  \multicolumn{10}{c}{34 layers} \\
    \midrule
    Width &  1 & 3 & 6 & 8 & 10 &  1 & 3 & 6 & 8 & 10 \\
    \midrule
      & \multicolumn{5}{c}{Clean} & \multicolumn{5}{c}{PGD (\multirow{1}{*}{$\frac{36}{255}$})}  \\
    \midrule
    Ours & 81.24 & 88.17 & \textbf{88.92} & - & - & 69.21 & 71.85 & \textbf{75.09} & - & - \\
    Cayley & 82.46 & 84.29 & - & - & - & 71.27 & 74.73 & - & - & - \\
    RKO & 81.51 & 83.24 & 83.92 & - & - & 71.38 & 73.84 & 75.03 & - & - \\
    \bottomrule
\end{tabular}
\label{tab:depth-1}
\end{table}

\vspace{0.3em}
\textbf{Initialization methods.}
In \Cref{prop:paraunitary-initialization}, we have shown how to initialize our orthogonal convolutional layers as orthogonal matrices.
In \Cref{tab:result-init}, we perform a study on different initialization methods, including identical, permutation, uniform, and torus~\citep{henaff2016recurrent,helfrich2018orthogonal}. We find that permutation initialization works the best for WideResNet22-10, while all methods perform similarly in shallower WideResNet10-10. Therefore, we use permutation initialization for all other experiments.

\begin{table}
\centering
\caption{{\bf Comparisons of various initialization methods} on WideResNet (kernel size $5$).}
\begin{tabular}{l|cc|cc}
    \toprule
    \multirow{1}{*}{Initialization} & \multicolumn{2}{c|}{WideResNet10-10} & \multicolumn{2}{c}{WideResNet22-10} \\
    \cmidrule{2-5}
     & Clean & PGD & Clean & PGD \\
    \midrule
    uniform & \textbf{83.58} & 73.20 & 87.55 & 75.71 \\
    torus & 82.40 & 72.50 & \textbf{88.12} & 75.43 \\
    permutation & 83.18 & 73.16 & {87.82} & \textbf{76.46} \\
    identical & 83.29 & \textbf{73.49} & {87.82} & {75.49} \\
    \bottomrule
\end{tabular}
\label{tab:result-init}
\end{table}

\section{Boarder Impact}
\label{app-sec:impact-statement}

Our work lies in the foundational research on neural information processing. Specifically, we establish the equivalence between orthogonal convolutions in neural networks and paraunitary systems in signal processing. Our presented orthogonal convolutional layers are plug-and-play modules that can replace various convolutional layers in neural networks. Consequently, our modules are applicable in Lipschitz networks for adversarial robustness, recurrent networks for learning long-term dependency, or flow-based networks for effortless reversibility.

The vulnerability of neural networks raises concerns about their deployment in security-sensitive scenarios, such as healthcare systems or self-driving cars. In our experiment, we demonstrate a successful application of orthogonal convolutions in learning robust networks. These networks achieve high robust accuracy without additional techniques such as adversarial training or randomized smoothing. Therefore, our research contributes to the robustness learning of neural networks and potentially leads to their broader deployment. 

As an expense, our layers are memory and computationally more expensive than traditional layers. The overhead to the already expensive cost exacerbates the concerns on the efficacy of neural networks. Therefore, balancing between robustness and efficiency is an important research topic that requires more research in the future. In this work, We develop more efficient implementation than previous approaches, narrowing the gap between these two conflicting goals.

\end{document}